\documentclass[10pt]{article}
\usepackage[preprint]{tmlr}
\setcitestyle{numbers,square,sort&compress}

\usepackage{amsmath,amsfonts,bm}

\def\eqref#1{equation~\ref{#1}}

\def\1{\bm{1}}

\DeclareMathAlphabet{\mathsfit}{\encodingdefault}{\sfdefault}{m}{sl}
\SetMathAlphabet{\mathsfit}{bold}{\encodingdefault}{\sfdefault}{bx}{n}

\newcommand{\Cov}{\mathrm{Cov}}

\usepackage{hyperref}
\usepackage{url}
\usepackage{microtype}
\usepackage{graphicx}
\usepackage{booktabs}
\usepackage{tabularx}
\usepackage{longtable}
\usepackage{array}
\usepackage{makecell}
\usepackage{enumitem}

\usepackage{amsmath,amssymb,amsthm}
\usepackage{mathtools}
\usepackage{mathrsfs}
\usepackage{bbm}
\usepackage{bm}

\usepackage{algorithm}
\usepackage{algcompatible}

\usepackage{subfig}
\usepackage{float}
\usepackage{csvsimple}
\usepackage{colortbl}
\usepackage[dvipsnames]{xcolor}
\usepackage{pifont}

\usepackage[english]{babel}
\usepackage[autostyle, english=american]{csquotes}

\hypersetup{hidelinks}

\DeclareMathOperator{\rand}{rand}

\usepackage[capitalize,noabbrev]{cleveref}

\theoremstyle{plain}
\newtheorem{theorem}{Theorem}[section]
\newtheorem{proposition}[theorem]{Proposition}
\newtheorem{lemma}[theorem]{Lemma}

\theoremstyle{definition}

\newtheorem{exam}[theorem]{Example}

\theoremstyle{remark}

\title{Separation-Utility Pareto Frontier:\\ An Information-Theoretic Characterization}

\author{Shizhou Xu\\
Department of Mathematics\\
University of California Davis\\
Davis, CA 95616, USA \\}

\def\month{MM}  
\def\year{YYYY} 
\def\openreview{\url{https://openreview.net/forum?id=XXXX}} 

\begin{document}

\maketitle

\begin{abstract}
We study the Pareto frontier between predictive utility and separation, a fairness criterion requiring predictive independence from sensitive attributes conditional on the true outcome. Through an information-theoretic lens, we characterize the achievable separation–utility region, prove that the revealed randomized frontier is the concave closure of the deterministic frontier, and clarify how the marginal cost of separation varies along the frontier. We further identify sufficient conditions under which the trade-off is strict, providing theoretical guidance for interpreting empirical frontiers and choosing operating points. Motivated by this characterization, we develop a direct empirical regularizer based on conditional mutual information (CMI) for discrete target and sensitive variables. By estimating CMI directly from sample statistics, the resulting plug-in regularizer avoids reliance on adversarial or variational proxy losses, is compatible with deep models trained by gradient-based optimization, and provides a scalar monitor of residual separation violation with finite-sample guarantees. Experiments on COMPAS, UCI Adult, UCI Bank, CelebA, and ACS show that the proposed method traces stable separation-utility frontiers, substantially reduces separation violations, and achieves competitive trade-offs relative to established baselines, often improving the low-violation region. This study offers a principled, stable, and flexible framework for navigating separation–utility trade-offs in deep learning.
\end{abstract}

\section{Introduction}\label{sec:introduction}

Automated decision systems are increasingly deployed in high-stakes domains such as finance, criminal justice, hiring, and healthcare, making verifiable fairness constraints an essential component of reliable machine learning \cite{barocas2016big}. A central challenge is that fairness criteria often conflict with model utility \cite{chouldechova2017fair,kleinberg2016inherent}, creating a critical need for a framework that provides practitioners with a provably optimal and practically implementable trade-off.

In this work, we focus on \textbf{separation} (or equalized odds in binary classification), which requires conditional independence between the model output $\hat{Y}$ and a sensitive attribute $Z$ given the true label $Y$ \cite{hardt2016equality}:
\[
\hat{Y}\ \perp\ Z\ \mid\ Y.
\]
Any dependence of $\hat{Y}$ on $Z$ must be justified by the overlapping information between $Z$ and the target $Y$. Separation is particularly relevant when base rates differ across groups, as it rules out unjustified group-dependent errors while remaining compatible with perfect prediction.\footnote{We adopt separation as the fairness definition throughout. Comparing separation to alternative notions (e.g., independence, calibration) is beyond the scope of this paper.}

\textbf{Information-plane view and the Pareto frontier.}
We take an information-theoretic perspective on the separation-utility trade-off by quantifying (i) separation violation $v$ by the conditional mutual information (CMI), and (ii) predictive utility $u$ by mutual information (MI):
\[
v(\hat{Y}) \ :=\ I(\hat{Y};Z\mid Y), \qquad u(\hat{Y}) \ :=\ I(\hat{Y};Y).
\]
Here, $v$ characterizes separation at $v(\hat{Y}) = 0$, whereas $u$ is equivalent to minimizing the Bayes-optimal conditional log-loss, yields Fano-type necessary conditions for small classification error in finite-label settings, and connects to general distortion losses through rate-distortion theory \cite{CoverThomas2006}.

\noindent\textbf{Novelty relative to prior CMI-based fairness work.}
While the equivalence $I(\hat{Y};Z\mid Y)=0 \iff \hat{Y}\perp Z\mid Y$ follows directly from the definition of conditional mutual information, our contributions are (i) a \emph{population-level characterization} of the achievable separation-utility region and the \emph{optimal randomized Pareto frontier}, showing that revealed randomization is necessary in general and that mixing between at most two deterministic predictors suffices to attain the randomized frontier, (ii) conditions under which the trade-off becomes \emph{strict}, and (iii) an empirical CMI regularizer with finite-sample monitoring guarantees that reliably traces the frontier in practice.

This formulation yields a model-agnostic feasibility region on the information plane:
$$\mathcal{R}\ :=\ \bigl\{(v(\hat{Y}),u(\hat{Y})):\ \hat{Y}\ \text{feasible from a predictor}\bigr\}.$$
Crucially, this \emph{frontier characterization} is distributional and holds for arbitrary laws of $(X,Y,Z)$ on Polish spaces, and for predictors \(U\) satisfying the measurability and finiteness conditions stated in Section~\ref{sec:theory-regularization}. In particular, it applies to continuous, discrete, or mixed variables under these conditions. Our first goal is to characterize the shape of the Pareto frontier of $\mathcal{R}$, thereby establishing the ``price of separation'' in terms of utility on this information plane.

\textbf{From theory to practice: An empirical regularizer.}
While our theoretical frontier characterization is general, practical optimization requires estimating MI and CMI. In continuous settings, this often necessitates complex auxiliary density models or variational bounds due to the inherent statistical challenges of density estimation in high-dimensional spaces. However, we observe that for the widespread case of discrete fairness tasks, ranging from standard classification to high-cardinality output spaces, such complexity is unnecessary. To bridge the gap between theory and practice, we propose a simple in-processing regularizer that directly penalizes the empirical plug-in estimator of CMI. Unlike learning-based proxies, this sample-statistics approach is stable, statistically interpretable, traces the theoretical Pareto frontier more robustly, and seamlessly scales to multi-class and multi-group settings without the optimization overhead of more sophisticated baselines, as we demonstrate empirically in Section \ref{sec:experiments}.

\subsection{Related Works}\label{subsec:separation}
Existing approaches for enforcing separation generally fall into three paradigms, each with distinct limitations regarding stability and scalability.

\textbf{Rate-constraints and reductions.} 
Reductions convert fairness constraints into cost-sensitive classification problems \cite{agarwal2018reductions,cotter2019wellgen}, while convex frameworks employ covariance-based surrogates \cite{celis2019classification,zafar2017}. Although offering provable guarantees, they typically match only lower-order moments (mostly due to their focus on binary classification for which lower-order moments are sufficient) rather than the full distribution, and constraint complexity scales linearly with the product of target and sensitive feature cardinality, limiting scalability.

\textbf{Adversarial and information-theoretic proxies.} 
Adversarial debiasing uses min-max games to infer $Z$ from predictions \citep{zhang2018mitigating,madras2018learning}. Similarly, information-theoretic approaches penalize Conditional Mutual Information (CMI) via learning-based approximations, such as auxiliary density models \citep{Steinberg2020} or variational bounds \citep{roh2020frtrain,kang2022infofair}. While popular, these methods suffer from lack of theoretical guarantee, optimization instability (e.g., vanishing gradients, non-convexity), and reliance on the expressivity of the adversarial or auxiliary model, often introducing estimation errors \citep{song2020learningcontrollablefairrepresentations}.

\textbf{Reweighting and robustness.} 
This line of works enforce fairness through external interventions on the data. \emph{Distributionally Robust Optimization (DRO)}~\cite{sagawa2020distributionally} targets the \emph{worst-performing} subgroup, dynamically prioritizing high-loss regions. In contrast, \emph{resampling} techniques~\cite{kamiran2012data} address group \emph{imbalance} via static re-weighting, while \emph{data transformation} methods~\cite{romano2021fair} introduce randomized variables or modify features to break statistical dependencies. Although model-agnostic, these methods can be unstable or degrade utility due to their aggressive manipulation of the training distribution.

\subsection{Contributions}\label{subsec:contributions}
Despite recent progress, three critical gaps remain. First, while fairness-utility trade-offs are well-documented in binary classification settings \citep{pleiss2017fairness,chouldechova2017fair,corbett-davies2017algorithmic,menon2018cost}, the field lacks a general model-agnostic characterization of Pareto frontier beyond binary classification setting, particularly on its existence, shape, and finite-sample feasibility. Second, applicability is limited: most methods are tailored to binary tasks \citep{hardt2016equality,agarwal2018reductions}, becoming intractable in modern high-cardinality settings (e.g., LLMs) where constraint complexity explodes \citep{gallegos2024bias,chu2024taxonomic,blodgett2020language}. Finally, training lacks transparent guarantees: standard proxies (adversaries, variational bounds) neither directly certify separation \citep{zhang2018mitigating,madras2018learning,roh2020frtrain} nor ensure optimization stability \citep{agarwal2018reductions,namkoong2017variance,sagawa2020distributionally}.

We address these challenges through practice-inspired theory, theory-guided method, and empirical verification:

\begin{itemize}[leftmargin=*]
    \item \textbf{Model-agnostic theoretical characterization of the Pareto frontier.}
    We provide a general, model-agnostic theoretical foundation by characterizing the separation-utility region on an information-theoretic plane. We prove that the optimal randomized frontier equals the concave closure of the deterministic frontier (Theorem~\ref{thm:timesharing-frontier-polish}). This result holds for continuous, discrete, or mixed variables under the measurability and finiteness assumptions stated in Section~\ref{sec:theory-regularization}, clarifying which trade-offs are information-theoretically feasible independent of any optimization surrogate. This provides a theoretical common ground for existing and future work that targets separation via information-theoretic regularization, validating the objective regardless of how it is estimated.

    \item \textbf{CMI as a principled separation quantification.}
    We show that $I(\hat{Y};Z\mid Y)$ provides an exact characterization of separation and a uniform upper bound controlling the conditional statistical dependence of \emph{all} bounded functionals of the prediction and sensitive attribute (Theorem~\ref{th:CMI_bound}). This positions CMI as a principled scalar quantification for separation violation.

    \item \textbf{Direct empirical CMI regularization.}
    Bridging the gap between optimization and theoretical guarantees, we demonstrate that complex learning-based proxies are often unnecessary when the target and sensitive variables are discrete, even in high-cardinality settings. For discrete separation tasks, the target information-theoretic quantity admits a direct plug-in estimator derived from empirical frequencies. This enables transparent statistical guarantees under standard i.i.d. assumptions (Proposition \ref{prop:estimation_error}), while avoiding the optimization instability of adversarial min-max optimization and the auxiliary estimation errors introduced by variational bounds.

    \item \textbf{Empirical verification.}
    We validate the proposed empirical CMI regularizer on standard fairness benchmarks and a multi-class/multi-group task derived from the ACS dataset. Across datasets, it traces stable empirical separation-utility frontiers and often improves over reduction-based, adversarial, robustness-based, and learning-based proxy methods in both the information plane and deployment metrics such as AUROC and Accuracy. The experiments further connect to the theoretical results: on Adult, utility beyond the empirical $X$-only level is attained only at strictly positive separation violation, illustrating the necessary-tradeoff result in Theorem~\ref{thm:necessary-tradeoff}. Furthermore, the ACS experiment confirms that the same CMI formulation extends naturally beyond binary targets and binary sensitive attributes to discrete variables with large cardinalities.
    \footnote{Code available at \url{https://github.com/xushizhou/Separation-Utility-Pareto-Frontier}.}
\end{itemize}

\section{Methods: Regularization with Theoretical Guarantees}
\label{sec:theory-regularization}

We present a principled regularization framework for enforcing \emph{separation} during training. To maintain consistency and clarity, we first introduce the setting and notation, then develop the theoretical results in the following order:
\textbf{Section \ref{sub:utility-sep-tradeoff}}: The characterization of the separation-utility Pareto frontier;
\textbf{Section \ref{sub:CMI}}: The justification for Conditional Mutual Information (CMI) as a rigorous quantification of separation violation;
\textbf{Section \ref{sub:when-tradeoff}}: The conditions under which the trade-off becomes strict.

\textbf{Setting and notation.}
Let $(\mathcal X,\mathcal{B}_{\mathcal X})$, $(\mathcal Y,\mathcal{B}_{\mathcal Y})$, and $(\mathcal Z,\mathcal{B}_{\mathcal Z})$ be Polish spaces equipped with Borel $\sigma$-algebras. The random variables $(X,Y,Z)$ are jointly distributed according to $P_{XYZ}$. Since $(\mathcal X,\mathcal B_{\mathcal X})$, $(\mathcal Y,\mathcal B_{\mathcal Y})$, and $(\mathcal Z,\mathcal B_{\mathcal Z})$ are Polish spaces equipped with their Borel $\sigma$-algebras, the relevant regular conditional laws exist. Here, $X$, $Y$, and $Z$ denote the input, target, and sensitive variables respectively.

A deterministic predictor is a measurable map $f:\mathcal X\times\mathcal Z\to\mathcal{Y}$, producing the output $\hat{Y}=f(X,Z)$.
A randomized predictor is defined by an external independent noise variable $N$ (with $N \!\perp (X,Y,Z)$) and a measurable map $f$, producing the augmented variable $\widetilde{U} = (N, \hat{Y})$ where $\hat{Y} = f(X,Z,N)$.

To simplify notation, we use a generic prediction $U$ to represent either $\hat{Y}$ or $\widetilde{U}$, with the generic space $\mathcal{U}$ representing either the outcome space $\mathcal{Y}$ or the product space $\mathcal{N} \times \mathcal{Y}$. We define \emph{predictive utility} ($u$) and \emph{separation violation} ($v$) for the generic predictor variable $U$ as:
\[
u(U) := I(U;Y), \qquad v(U) := I(U;Z\mid Y).
\]
To keep the analysis clean and straightforward, we restrict attention to admissible predictors $U$ for which
\[
I(U;Y)<\infty,\qquad I(U;Z\mid Y)<\infty.
\]
All mutual information and conditional mutual information quantities are interpreted in the standard measure-theoretic sense via relative entropy. Whenever entropy quantities such as $H(Y)$ or $H(Z\mid Y)$ are used later, they are understood only in the corresponding discrete setting.

\subsection{The Separation-Utility Pareto Frontier}
\label{sub:utility-sep-tradeoff}

We begin by characterizing the set of feasible (violation, utility) pairs. This characterizes the theoretical limit of what is achievable. Define the deterministic attainable set $\mathcal{S}_{\det}$ and the randomized attainable set $\mathcal{S}_{\rand}$ as follows:
\begin{itemize}[leftmargin = *]
    \item Let $u_f:=I(f(X,Z);Y)$ and $v_f:=I(f(X,Z);Z\mid Y)$. The deterministic set is:
    \[ 
        \mathcal{S}_{\det} \;:=\; \bigl\{(v_f, u_f) : f \text{ is measurable and } I(f(X,Z);Y),\,I(f(X,Z);Z\mid Y)<\infty\bigr\} \;\subset\; \mathbb{R}_{\ge 0}^2.
    \]
    \item Let $\widetilde U := (N, f_N(X,Z))$ be a revealed randomized predictor where $N \!\perp\! (X,Y,Z)$. The randomized set is:
    \[ 
        \mathcal{S}_{\rand} \;:=\; \bigl\{(I(\widetilde U;Z \mid Y), \, I(\widetilde U;Y)) : \widetilde U \text{ is randomized and } I(\widetilde U;Y),\,I(\widetilde U;Z\mid Y)<\infty\bigr\}.
    \]
\end{itemize}
We define the corresponding Pareto frontiers $U^\star_{\det}(v)$ and $U^\star_{\rand}(v)$ as the (closure of the) maximum utility attainable for a given violation constraint $v$:
\begin{align*}
    U^\star_{\det}(v)
    &:=
    \sup_{\substack{(v',u')\in \overline{\mathcal{S}_{\det}}\\ v'\le v}}
    u',
    \qquad
    U^\star_{\rand}(v)
    :=
    \sup_{\substack{(v',u')\in \overline{\mathcal{S}_{\rand}}\\ v'\le v}}
    u'.
\end{align*}
Similarly, for any attainable set 
\(\mathcal S\subseteq \mathbb R_{\ge 0}^2\), we define the upper frontier as
\[
    U_{\overline{\mathcal S}}(v)
    :=
    \sup\Bigl\{
        u' : (v',u')\in \overline{\mathcal S},\ v'\le v
    \Bigr\}.
\]
Then
\[
    U^\star_{\det}(v)=U_{\overline{\mathcal S_{\det}}}(v),
    \qquad
    U^\star_{\rand}(v)=U_{\overline{\mathcal S_{\rand}}}(v).
\]
Before analyzing the effect of randomization and the shape of the randomized Pareto frontier, we first note a basic order property of the deterministic frontier.
\begin{proposition}[Deterministic frontier is non-decreasing]\label{prop:det-monotone}
For any \(0\le v_1\le v_2\), we have
\[
U^\star_{\det}(v_1)\le U^\star_{\det}(v_2).
\]
\end{proposition}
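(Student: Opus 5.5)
This follows directly from the definition of $U^\star_{\det}$ as a supremum over a feasible set that only grows as $v$ increases. Fix $0\le v_1\le v_2$ and define
\[
A(v):=\bigl\{(v',u')\in\overline{\mathcal S_{\det}} : v'\le v\bigr\}.
\]
The plan is to show $A(v_1)\subseteq A(v_2)$ and then apply monotonicity of the supremum under set inclusion.

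The inclusion is immediate. If $(v',u')\in A(v_1)$, then $(v',u')\in\overline{\mathcal S_{\det}}$ and $v'\le v_1\le v_2$, so $(v',u')\in A(v_2)$. Note that no property of the closure is used, since the closure is taken once and is the same set for both values of $v$. Hence every $u'$ admissible in the supremum defining $U^\star_{\det}(v_1)$ is also admissible in the one defining $U^\star_{\det}(v_2)$, which gives $U^\star_{\det}(v_1)\le U^\star_{\det}(v_2)$ in the extended reals.

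The only point needing care is the convention for the supremum over an empty set and for infinite values. I will note that $A(v)$ is never empty for $v\ge 0$: a constant predictor $f\equiv y_0$ yields $I(f(X,Z);Y)=0$ and $I(f(X,Z);Z\mid Y)=0$, so $(0,0)\in\mathcal S_{\det}\subseteq A(v)$. Therefore $U^\star_{\det}(v)\ge 0$ is well defined for every $v\ge0$, possibly $+\infty$, and the comparison of suprema over nested nonempty sets is valid without an empty-set convention. I do not expect any genuine obstacle; this nonemptiness remark is the only step beyond unwinding definitions.
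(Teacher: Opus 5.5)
Your proposal is correct and is the same argument as the paper's proof: the sets $\{(v',u')\in\overline{\mathcal S_{\det}} : v'\le v\}$ are nested in $v$, so their suprema are ordered. Your nonemptiness remark, using a constant predictor to show $(0,0)\in\mathcal S_{\det}$, is valid but not actually needed, since the inclusion argument already works under the usual convention $\sup\emptyset=-\infty$.
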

See proof in Appendix \ref{append:det-monotone}. This implies that decreasing separation violation can potentially harm utility, but when only deterministic predictors are allowed, the optimal trade-off need not be concave and may fail to be smooth.

Now, we prove that the randomized frontier is concave:
\begin{theorem}[Randomized frontier equals the concave closure]
\label{thm:timesharing-frontier-polish}
The set $\mathcal{S}_{\rand}$ contains the convex hull of $\mathcal{S}_{\det}$ and is contained in its closure:
\begin{equation}\label{eq:Sstar-equals-closure-conv-polish}
\mathrm{conv}\bigl(\mathcal{S}_{\det}\bigr)\;\subset\;\mathcal{S}_{\rand}\;\subset\;\overline{\mathrm{conv}}\bigl(\mathcal{S}_{\det}\bigr).
\end{equation}
Consequently, the randomized upper frontier is the concave closure of the deterministic frontier:
\begin{equation}\label{eq:frontier-equals-concave-envelope-polish}
U^\star_{\rand}(v) = U_{\overline{\operatorname{conv}}(\mathcal{S}_{\det})}(v), \qquad \forall v\ge 0.
\end{equation}
\end{theorem}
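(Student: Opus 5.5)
The plan rests on the chain rule for mutual information with a revealed, independent noise variable. Let $\widetilde U=(N,\hat Y)$ with $\hat Y=f(X,Z,N)$ and $N\perp(X,Y,Z)$. Then $I(N;Y)=0$ and $I(N;Z\mid Y)=0$, so
\[
I(\widetilde U;Y)=\int I\bigl(f_n(X,Z);Y\bigr)\,P_N(dn),\qquad I(\widetilde U;Z\mid Y)=\int I\bigl(f_n(X,Z);Z\mid Y\bigr)\,P_N(dn),
\]
where $f_n:=f(\cdot,\cdot,n)$. The conditional quantities given $N=n$ coincide with the unconditional quantities of the deterministic predictor $f_n$, because $(X,Y,Z)\mid N=n$ has law $P_{XYZ}$. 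Both facts rest on the measure-theoretic chain rule $I(N,\hat Y;Y)=I(N;Y)+I(\hat Y;Y\mid N)$ and its conditional analogue. These hold for general Polish spaces via disintegration, using the existence of regular conditional laws noted in the setting. I would first record, as a lemma, that $n\mapsto (v_{f_n},u_{f_n})$ is measurable, so that the integral makes sense. This follows from joint measurability of $f$ and measurability of relative entropy of kernels.

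\textbf{First inclusion.} Take a finite convex combination $\sum_{i=1}^k\lambda_i(v_{f_i},u_{f_i})$ with $(v_{f_i},u_{f_i})\in\mathcal S_{\det}$. Let $N$ be independent of $(X,Y,Z)$ and take values in $\{1,\dots,k\}$ with $P(N=i)=\lambda_i$, and set $f(x,z,i)=f_i(x,z)$. The displayed identity then gives exactly $\sum_i\lambda_i(v_{f_i},u_{f_i})$, and finiteness is automatic. Hence $\mathrm{conv}(\mathcal S_{\det})\subset\mathcal S_{\rand}$. By Carathéodory in $\mathbb R^2$ three points suffice; the two-point statement mentioned in the introduction concerns only the frontier and is not needed here.

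\textbf{Second inclusion.} The identity shows that every point of $\mathcal S_{\rand}$ is the barycenter $\mathbb E[\phi(N)]$ of a probability measure supported on $\mathcal S_{\det}$, where $\phi(n)=(v_{f_n},u_{f_n})$. One subtlety is that some $f_n$ may have infinite information. However, the integrals are finite, so $\phi(N)\in\mathcal S_{\det}$ almost surely, and $\phi$ is integrable because it is nonnegative with finite mean. The remaining step is the standard fact that the mean of an integrable random vector supported in a set $A\subset\mathbb R^2$ lies in $\overline{\mathrm{conv}}(A)$. I would prove it by Hahn–Banach separation: if the mean $m$ were outside the closed convex set $C=\overline{\mathrm{conv}}(A)$, there would be a linear functional $\ell$ with $\ell(m)>\sup_C\ell\ge\ell(\phi(N))$ almost surely. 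Taking expectations gives $\ell(m)=\mathbb E\,\ell(\phi(N))\le\sup_C\ell$, a contradiction. This step, together with the measurability of $\phi$, is where the real care is needed. I expect the main obstacle to be justifying the chain rule and the disintegration rigorously for general Polish-valued $N$ with possibly non-discrete $\hat Y$. I would handle it via the Dobrushin/Pinsker definition of mutual information and the additivity of divergence under disintegration.

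\textbf{Frontier identity.} From the two inclusions and closedness, $\overline{\mathcal S_{\rand}}=\overline{\mathrm{conv}}(\mathcal S_{\det})$. Since $U_{\overline{\mathcal S}}$ depends only on the closure of $\mathcal S$, we obtain $U^\star_{\rand}=U_{\overline{\mathrm{conv}}(\mathcal S_{\det})}$. If the statement ``concave closure of the deterministic frontier'' is also to be made explicit, I would add one further observation. The hypograph-type set $\{(v,u):u\le U(v)\}$ built from $\overline{\mathrm{conv}}(\mathcal S_{\det})$ is convex. Moreover, the supremum over $v'\le v$ of a convex set yields the least concave non-decreasing majorant of $U^\star_{\det}$, where monotonicity comes from Proposition~\ref{prop:det-monotone}.
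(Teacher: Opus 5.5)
Your proposal is correct and follows essentially the same route as the paper. The chain rule with independent revealed noise writes every point of $\mathcal S_{\rand}$ as a barycenter of deterministic points, explicit time-sharing gives $\mathrm{conv}(\mathcal S_{\det})\subset\mathcal S_{\rand}$, a separating-hyperplane argument gives $\mathcal S_{\rand}\subset\overline{\mathrm{conv}}(\mathcal S_{\det})$, and the frontier identity follows by taking closures. The differences are cosmetic: the paper first reduces to a uniform selector $B\sim\mathrm{Unif}[0,1]$ via the pushforward representation of standard Borel spaces and time-shares by partitioning $[0,1]$ into intervals, whereas you integrate directly against $P_N$ and use a finite-valued $N$. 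You also make explicit the almost-sure finiteness of $\phi(N)$, which the paper uses implicitly.
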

See a proof in Appendix \ref{append:timesharing-frontier-polish}.
The result shows two key messages: (1) The randomized Pareto frontier is concave, meaning the \textit{marginal cost} of reducing separation, when the frontier is differentiable, is non-decreasing in terms of utility loss. (2) Any point on the randomized upper frontier can be achieved exactly when it lies on a line segment of $\operatorname{conv}(\mathcal S_{\det})$, and in general can be approximated arbitrarily well by randomizing between at most two deterministic predictors (e.g., via Bernoulli randomization).

\subsection{Information-Theoretic Quantification of Separation Violation}
\label{sub:CMI}

Having characterized the frontier on the information plane, we now justify why conditional mutual information (CMI) is the correct quantification for separation violation ($v$).

To start, we formalize the well-known result that zero CMI characterizes the separation definition:
\begin{proposition}[CMI characterizes separation]\label{prop:CMI_characterization}
\[
I(U;Z\mid Y)=0 \quad\Longleftrightarrow\quad U\perp Z\mid Y.
\]
\end{proposition}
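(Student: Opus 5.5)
The plan is to use the measure-theoretic definition of conditional mutual information as an averaged relative entropy, and then the fact that relative entropy vanishes exactly when its two arguments coincide. Because $\mathcal U$, $\mathcal Y$, $\mathcal Z$ are Polish (or products of Polish spaces), regular conditional laws $P_{UZ\mid Y=y}$, $P_{U\mid Y=y}$ and $P_{Z\mid Y=y}$ exist. We then write
\[
I(U;Z\mid Y)=\int_{\mathcal Y} D\bigl(P_{UZ\mid Y=y}\,\big\|\,P_{U\mid Y=y}\otimes P_{Z\mid Y=y}\bigr)\,P_Y(dy).
\]
Equivalently, $I(U;Z\mid Y)=D(P_{UZY}\,\|\,P_{U\mid Y}\otimes P_{Z\mid Y}\otimes P_Y)$, where the second argument is the measure $\int P_{U\mid Y=y}\otimes P_{Z\mid Y=y}\otimes\delta_y\,P_Y(dy)$. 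Working with this single divergence avoids most measurability issues. The admissibility assumption $I(U;Z\mid Y)<\infty$ is not even needed for this proposition.

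\textbf{($\Leftarrow$)} Suppose $U\perp Z\mid Y$. By definition, $P_{UZ\mid Y=y}=P_{U\mid Y=y}\otimes P_{Z\mid Y=y}$ for $P_Y$-a.e.\ $y$. Hence $P_{UZY}=\int P_{U\mid Y=y}\otimes P_{Z\mid Y=y}\otimes\delta_y\,P_Y(dy)$. The divergence of a measure from itself is $0$, so $I(U;Z\mid Y)=0$.

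\textbf{($\Rightarrow$)} Suppose $I(U;Z\mid Y)=0$. Gibbs' inequality says $D(P\|Q)\ge 0$, with equality iff $P=Q$; this follows from Jensen applied to $-\log$ on $dP/dQ$, or from Pinsker's inequality. Applying it gives the equality of joint laws
\[
P_{UZY}=\int P_{U\mid Y=y}\otimes P_{Z\mid Y=y}\otimes\delta_y\,P_Y(dy).
\]
Testing this against the rectangles $A\times B\times C$ yields
\[
\mathbb P(U\in A,Z\in B,Y\in C)=\int_C P_{U\mid Y=y}(A)\,P_{Z\mid Y=y}(B)\,P_Y(dy).
\]
This holds for all $C$, so $\mathbb P(U\in A,Z\in B\mid Y)=\mathbb P(U\in A\mid Y)\,\mathbb P(Z\in B\mid Y)$ a.s. That identity is exactly the definition of $U\perp Z\mid Y$.

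The main obstacle is mostly bookkeeping. The a.s.\ equality above holds for each fixed pair $(A,B)$ with its own null set. If one wants the pointwise kernel statement $P_{UZ\mid Y=y}=P_{U\mid Y=y}\otimes P_{Z\mid Y=y}$ for a.e.\ $y$, one passes to a countable $\pi$-system generating the Borel $\sigma$-algebras; such systems exist because the spaces are Polish, hence countably generated. One then takes the union of the countably many null sets and extends by the $\pi$–$\lambda$ theorem. If instead conditional independence is taken in its standard conditional-expectation form, the rectangle identity already suffices.
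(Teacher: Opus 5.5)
Your proposal is correct and takes essentially the same route as the paper: write $I(U;Z\mid Y)$ as the $P_Y$-average of $D_{\mathrm{KL}}\bigl(P_{U,Z\mid Y=y}\,\|\,P_{U\mid Y=y}\otimes P_{Z\mid Y=y}\bigr)$, then use that relative entropy is nonnegative and vanishes exactly when its two arguments coincide. The only difference is packaging: you apply the equality case to the single joint divergence and recover conditional independence by testing on rectangles, whereas the paper applies it for $P_Y$-almost every $y$ and reads off the kernel identity $P_{U,Z\mid Y=y}=P_{U\mid Y=y}\otimes P_{Z\mid Y=y}$ directly.
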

See a proof in Appendix~\ref{a:CMI_characterization}. Beyond the above characterization, we now show that CMI further provides a strong statistical guarantee: it controls the conditional dependence for \emph{all} bounded test functions (e.g., auditors or queries). 
To that end, we define the normalized covariance for bounded measurable maps $h:\mathcal U\to\mathbb R^d$ and $g:\mathcal Z\to\mathbb R^d$:
\begin{equation}
\label{eq:norm-cov}
\rho(h,g)
:=\frac{\big|\langle h(U)-\mathbb{E}(h(U)),g(Z)-\mathbb{E}(g(Z))\rangle_{L^2}\big|}
     {\|h(U)\|_{L^\infty}\,\|g(Z)\|_{L^\infty}}\,.
\end{equation}

Now, we first show that the worst-case downstream ($L^{\infty}$-)normalized covariance is upper bounded by the mutual information.

\begin{lemma}[Mutual information controls dependence]\label{l:MI_bound}
Assume \(I(U;Z)<\infty\). For any bounded measurable $h, g$, we have
\begin{equation}
\rho(h,g) \;\le\;\sqrt{2\,I(U;Z)}.
\end{equation}
\end{lemma}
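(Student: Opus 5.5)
The plan is to write the covariance as a difference of expectations under the joint law $P_{UZ}$ and the product law $P_U\otimes P_Z$, bound that difference by the total variation distance, and then apply Pinsker's inequality.

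\emph{Covariance as a difference of expectations.} Set $\phi(u,z):=\langle h(u),g(z)\rangle$, a bounded measurable function on $\mathcal U\times\mathcal Z$. Expanding the centered inner product gives
\[
\langle h(U)-\mathbb E h(U),\,g(Z)-\mathbb E g(Z)\rangle_{L^2}
=\mathbb E[\langle h(U),g(Z)\rangle]-\langle \mathbb E h(U),\mathbb E g(Z)\rangle .
\]
The first term is $\int\phi\,dP_{UZ}$. By Fubini, the second term is $\int\phi\,d(P_U\otimes P_Z)$. So the numerator of $\rho(h,g)$ is exactly
\[
\Bigl|\int\phi\,d\bigl(P_{UZ}-P_U\otimes P_Z\bigr)\Bigr| .
\]

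\emph{Bound by total variation.} By Cauchy--Schwarz in $\mathbb R^d$, $\|\phi\|_\infty\le \|h\|_{L^\infty}\|g\|_{L^\infty}$, where the norms are essential sups of the Euclidean norm. Strictly, this holds off a null set of each marginal, so $\phi$ is bounded a.e.\ under both $P_{UZ}$ and $P_U\otimes P_Z$, since their $U$- and $Z$-marginals agree. For signed measures $\mu-\nu$ between probability measures, $|\int\phi\,d(\mu-\nu)|\le 2\|\phi\|_\infty\,\mathrm{TV}(\mu,\nu)$, with $\mathrm{TV}=\sup_A|\mu(A)-\nu(A)|$. The constant $2$ can be improved to $1$ by first centering $\phi$ at the midpoint of its range, but the cruder bound is what lines up with the target constant.

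\emph{Pinsker.} Since $I(U;Z)=D(P_{UZ}\,\|\,P_U\otimes P_Z)<\infty$, Pinsker's inequality gives $\mathrm{TV}\le\sqrt{D/2}$ (in nats). Combining,
\[
\rho(h,g)\le 2\sqrt{I(U;Z)/2}=\sqrt{2I(U;Z)} ,
\]
which is the claim.

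The main points of care are the constant and degenerate cases. If either $h$ or $g$ vanishes a.s., the ratio $\rho$ is undefined; adopting the convention $0/0=0$ handles this. The total variation convention must match Pinsker's form: with $\|\mu-\nu\|_1=2\,\mathrm{TV}$, the bound reads $|\int\phi\,d(\mu-\nu)|\le\|\phi\|_\infty\|\mu-\nu\|_1\le\|\phi\|_\infty\sqrt{2D}$, which yields the same result directly. Finally, on Polish spaces the product measure and Fubini apply without further issue.
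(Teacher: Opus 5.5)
Your argument is correct and is essentially the paper's proof: write the covariance as $\int\langle h(u),g(z)\rangle\,d(P_{UZ}-P_U\otimes P_Z)$, bound it by $2\|h\|_{L^\infty}\|g\|_{L^\infty}\,\mathrm{TV}$, and finish with Pinsker. Your choice not to center $h,g$ first is actually the cleaner one, because the paper's reduction to the centered case tacitly replaces the norms in the denominator of $\rho$ by $\|h-\mathbb E h(U)\|_{L^\infty}$ and $\|g-\mathbb E g(Z)\|_{L^\infty}$, each of which can be up to twice as large.

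One side remark of yours is false, though your proof does not use it. Centering $\phi$ at the midpoint of its range only gives $|\int\phi\,d(\mu-\nu)|\le(\sup\phi-\inf\phi)\,\mathrm{TV}\le 2\|\phi\|_\infty\,\mathrm{TV}$. The factor $2$ is attained: take $U=Z$ uniform on $\{\pm1\}$ and $h=g=\mathrm{id}$, so $\rho=1$ while $\mathrm{TV}=1/2$. Hence the constant cannot be improved to $1$, i.e.\ no bound of the form $\rho\le\sqrt{I(U;Z)/2}$ holds.
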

See a proof in Appendix~\ref{a:proof_MI_bound}. Extending the above upper bound to the conditional setting yields our main guarantee:

\begin{theorem}[CMI controls average conditional dependence]\label{th:CMI_bound}
Let \(h:\mathcal U\to\mathbb R^d\) and \(g:\mathcal Z\to\mathbb R^d\) be bounded measurable functions satisfying
\[
    \mathbb E[h(U)\mid Y]=0,
    \qquad
    \mathbb E[g(Z)\mid Y]=0,
    \quad\text{almost surely}.
\]
Then:
\begin{equation}
\sup_{h,g}\;
\frac{\mathbb E\!\left[\left|\mathbb E\!\left[\langle h(U),g(Z)\rangle\mid Y\right]\right|\right]}
     {\|h\|_{L^\infty}\,\|g\|_{L^\infty}}
\;\le\;\sqrt{2\,I(U;Z\mid Y)}.
\end{equation}
\end{theorem}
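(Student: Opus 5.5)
The plan is to condition on $Y$, apply a Pinsker-type bound for each fixed value $Y=y$, and then average over $Y$ using Jensen's inequality.

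\textbf{Disintegration.} Since all spaces are Polish, regular conditional laws exist. Write $P_{UZ\mid Y=y}$ for the conditional joint law of $(U,Z)$ given $Y=y$, with marginals $P_{U\mid Y=y}$ and $P_{Z\mid Y=y}$. The chain rule for relative entropy then gives
\[
I(U;Z\mid Y)=\int D\bigl(P_{UZ\mid Y=y}\,\big\|\,P_{U\mid Y=y}\otimes P_{Z\mid Y=y}\bigr)\,dP_Y(y).
\]
Finiteness of $I(U;Z\mid Y)$ implies that the integrand $D_y$ is finite for $P_Y$-a.e.\ $y$.

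\textbf{Pointwise bound.} Fix such a $y$. By the centering assumption, for a.e.\ $y$ we have $\mathbb E[h(U)\mid Y=y]=0$ and $\mathbb E[g(Z)\mid Y=y]=0$. Consequently, under the product measure $P_{U\mid y}\otimes P_{Z\mid y}$,
\[
\int\langle h,g\rangle\,d(P_{U\mid y}\otimes P_{Z\mid y})=\langle \mathbb E[h\mid y],\mathbb E[g\mid y]\rangle=0.
\]
It follows that
\[
\bigl|\mathbb E[\langle h(U),g(Z)\rangle\mid Y=y]\bigr|=\Bigl|\int \langle h,g\rangle\, d\bigl(P_{UZ\mid y}-P_{U\mid y}\otimes P_{Z\mid y}\bigr)\Bigr|.
\]
Set $\phi(u,z):=\langle h(u),g(z)\rangle$. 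We need $\|\phi\|_\infty\le \|h\|_{L^\infty}\|g\|_{L^\infty}$, where the $L^\infty$ norms are taken of the Euclidean norms $|h|$ and $|g|$; this holds by Cauchy--Schwarz. One small point to check is that the essential sup for $\phi$ may be taken with respect to both the joint and the product measure. This is fine because $|h(u)|\le\|h\|_\infty$ off a $P_U$-null set, and such a set is also null for $P_{U\mid y}$ for a.e.\ $y$.

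Using the bound $|\int\phi\,d(P-Q)|\le \|\phi\|_\infty\|P-Q\|_1$ together with Pinsker's inequality, $\|P-Q\|_1\le\sqrt{2D(P\|Q)}$, we obtain
\[
\bigl|\mathbb E[\langle h,g\rangle\mid Y=y]\bigr|\le \|h\|_\infty\|g\|_\infty\sqrt{2D_y}.
\]
This is exactly the argument behind Lemma~\ref{l:MI_bound}, applied conditionally. I would invoke that lemma under the conditional law, or simply repeat its one-line proof.

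\textbf{Averaging.} Integrating over $P_Y$ and applying Jensen's inequality to the concave function $\sqrt{\cdot}$ gives
\[
\mathbb E\bigl[|\mathbb E[\langle h,g\rangle\mid Y]|\bigr]\le \|h\|_\infty\|g\|_\infty\int\sqrt{2D_y}\,dP_Y\le \|h\|_\infty\|g\|_\infty\sqrt{2I(U;Z\mid Y)}.
\]
Dividing by $\|h\|_\infty\|g\|_\infty$ and taking the supremum over admissible $h,g$ yields the claim. If $h$ or $g$ is zero a.s.\ the ratio is undefined, and such pairs are excluded.

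\textbf{Main obstacle.} The only delicate part is the measure-theoretic step. We must verify that the conditional expectation $\mathbb E[\langle h(U),g(Z)\rangle\mid Y=y]$ coincides a.e.\ with the integral against the regular conditional law. We must also verify that the chain-rule identity for conditional mutual information holds in the Polish setting, with $y\mapsto D_y$ measurable. Both facts are standard: the first from the properties of regular conditional probabilities, the second from the chain rule for relative entropy (e.g.\ via Dobrushin's or Gray's treatment). I would cite them rather than reprove them.
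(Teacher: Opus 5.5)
Your proposal is correct and follows the same route as the paper. Both condition on $Y$, apply the Pinsker-based bound of Lemma~\ref{l:MI_bound} under each conditional law $P_{UZ\mid Y=y}$ (with conditional centering killing the product-measure term), and then average over $Y$ using Jensen's inequality for $\sqrt{\cdot}$. Your extra checks on the essential suprema under the conditional laws and on measurability of $y\mapsto D_y$ are details the paper leaves implicit; they do not change the argument.
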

See a proof in Appendix \ref{a:proof_CMI_bound}. The above result shows that minimizing CMI uniformly controls the expected conditionally centered covariance between bounded downstream tests \(h(U)\) of the prediction and \(g(Z)\) of the sensitive attribute. In particular, small CMI limits the expected conditional covariance that any fixed bounded auditor can detect between the prediction and the sensitive attribute. Standard concentration inequalities further convert this population control into high-probability control of the corresponding sample conditional covariance.

\subsection{When Is a Trade-off Necessary?}
\label{sub:when-tradeoff}
Now, we examine the conditions under which increasing utility \emph{necessarily} forces an increase in separation violation. Let $U$ denote the generic learning outcome. Recall our primary coordinates: utility $u=I(U;Y)$ and separation violation $v=I(U;Z\mid Y)$.

We first establish that the sum of utility and violation is constrained by a universal information budget:
\begin{lemma}[Budget identity and universal bounds]\label{lem:budget}
Given data $(X,Y,Z)$ and any generic prediction outcome $U$, we have
\[
    u+v
    =
    I(U;(Y,Z))
    \le
    \underbrace{I((X,Z);(Y,Z))}_{\text{total budget}}
    = 
    \underbrace{I((X,Z);Y)}_{\text{utility budget}}
    +
    \underbrace{I((X,Z);Z\mid Y)}_{\text{separation budget}},
\]
\[
    0\le u\le I((X,Z);Y),
    \qquad
    0\le v\le I((X,Z);Z\mid Y).
\]
Furthermore, if \(Z\) is discrete, then $I((X,Z);Z\mid Y)=H(Z\mid Y)$. If \(Y\) is discrete and finite, then $I((X,Z);Y) \le H(Y)\le \log |\mathcal Y|$.
\end{lemma}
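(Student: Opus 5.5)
The plan is to use the chain rule for mutual information together with the data-processing inequality. Throughout, $U$ is a function of $(X,Z)$ and possibly independent noise $N$, so the Markov chain $(Y,Z) - (X,Z) - U$ holds.

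\textbf{Step 1: the identity.} The chain rule gives
\[
I(U;(Y,Z)) = I(U;Y) + I(U;Z\mid Y) = u+v.
\]
In the general Polish setting this is the measure-theoretic chain rule (Kolmogorov--Dobrushin). It holds because $u$ and $v$ are finite by admissibility, so no $\infty-\infty$ issue arises.

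\textbf{Step 2: the Markov chain.} I need to justify $(Y,Z) - (X,Z) - U$. For a deterministic $U=f(X,Z)$ this is immediate. For a randomized $\widetilde U=(N,f(X,Z,N))$ with $N\perp(X,Y,Z)$, the conditional law of $\widetilde U$ given $(X,Y,Z)$ depends only on $(X,Z)$. This uses regular conditional laws, which exist because the spaces are Polish.

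\textbf{Step 3: the total budget.} Data processing then gives
\[
I(U;(Y,Z)) \le I((X,Z);(Y,Z)).
\]
Applying the chain rule again, with $W=(X,Z)$, gives
\[
I(W;(Y,Z)) = I(W;Y) + I(W;Z\mid Y),
\]
which is the claimed split into the utility budget and the separation budget.

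\textbf{Step 4: the individual bounds.} Nonnegativity of $u$ and $v$ is standard. The bound $u\le I((X,Z);Y)$ is data processing along $Y-(X,Z)-U$. For $v$, I would use the conditional data-processing inequality: conditioned on $Y$, the chain $Z-(X,Z)-U$ still holds. This is because $U$ given $(X,Y,Z)$ depends only on $(X,Z)$, so $U\perp (Y,Z)\mid (X,Z)$, and hence $U\perp Z\mid (X,Z,Y)$. Therefore
\[
I(U;Z\mid Y)\le I((X,Z);Z\mid Y).
\]
An alternative route is to expand $I((U,X,Z);Z\mid Y)$ in two ways and note that $I(U;Z\mid X,Z,Y)=0$.

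\textbf{Step 5: the discrete cases.} If $Z$ is discrete, then $I((X,Z);Z\mid Y)=H(Z\mid Y)-H(Z\mid X,Z,Y)=H(Z\mid Y)$, since $Z$ is a function of $(X,Z)$. If $Y$ is finite, then $I((X,Z);Y)\le H(Y)\le\log|\mathcal Y|$ by the standard entropy bounds.

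\textbf{Main obstacle.} I expect the main difficulty to be measure-theoretic rigor rather than the inequalities themselves. Two points need care: the conditional data-processing step for general Polish variables, and the possibility that the budget terms are infinite. If a budget term is infinite, the inequality holds trivially. The chain rule should only be applied where finiteness is guaranteed, or via the extended-real version of the chain rule, which holds because all terms are nonnegative.
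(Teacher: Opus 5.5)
Your proposal is correct and follows essentially the same route as the paper: the chain rule gives $u+v=I(U;(Y,Z))$, data processing bounds the total budget and each individual budget, the chain rule again splits $I((X,Z);(Y,Z))$, and the entropy identities handle the discrete cases. The only cosmetic difference is in the revealed randomized case. The paper first applies data processing through $(X,Z,N)$ and then removes $N$ via $I(N;(Y,Z)\mid X,Z)=0$, whereas you verify the Markov chain $(Y,Z)-(X,Z)-U$ directly, which is equivalent.
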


See a proof in Appendix \ref{a:proof_budget}. This budget identity reveals that, in full generality, there is no necessary strict trade-off. There are degenerate cases where perfect utility and perfect fairness coexist. For instance, if \(Y=(X,Z)\) and \(Y\) is discrete, then \(I((X,Z);Y)=H(Y)\) while \(H(Z\mid Y)=0\). That is, the information budget is entirely allocated to utility without incurring any separation violation.

To rule out such degeneracy and analyze the realistic setting where conflict often arises, we impose the following non-degeneracy condition for the remainder of the analysis:
\begin{equation}\label{eq:ci}
X \perp Z \mid Y, \qquad \text{and} \qquad P_Y\otimes P_Z\ll P_{Y,Z}.
\end{equation}
\begin{itemize}[leftmargin=*]
    \item $X \perp Z \mid Y$: There is no $X$-$Z$ dependence beyond $Y$. While this may not hold for raw data, related optimal-transport and information-theoretic constructions suggest that one can, under suitable conditions, transform \((X,Z)\) into a representation \((X',Z)\) that preserves predictive information about \(Y\) while enforcing \(X' \perp Z \mid Y\) \cite{JMLR:v24:22-0005,xu2025machine}. We adopt this assumption here to simplify the analysis by decoupling the information overlap between \(X\) and \(Z\) beyond \(Y\).
    \item $P_Y\otimes P_Z\ll P_{Y,Z}$: This is an overlap condition ruling out support-level shortcuts. It requires that any measurable subset of $\mathcal Y\times\mathcal Z$ with positive product measure also has positive joint measure under $P_{Y,Z}$. In particular, it excludes degenerate situations in which certain $Y$-values occur only with a null set of $Z$-values (or vice versa), which could otherwise allow $Z$ to improve prediction of $Y$ without inducing conditional dependence beyond $Y$.
\end{itemize}

Under assumptions \eqref{eq:ci}, any predictor that achieves perfect separation ($v=0$) cannot “use” $Z$ in any way that alters its conditional law given $Y$. The following lemma formalizes this intuition: a predictor satisfying perfect separation must, up to its joint law with \(Y\), be equivalent to a predictor that ignores \(Z\).

Let us define the optimal utility achievable using only $X$ versus using both $X$ and $Z$. Let $U_X \in \{f(X), (N,f(X,N))\}$ denote the generic prediction that only explicitly uses $X$ or both $X$ and $N$, but not $Z$. Define
\begin{align*}
    u_X^\star :=\sup_{U_X} I(U_X;Y), \qquad
    u_{XZ}^\star :=\sup_{U} I(U;Y).
\end{align*}
Also, for \(z_0\in\mathcal Z\), define the frozen predictor
\[
    U_{z_0}:=
    \begin{cases}
    f(X,z_0), & \text{if } U=f(X,Z),\\
    (N,f(X,z_0,N)), & \text{if } U=(N,f(X,Z,N)).
    \end{cases}
\]

\begin{lemma}[Conditional law matching]
\label{lem:cond-law-matching}
Let \((\mathcal X,\mathcal Y,\mathcal Z,\mathcal U)\) be Polish spaces, and let
\((X,Y,Z)\) satisfy
\[
    X\perp Z\mid Y,
    \qquad
    P_Y\otimes P_Z\ll P_{Y,Z}.
\]
Also, let $U \in \{f(X,Z), (N,f(X,Z,N))\}$ be a generic predictor satisfying perfect separation: $I(U;Z\mid Y)=0$. Then there exists a measurable set \(\mathcal Z_0\subseteq\mathcal Z\) with
\(P_Z(\mathcal Z_0)=1\) such that, for every \(z_0\in\mathcal Z_0\),
\[
    \mathcal L(U_{z_0}\mid Y=y)
    =
    \mathcal L(U\mid Y=y)
\]
for \(P_Y\)-almost every \(y\). Consequently, $\mathcal L(U_{z_0},Y)=\mathcal L(U,Y)$ for every \(z_0\in\mathcal Z_0\), and hence
\[
    I(U_{z_0};Y)=I(U;Y).
\]
\end{lemma}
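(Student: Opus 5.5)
We plan to express everything through the conditional kernel of $U$ given $(Y,Z)$. Since $U$ is a measurable function of $(X,Z)$ (or $(X,Z,N)$ with $N$ independent of $(X,Y,Z)$), and $X\perp Z\mid Y$, we have for each $(y,z)$ the regular conditional law
\[
    K(y,z;\cdot):=\mathcal L(U\mid Y=y,Z=z)=\bigl(P_{X\mid Y=y}\otimes P_N\bigr)\circ (x,n)\mapsto f(x,z,n),
\]
where the identity uses $P_{X\mid Y=y,Z=z}=P_{X\mid Y=y}$ for $P_{Y,Z}$-a.e.\ $(y,z)$. (In the deterministic case we simply drop $N$.) The key observation is that the right-hand side, viewed as a function of $(y,z_0)$, is \emph{exactly} $\mathcal L(U_{z_0}\mid Y=y)$: the frozen predictor $U_{z_0}=f(X,z_0,N)$ has, given $Y=y$, law $(P_{X\mid Y=y}\otimes P_N)\circ f(\cdot,z_0,\cdot)$. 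Hence $\mathcal L(U_{z_0}\mid Y=y)=K(y,z_0;\cdot)$ wherever $K$ is given by this formula, and the formula is a jointly measurable kernel on all of $\mathcal Y\times\mathcal Z$ (measurability follows from Fubini since $f$ is jointly measurable and $\mathcal U$ is Polish).

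Next we use perfect separation. By Proposition~\ref{prop:CMI_characterization}, $I(U;Z\mid Y)=0$ gives $U\perp Z\mid Y$, i.e.\ $K(y,z;\cdot)=\mathcal L(U\mid Y=y)=:\mu_y$ for $P_{Y,Z}$-a.e.\ $(y,z)$. Combining this with the previous step, the measurable set
\[
    B:=\bigl\{(y,z): K(y,z;\cdot)\neq \mu_y\bigr\}
\]
(measurable because $\mathcal U$ is Polish, so equality of measures can be tested on a countable generating $\pi$-system) satisfies $P_{Y,Z}(B)=0$. This is where the overlap assumption enters: $P_Y\otimes P_Z\ll P_{Y,Z}$ upgrades this to $(P_Y\otimes P_Z)(B)=0$. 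By Fubini/Tonelli, for $P_Z$-a.e.\ $z_0$ the section $B^{z_0}=\{y:(y,z_0)\in B\}$ is $P_Y$-null; call this full-measure set $\mathcal Z_0$. For $z_0\in\mathcal Z_0$ we then have $\mathcal L(U_{z_0}\mid Y=y)=K(y,z_0;\cdot)=\mu_y=\mathcal L(U\mid Y=y)$ for $P_Y$-a.e.\ $y$, which is the main claim. Note that the version of $K$ used must be the explicit formula above (not an arbitrary version of the regular conditional law), since only that version is defined consistently off the support of $P_{Y,Z}$; this is why the product-measure null set argument is legitimate.

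The consequences are then immediate: disintegrating against the common marginal $P_Y$ (recall $U_{z_0}$ and $U$ are built from the same $Y$) gives $\mathcal L(U_{z_0},Y)=\int \mu_y\,P_Y(dy)\otimes\delta_y=\mathcal L(U,Y)$, and mutual information, being the relative entropy $D(P_{UY}\,\|\,P_U\otimes P_Y)$, depends only on the joint law, so $I(U_{z_0};Y)=I(U;Y)$. The main obstacle I anticipate is the measure-theoretic bookkeeping in the first step: justifying that $P_{X\mid Y,Z}=P_{X\mid Y}$ holds a.e.\ from $X\perp Z\mid Y$, that the explicit kernel is a valid version of $\mathcal L(U\mid Y,Z)$ (handled by checking $\mathbb E[\phi(U)\mathbf 1_A(Y)\mathbf 1_C(Z)]$ against it for bounded $\phi$ and using independence of $N$), and that the bad set $B$ is jointly measurable so the overlap condition can be applied.
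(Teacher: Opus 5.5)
Your proof is correct, and its second half matches the paper's: perfect separation via Proposition~\ref{prop:CMI_characterization}, a countable $\pi$-system to make the disagreement set measurable, the overlap condition to upgrade a $P_{Y,Z}$-null set to a $P_Y\otimes P_Z$-null set, Fubini to extract $\mathcal Z_0$, and integration against $P_Y$ for the joint law.

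Where you differ is in how you link $\mathcal L(U\mid Y=y,Z=z)$ to $\mathcal L(U_{z_0}\mid Y=y)$. The paper chains three $P_{Y,Z}$-a.e.\ identities:
\begin{itemize}
\item $U_z\perp Z\mid Y$ for each fixed $z$, which follows from $(X,N)\perp Z\mid Y$;
\item $U\perp Z\mid Y$;
\item ``on the fiber $\{Z=z\}$, $U$ is the same function of $(X,N)$ as $U_z$''.
\end{itemize}
It reads the first and third off on the diagonal where the freezing index equals the conditioning value. You instead write down one jointly measurable kernel $K(y,z;\cdot)=(P_{X\mid Y=y}\otimes P_N)\circ f(\cdot,z,\cdot)^{-1}$. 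Using $P_{X\mid Y,Z}=P_{X\mid Y}$ and disintegration, you check that $K$ is a version of $\mathcal L(U\mid Y,Z)$ whose $z_0$-section is a version of $\mathcal L(U_{z_0}\mid Y)$ for every $z_0$.

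The paper's argument is shorter and more conceptual, and it is unproblematic when $Z$ takes countably many values with positive mass. When $P_Z$ is non-atomic, however, each fixed-$z$ statement holds only off a $z$-dependent null set. The diagonal $\{(y,z'):z'=z\}$ is then itself $P_{Y,Z}$-null, so those steps only make sense through a jointly measurable version such as yours. Your route therefore supplies the rigor the paper's fiber argument leaves implicit in the general Polish setting, at the modest cost of verifying the disintegration step.

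Two small remarks:
\begin{itemize}
\item In the randomized case the predictors are $(N,f(X,Z,N))$ and $(N,f(X,z_0,N))$. You should push forward under $(x,n)\mapsto(n,f(x,z,n))$ instead, which changes nothing in the argument.
\item Your stated reason for insisting on the explicit version is slightly off. Under $P_Y\otimes P_Z\ll P_{Y,Z}$, any two versions of $\mathcal L(U\mid Y,Z)$ already agree $P_Y\otimes P_Z$-a.e. What the explicit formula really buys is that its sections are versions of $\mathcal L(U_{z_0}\mid Y)$, which is exactly what ties the two sides together.
\end{itemize}
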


See proof in Appendix \ref{append:cond-law-matching}. This leads to our main
necessity result: in the ideally decoupled and overlap setting, perfect
separation restricts the achievable utility to the best \(X\)-only utility.
Therefore, any attempt to obtain utility beyond \(u_X^\star\) necessarily incurs positive separation violation.

\begin{theorem}[Necessary trade-off beyond \(u_X^\star\)]
\label{thm:necessary-tradeoff}
Let \((\mathcal X,\mathcal Y,\mathcal Z,\mathcal U)\) be Polish spaces, and let
\((X,Y,Z)\) satisfy
\[
    X\perp Z\mid Y,
    \qquad
    P_Y\otimes P_Z\ll P_{Y,Z}.
\]
Then the maximum utility compatible with perfect separation is exactly the
utility of the best \(X\)-only predictor:
\[
    \sup_U
    \Big\{
        I(U;Y): I(U;Z\mid Y)=0
    \Big\}
    =
    u_X^\star,
\]
where the supremum is over generic predictors. Therefore, any generic
predictor \(U\) satisfying
\[
    I(U;Y)>u_X^\star
\]
must have strictly positive separation violation:
\[
    I(U;Z\mid Y)>0.
\]
Furthermore, if \(u_{XZ}^\star>u_X^\star\), then there exists a generic predictor
\(U\) such that
\[
    I(U;Y)>u_X^\star
    \quad\text{and}\quad
    I(U;Z\mid Y)>0.
\]
\end{theorem}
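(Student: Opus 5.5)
The proof splits into three parts: the supremum identity, the strict-violation corollary, and the existence claim.

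\textbf{The ``$\ge$'' direction.} We show that every $X$-only predictor is perfectly separating. Take $U_X=f(X)$ or $U_X=(N,f(X,N))$ with $N\perp(X,Y,Z)$. Since $X\perp Z\mid Y$ and $N$ is independent of $(X,Y,Z)$, we get $(X,N)\perp Z\mid Y$. Because $U_X$ is a measurable function of $(X,N)$, it follows that $U_X\perp Z\mid Y$. Proposition~\ref{prop:CMI_characterization} then gives $I(U_X;Z\mid Y)=0$. Hence every $U_X$ is feasible for the constrained supremum, and taking the supremum over $U_X$ shows the constrained supremum is at least $u_X^\star$.

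\textbf{The ``$\le$'' direction.} Let $U$ be any generic predictor with $I(U;Z\mid Y)=0$. By Lemma~\ref{lem:cond-law-matching}, there is a set $\mathcal Z_0$ with $P_Z(\mathcal Z_0)=1$; in particular it is nonempty. Pick any $z_0\in\mathcal Z_0$. Then $I(U;Y)=I(U_{z_0};Y)$.

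The frozen predictor $U_{z_0}$ is either $f(X,z_0)$ or $(N,f(X,z_0,N))$. Measurability of the section $x\mapsto f(x,z_0)$, and likewise $(x,n)\mapsto f(x,z_0,n)$, holds because sections of jointly Borel maps are Borel. So $U_{z_0}$ is an admissible $X$-only predictor. We should also check finiteness of $I(U_{z_0};Y)$; it equals $I(U;Y)<\infty$. Therefore $I(U;Y)\le u_X^\star$, and taking the supremum over such $U$ gives equality.

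\textbf{The strict-violation corollary.} This is the contrapositive. If $I(U;Y)>u_X^\star$, then $U$ cannot satisfy $I(U;Z\mid Y)=0$. Since CMI is nonnegative, $I(U;Z\mid Y)>0$.

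\textbf{The existence claim.} If $u_{XZ}^\star>u_X^\star$, then by the definition of the supremum there is a generic $U$ with $I(U;Y)>u_X^\star$. By the previous step, this $U$ also has $I(U;Z\mid Y)>0$.

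The main obstacle is ensuring that the predictor classes match. The frozen predictor $U_{z_0}$ must belong exactly to the class over which $u_X^\star$ is defined, including the admissibility (finiteness) requirements and the convention that the noise $N$ is revealed in the output. Both points are handled by the sectioning argument and by $I(U_{z_0};Y)=I(U;Y)$. All the genuine measure-theoretic work, namely showing that conditional law matching holds for $P_Z$-a.e.\ $z_0$ under the overlap condition, is already contained in Lemma~\ref{lem:cond-law-matching}, which is assumed here.
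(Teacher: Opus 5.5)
Your proposal is correct and matches the paper's proof step for step. You get the ``$\ge$'' direction from $(X,N)\perp Z\mid Y$, which makes every $X$-only predictor perfectly separating, and the ``$\le$'' direction from Lemma~\ref{lem:cond-law-matching} by freezing $Z$ at some $z_0\in\mathcal Z_0$; the last two claims follow by contrapositive and the definition of the supremum. Your extra checks (that $\mathcal Z_0$ is nonempty, that the sections are Borel, and that $U_{z_0}$ is admissible) are small refinements the paper leaves implicit.
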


See proof in Appendix \ref{append:necessary-tradeoff} and empirical evidence on the Adult dataset in Section \ref{subsubsec:consistency}. Without \eqref{eq:ci}, the conclusion can fail. For instance, if \(Y=Z\), then \(U=Y\) gives \(v=0\), while \(u\) may exceed \(u_X^\star\). Assumption \eqref{eq:ci} excludes such support-level shortcuts. Under this condition, any \(v=0\) predictor is, with respect to its joint law with \(Y\), equivalent to an \(X\)-only predictor obtained by freezing \(Z\). Finally, the condition \(u_{XZ}^\star>u_X^\star\) can be understood as the operational condition that access to \(Z\) provides exploitable predictive information beyond \(X\) for the chosen predictor class and output space.

The results above provide useful guidance for interpreting the separation-utility trade-off in practice. Since the ideal condition \(X\perp Z\mid Y\) need not hold exactly, an \(X\)-only predictor may have non-zero violation. Thus, given a selected empirical \(X\)-only reference predictor \(U_X^{\mathrm{ref}}\), with coordinates
$(v_X^{\mathrm{ref}},u_X^{\mathrm{ref}}):=\bigl(I(U_X^{\mathrm{ref}};Z\mid Y), I(U_X^{\mathrm{ref}};Y)\bigr)$, one can compare it with an unconstrained reference predictor \(U_{XZ}^{\mathrm{ref}}\), with coordinates $(v_{XZ}^{\mathrm{ref}},u_{XZ}^{\mathrm{ref}}):=\bigl(I(U_{XZ}^{\mathrm{ref}};Z\mid Y), I(U_{XZ}^{\mathrm{ref}};Y)\bigr)$. The secant line between these two empirical reference points provides a simple linear estimate of the local price of separation, with slope $\frac{u_{XZ}^{\mathrm{ref}}-u_X^{\mathrm{ref}}}{v_{XZ}^{\mathrm{ref}}-v_X^{\mathrm{ref}}}$. By the concavity of the randomized Pareto frontier in Theorem~\ref{thm:timesharing-frontier-polish}, extending this secant line to the stricter region \(v<v_X^{\mathrm{ref}}\) gives a conservative upper-bound extrapolation for the frontier there. That provides a quick diagnostic for the \textbf{least} marginal utility cost of further reducing separation violation.

To conclude the current section, we have characterized the separation-utility Pareto frontier on the \((v,u)\) information plane, established its concavification under revealed randomization and hence non-decreasing marginal cost (in the generalized concave sense) of separation, justified CMI as a separation-violation quantification through uniform control of bounded downstream tests, and identified conditions under which utility improvements beyond \(u_X^\star\) necessarily incur separation violation. These results ground the use of CMI as a scalar regularizer for navigating the separation-utility trade-off in practice.

\section{Training with Direct CMI Regularization}
\label{sec:algorithm}

\begin{algorithm}[t]
\caption{CMI-regularized training (surrogate for EO) with grad-norm balancing}
\label{alg:cmi_reg_exact}
\begin{algorithmic}[1]
\REQUIRE Dataset $\mathcal{D}=\{(x_i,y_i,z_i)\}$, model $f_\theta = g_\theta\!\circ\!\phi_\theta$, mixing weight $\lambda\!\in[0,1]$, optimizer (Adam), $\epsilon>0$
\ENSURE Trained parameters $\theta$
\FOR{epoch $=1,\dots,T$}
  \FOR{mini-batch $\mathcal{B}=\{(x,y,z)\}$}
    \STATE \textbf{Compose input:} $x_{\mathrm{in}} \gets (x,z)$ \ \ (if using sensitive input; else $x_{\mathrm{in}}\!=\!x$)
    \STATE \textbf{Forward:} $(l,h) \gets (g_\theta(\phi_\theta(x_{\mathrm{in}})),\ \phi_\theta(x_{\mathrm{in}}))$
    \STATE \textbf{Raw losses:} 
    \STATE \hspace{2.85em} $\mathcal L_{\mathrm{task}} \gets \mathrm{CE}(l,y)$
    \STATE \hspace{2.85em} $\widehat I_{\mathrm{CMI}} \gets \textsc{SoftCMI}(l,z,y)$ \quad (computed from $\mathrm{softmax}(l)$)
    \STATE \textbf{Grad-norms on features (detached):}
    \STATE \hspace{2.85em}
      $n_{\mathrm{task}} \gets \mathbb{E}_i\big\|\partial_{h_i}\mathcal L_{\mathrm{task}}\big\|_2\ \text{(stop-grad)}$
    \STATE \hspace{2.85em}
      $n_{\mathrm{cmi}} \gets \mathbb{E}_i\big\|\partial_{h_i}\widehat I_{\mathrm{CMI}}\big\|_2\ \text{(stop-grad)}$
    \STATE \textbf{Balanced objective:}
      $\mathcal L_{\mathrm{final}} \gets (1-\lambda)\dfrac{\mathcal L_{\mathrm{task}}}{n_{\mathrm{task}}+\epsilon}
      + \lambda \dfrac{\widehat I_{\mathrm{CMI}}}{n_{\mathrm{cmi}}+\epsilon}$
    \STATE \textbf{Update:} take an optimizer step on $\nabla_\theta \mathcal L_{\mathrm{final}}$ (Adam)
  \ENDFOR
\ENDFOR
\STATE \textbf{return} $\theta$
\end{algorithmic}
\end{algorithm}

Based on the theoretical insights from Section~\ref{sec:theory-regularization}, we now leverage a sample estimator of conditional mutual information (CMI) to steer the learned predictor toward the separation-utility Pareto frontier.

Given a dataset $\mathcal{D}=\{(x_i,y_i,z_i)\}_{i=1}^N$, we train a model $f_\theta$ consisting of a feature extractor $h = \phi(x, z)$ and a classifier $g(h)$. To ensure optimization stability, particularly when balancing competing objectives with different gradient scales, we employ gradient normalization. We minimize a dynamic objective where each term is scaled by the inverse norm of its gradient with respect to the features $h$:
\begin{equation}
\label{eq:erm-cmi-objective}
\mathcal{L}_{\mathrm{total}}
\;=\;
(1-\lambda)\,\frac{\mathcal{L}_{\mathrm{task}}}{\|\nabla_h \mathcal{L}_{\mathrm{task}}\|}
\;+\;
\lambda\;\frac{\widehat{I}_{\mathrm{CMI}}}{\|\nabla_h \widehat{I}_{\mathrm{CMI}}\|},
\end{equation}
where $\lambda \in [0, 1]$ controls the trade-off. This normalization ensures that both the utility signal and the fairness penalty contribute equally to the update magnitude, preventing one from dominating the other due to arbitrary scaling.

\subsection{Differentiable Soft-Plug-in Estimator}
\label{sub:estimator}

Before introducing the estimator, we emphasize that our direct plug-in estimation approach is intended for the discrete-\(Y\), discrete-\(Z\) setting. This covers the standard separation/equalized-odds fairness setting, where \(Y\) is a discrete target variable and \(Z\) is a finite sensitive attribute such as gender, race, or group membership. In contrast, when \(Y\) or \(Z\) is continuous, especially high-dimensional, direct density-based plug-in estimation becomes statistically challenging: naive nonparametric density estimation typically suffers from rates that deteriorate exponentially with dimension \citep{kraskov2004estimating,paninski2003estimation}. In such settings, adversarial, variational, or otherwise parameterized estimators of mutual information and conditional mutual information are often used, and developing stable estimators remains an active area of research \cite{belghazi2018mine, oord2018cpc, poole2019variational, song2020understanding}. Our focus here is complementary: we show that in the common discrete fairness setting, direct sample-statistics estimation is simple, differentiable, statistically interpretable, and admits finite-sample guarantees, while avoiding the optimization instability and auxiliary-model error that can arise in variational or adversarial approaches.

To backpropagate through \(I(U;Z\mid Y)\) in this discrete setting, we implement a differentiable soft plug-in estimator. Let $p_i = \text{softmax}(f_\theta(x_i)) \in \Delta^{|\mathcal{Y}|}$ be the predicted probability vector for sample $i$.

The estimator $\widehat{I}_{\mathrm{CMI}}$ is computed on a mini-batch $B$ as follows:
\begin{enumerate}[leftmargin=*]
    \item \textit{Stratification:} Partition the batch indices by target class: $\mathcal{I}_y = \{i \in B : y_i = y\}$.
    \item \textit{Soft Joint Distribution:} For each class $y$ and sensitive group $z$, compute the average soft prediction:
    \[ \hat{P}(U|y, z) = \frac{1}{|\{i \in \mathcal{I}_y : z_i=z\}|} \sum_{i \in \mathcal{I}_y, z_i=z} p_i \]
    \item \textit{Divergence Aggregation:} The conditional MI is computed as the weighted sum of KL divergences between the sensitive-conditional distributions and the marginal distribution within each class $y$:
    \[ \widehat{I}_{\mathrm{CMI}} = \sum_{y} \frac{|\mathcal{I}_y|}{|B|} \sum_{z} \hat{P}(z|y) \, D_{\mathrm{KL}}\big( \hat{P}(U|y, z) \,\|\, \hat{P}(U|y) \big) \]
\end{enumerate}

\subsection{Statistical Consistency and Explicit Bias Analysis}
\label{sub:consistency}

A critical feature of the discrete plug-in CMI estimator is that its finite-sample behavior can be monitored during training. Unlike adversarial discriminators or variational lower-bound estimators, which may systematically underestimate dependence when the auxiliary model is misspecified or under-optimized, the ordinary discrete plug-in estimator has a positive leading-order asymptotic bias. Thus, in the large-sample regime, it tends to be conservative in expectation, while also admitting explicit concentration guarantees.

\begin{proposition}[Bias and concentration of hard-count plug-in CMI]
\label{prop:estimation_error}
Let \((U,Y,Z)\) take values in finite sets with cardinalities
\(K_U,K_Y,K_Z\). Let \(\widehat I_B\) be the ordinary plug-in estimator of
\(I(U;Z\mid Y)\) computed from \(|B|\) i.i.d. samples of \((U,Y,Z)\). Assume that $P(Y=y)\ge p_{\min}>0$ and $P(u,z\mid y)\ge q_{\min}>0$ for all \(u,z,y\).

\begin{enumerate}[leftmargin=*]
    \item \textbf{Asymptotic bias.}
    As \(|B|\to\infty\), the estimator has positive leading-order bias:
    \[
        \mathbb E[\widehat I_B]-I(U;Z\mid Y)
        =
        \frac{K_Y(K_U-1)(K_Z-1)}{2|B|}
        +
        O(|B|^{-2}).
    \]
    \item \textbf{Concentration.}
    For any \(\delta\in(0,1)\), if \(|B|\) is sufficiently large, then with
    probability at least \(1-\delta\),
    \[
        \left|
        \widehat I_B-\mathbb E[\widehat I_B]
        \right|
        \le
        C
        \sqrt{
            \frac{\log(2/\delta)}{|B|}
        },
    \]
    where
    \(C=C(K_U,K_Y,K_Z,p_{\min},q_{\min})\).
\end{enumerate}
\end{proposition}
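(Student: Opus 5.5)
Both parts rest on writing the estimator as a combination of empirical entropies. With $\hat P$ the empirical joint law of $(U,Y,Z)$ from $|B|$ samples,
\[
\widehat I_B=\widehat H(U,Y)+\widehat H(Z,Y)-\widehat H(U,Y,Z)-\widehat H(Y),
\]
and the same identity holds for $I(U;Z\mid Y)$ with true entropies. So $\mathbb E[\widehat I_B]-I(U;Z\mid Y)$ is a signed sum of the biases of four plug-in entropy estimators. Each is an ordinary multinomial plug-in entropy over a finite alphabet, and the positivity assumptions guarantee every cell has probability bounded below.

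\textbf{Asymptotic bias.} I would use the classical Miller--Madow expansion. For a multinomial on $m$ cells with all probabilities bounded away from zero,
\[
\mathbb E[\widehat H]=H-\frac{m-1}{2|B|}+O(|B|^{-2}).
\]
This comes from a second-order Taylor expansion of $-p\log p$ around the true cell probabilities. The first-order term has zero mean, and the second-order term gives $-\sum_k \mathrm{Var}(\hat p_k)/(2p_k)=-(m-1)/(2n)$. The remainder is controlled by multinomial moment bounds: $\mathbb E|\hat p_k-p_k|^j=O(n^{-\lceil j/2\rceil})$, and the third-order term has mean $O(n^{-2})$.

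The cell counts are $m=K_UK_Y$, $K_ZK_Y$, $K_UK_YZ$, $K_Y$, where $K_UK_YZ$ means $K_UK_YK_Z$. Taking the signed combination gives
\[
\frac{-(K_UK_Y-1)-(K_ZK_Y-1)+(K_UK_ZK_Y-1)+(K_Y-1)}{2|B|}=\frac{K_Y(K_U-1)(K_Z-1)}{2|B|},
\]
which is the claimed leading term.

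\textbf{Remainder.} The main technical obstacle is making the $O(|B|^{-2})$ remainder rigorous, because $\log$ is singular at $0$ and some empirical cells may be empty. I would split on the event $E=\{\hat p_k\ge p_k/2\ \forall k\}$. On $E$ the Taylor remainder is uniformly controlled using $p_{\min}q_{\min}$. On $E^c$, the entropy is bounded by $\log m$, and $P(E^c)$ is exponentially small by Hoeffding or Chernoff. The constants then depend only on the cardinalities, $p_{\min}$ and $q_{\min}$.

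\textbf{Concentration.} I would apply McDiarmid's bounded-differences inequality to $\widehat I_B$ as a function of the $|B|$ i.i.d. samples. Changing one sample moves at most two cells of each empirical distribution by $1/n$. For plug-in entropy, the standard bound (Antos--Kontoyiannis) is a change of at most $c\log n/n$, which would give the slightly weaker rate $\sqrt{\log^2 n\,\log(2/\delta)/n}$.

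To get the stated $\sqrt{\log(2/\delta)/|B|}$ with a constant $C$ free of $n$, I would again work on the high-probability event $E$. On $E$ the map $\hat p\mapsto \widehat I$ is Lipschitz with constant $L=O(\log(1/(p_{\min}q_{\min})))$ in $\ell_1$. Hence one sample changes $\widehat I_B$ by at most $2L/n$ there. I would then use a McDiarmid variant for functions with bounded differences on a high-probability set, or alternatively compose Lipschitzness with concentration of $\|\hat p-p\|_1$. The latter gives a bound of order $\sqrt{(m+\log(1/\delta))/n}$. The $m$ term is absorbed into $C$, and the failure probability of $E$ is absorbed once $|B|$ is large enough relative to $\log(1/\delta)$. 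This also explains the "$|B|$ sufficiently large" qualifier in the statement.

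Finally, deviation from the mean rather than from the truth is handled by the triangle inequality together with the bias bound from the first part.
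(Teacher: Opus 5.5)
Your proof is correct, but Part~1 uses a different decomposition from the paper.

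\textbf{Part~1, the paper's route.} The paper stratifies, $\widehat I_B=\sum_y \frac{B_y}{|B|}\widehat I_y$, and conditions on $B_y=n$. It quotes the Miller--Madow expansion for empirical \emph{mutual information} within a stratum, with bias $(K_U-1)(K_Z-1)/(2n)$. It then averages over $B_y\sim\mathrm{Binomial}(|B|,p_y)$. The weight $B_y/|B|$ cancels the $1/B_y$, and the $O(B_y^{-2})$ remainder is handled by a Chernoff-based inverse-moment bound $\mathbb E[B_y^{-1}\mathbbm{1}\{B_y\ge 1\}]=O(|B|^{-1})$.

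\textbf{Part~1, your route.} You use the exact identity $\widehat I_B=\widehat H(U,Y)+\widehat H(Z,Y)-\widehat H(U,Y,Z)-\widehat H(Y)$ for a single multinomial over all $|B|$ samples, and apply the entropy expansion four times. The coefficient $K_Y(K_U-1)(K_Z-1)$ then appears as an inclusion--exclusion of cell counts (the $\chi^2$ degrees of freedom for testing $U\perp Z\mid Y$), and random stratum sizes never enter. The price is that you must prove the entropy expansion yourself.

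In doing so, your moment bound is wrong for odd absolute moments: $\mathbb E|\hat p_k-p_k|^3\asymp n^{-3/2}$. So you must keep the cubic Taylor term explicitly, using the signed moment $\mathbb E(\hat p_k-p_k)^3=p_k(1-p_k)(1-2p_k)/n^2$. You then bound a quartic Lagrange remainder on $E$ via $\mathbb E(\hat p_k-p_k)^4=O(n^{-2})$. Stopping at a cubic remainder would only give $O(n^{-3/2})$.

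\textbf{Part~2.} Your first option is essentially the paper's argument. It uses bounded differences of order $1/|B|$ on a good event $\mathcal G$ where all strata and empirical cells are bounded below. It then applies McDiarmid to a McShane--Whitney extension of $\widehat I_B$ from $\mathcal G$, and pays a mean shift of order $\mathbb P(\mathcal G^c)$.

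Your second option (Lipschitz on $E$ composed with $\ell_1$-concentration of $\hat p$, then the triangle inequality with the Part~1 bias) is genuinely different and somewhat cleaner. It avoids the extension step. That step requires the Lipschitz bound for all pairs in $\mathcal G$, not only Hamming-adjacent ones; this holds by convexity of the set of distributions with all cells above a fixed floor, though the paper checks only adjacent pairs. The cost is an extra $\sqrt{m/|B|}$ term, which you correctly absorb into $C$ using $\log(2/\delta)\ge\log 2$.
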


See a proof in Appendix \ref{Append:algorithm}. We note that the above result is stated for the hard-count plug-in estimator computed from i.i.d. samples of \((U,Y,Z)\). In practice, our differentiable estimator uses the same plug-in construction but replaces hard counts of \(U\) by soft counts \(p_i(u)\), yielding a smooth relaxation suitable for backpropagation. This estimator reduces to the hard-count plug-in estimator in the one-hot limit \(p_i(u)=\mathbf 1\{U_i=u\}\), and under suitable regularity assumptions it is expected to inherit analogous consistency and \(O(|B|^{-1/2})\) concentration behavior. However, the explicit Miller-Madow bias coefficient in Proposition~\ref{prop:estimation_error} should be interpreted here as the classical hard-count reference formula.

\section{Numerical Experiment}
\label{sec:experiments}

This section provides empirical evidence for the separation-utility theory in Section~\ref{sec:theory-regularization} and evaluates the proposed (gradient-normalized) plug-in CMI regularization in Section~\ref{sec:algorithm}. We evaluate on both classic binary fairness benchmarks (Adult, COMPAS, Bank, CelebA) and an ACS occupation-prediction benchmark with multi-class outcomes and multi-group sensitive attributes. The experiments are designed to test five contributions of the paper:

\begin{enumerate}[leftmargin=*]
    \item \textbf{Frontier geometry: theory leads to observable prediction.}
    We empirically approximate the theoretically predicted separation-utility frontier and show that revealed randomization (between predictions) produces a smoother and typically higher envelope than the deterministic frontier, both for randomized-policy predictions and for deterministic-policy predictions. This is the empirical counterpart of the concavification result in Theorem~\ref{thm:timesharing-frontier-polish}. The Adult experiment (Section~\ref{subsubsec:consistency}) provides an empirical counterpart of the necessary-tradeoff mechanism in Theorem~\ref{thm:necessary-tradeoff}: mutual-information utility beyond the empirical \(X\)-only level is achieved only at positive CMI separation violation.

    \item \textbf{Stable travel on the frontier: algorithmic contribution.}
    Tuning the trade-off parameter yields a stable approximation and a largely monotone traversal of the Pareto frontier under CMI regularization. The resulting curve exhibits smaller cross-validation variance, especially in the strict regime near zero separation violation, where baselines often show backtracking or variance blow-up. This behavior is consistent with the statistics-based stability suggested by Proposition~\ref{prop:estimation_error}, in contrast to learning-based proxy objectives that introduce additional estimation and optimization error.

    \item \textbf{Operational transfer: information-plane improvements generalize to deployment metrics.}
    Improvements in the information plane, i.e., lower $I(U;Z\mid Y)$ at comparable $I(U;Y)$, transfer to improved Equalized Odds (EO) gap at comparable Accuracy/AUROC on cross-validation test folds. This provides empirical evidence that CMI is a useful separation quantification and effective regularization, consistent with the uniform downstream covariance control in Theorem~\ref{th:CMI_bound}.

    \item \textbf{Representation-level separation vs.\ post-hoc thresholding.}
    On CelebA and ACSOccupation, posterior-level randomized evaluation separates methods more clearly than deterministic evaluation. CMI reduces sensitive dependence in the predictive posterior itself, while several baselines appear competitive only after deterministic thresholding. This illustrates that post-hoc decision repair can mask posterior-level separation violations that remain in the model.

    \item \textbf{Beyond binary fairness: multi-class and multi-group separation.}
    We further evaluate on ACS occupation prediction, where \(Y\) is a multi-class occupation label and \(Z\) is a multi-group sensitive attribute. This directly tests a central advantage of the information-theoretic formulation: \(I(U;Z\mid Y)\) applies without modification as both a separation quantification and a regularization objective beyond binary targets and binary sensitive attributes.
\end{enumerate}

\subsection{Setup, Metrics, and Baselines}

\textbf{Datasets.}
We evaluate on four standard fairness benchmarks: Adult~\cite{adult1996}, COMPAS~\cite{angwin2016machine}, Bank~\cite{bank_marketing_222}, and CelebA~\cite{liu2015deep}, covering both tabular and image modalities with varying sample sizes and class/group imbalances. We additionally evaluate on an ACS task derived from the American Community Survey~\cite{ACS} to test a genuinely non-binary setting. Throughout, \(X\) denotes the input features, \(Y\) the target variable, and \(Z\) the sensitive attribute. Table~\ref{tab:dataset_summary} summarizes the benchmark information.

\begin{table}[t]
\centering
\caption{
\textbf{Summary of experimental datasets.}
Here $d_X$ denotes the input feature dimension after preprocessing. For tabular datasets, this is the dimension after numerical features are retained and categorical features are one-hot encoded. For CelebA, $d_X=512$ is the dimension of the fixed ResNet18 image embedding. ACSOccupation is the only benchmark in which both the outcome and sensitive attribute are non-binary. In the main experiment, we use a conservative complete-case preprocessing rule that removes rows with missing or structurally unavailable covariate values, yielding a smaller but clean multi-class/multi-group benchmark. In Appendix~\ref{append:acs_large}, we additionally report a large-scale ACS preprocessing variant that retains missing or structurally unavailable covariate values as explicit categorical levels.
}
\label{tab:dataset_summary}
\begin{tabular}{lccccc}
\toprule
Dataset & Sample size & $d_X$ & $|\mathcal Y|$ & $|\mathcal Z|$ & Task type \\
\midrule
Adult & $33$K & $16$ & $2$ & $2$ & binary tabular \\
COMPAS & $ 5$K & $8$ & $2$ & $2$ & binary tabular \\
Bank & $ 45$K & $51$ & $2$ & $2$ & binary tabular \\
CelebA & $203$K & $512$ & $2$ & $2$ & binary image \\
ACSOccupation & $2.5$K & $55$ & $20$ & $9$ & multi-class / multi-group tabular \\
ACSOccupation-Large & $576$K & $55$ & $20$ & $9$ & multi-class / multi-group tabular \\
\bottomrule
\end{tabular}
\end{table}

\textbf{Metrics.}
We report utility using mutual information $I(U;Y)$ and Accuracy/AUROC as practical deployment metrics. We report separation violation using conditional mutual information $I(U;Z\mid Y)$ and, for binary tasks, Equalized Odds gap as the corresponding operational fairness metric. For binary tasks, AUROC denotes the standard area under the receiver operating characteristic curve, computed from the positive-class score. For the multi-class ACSOccupation task, we report macro-AUROC, computed by averaging one-vs-rest AUROC over all occupation classes:
\[
    \mathrm{AUROC}_{\mathrm{macro}}
    =
    \frac{1}{|\mathcal Y|}
    \sum_{y\in\mathcal Y}
    \mathrm{AUROC}\!\left(
        \mathbf 1\{Y=y\},
        Q_\theta(y\mid X,Z)
    \right).
\]
Thus, binary AUROC measures ranking quality for a single positive class, whereas macro-AUROC measures average one-vs-rest ranking quality across classes. The CMI-MI plots evaluate the information-theoretic frontier, whereas the Accuracy/AUROC-EO plots evaluate whether information-plane improvements transfer to deployment metrics. In the multi-class ACS experiment, we also report expected randomized accuracy and top-1 accuracy of the induced hard classifier. See Appendix~\ref{append:metrics} for formal definitions of all metrics.

\textbf{Randomized-policy and deterministic-policy predictions.}
For randomized policies, we use
\[
    \hat Y \sim Q_\theta(\cdot\mid X,Z),
\]
where $Q_\theta(\cdot\mid X,Z)$ is the predicted class-probability vector. Information quantities such as $I(\hat Y;Y)$ and $I(\hat Y;Z\mid Y)$ are computed from the empirical
soft-count joint law
\[
    \widehat P(\hat Y,y,z)
    =
    \frac1n\sum_{i=1}^n
    Q_\theta(\hat Y\mid x_i,z_i)
    \mathbf 1\{y_i=y,z_i=z\}.
\] The induced deterministic classifier is
\[
    \hat Y_{\mathrm{hard}}
    =
    \arg\max_y Q_\theta(y\mid X,Z).
\]
For hard-decision information quantities, we replace
\(Q_\theta(\cdot\mid x_i,z_i)\) by the one-hot vector
\(\mathbf 1\{\hat Y_{\mathrm{hard},i}=\cdot\}\). For binary tasks, deterministic hard decisions are obtained by thresholding the predicted positive-class score; the threshold is selected by the deterministic post-processing rule described below. AUROC is reported for the soft prediction distribution, since AUROC is a ranking metric and less meaningfully evaluated from one-hot hard labels.

Finally, we note that randomized-policy predictions are distinct from the external revealed randomization used in the theoretical randomized frontier. Each trained model $Q_\theta$ defines a single predictor and is evaluated as one point on the empirically approximated deterministic Pareto frontier. By contrast, the randomized frontier in our theoretical characterization is obtained by time-mixing among distinct predictors using an independent revealed random variable.

\textbf{Baselines.}
We compare CMI against representative baselines from four algorithmic families:
(1) constraint-based methods: \emph{EG Reductions} (ExpGrad)~\cite{agarwal2018reductions};
(2) adversarial methods: \emph{Adversarial Debiasing} (Adversarial)~\cite{madras2018learning} and \emph{Fair Dummy} (FairDummy)~\cite{romano2021fair};
(3) information-theoretic or representation-level proxies: \emph{FR-Train}~\cite{roh2020frtrain} and \emph{InfoFair}~\cite{kang2022infofair};
and (4) robustness-based methods: \emph{FairDRO}~\cite{fairdro2024}.
All neural methods use a unified two-hidden-layer MLP backbone. We report both randomized policies, which evaluate posterior-level separation, and deterministic policies, which evaluate the induced decision rule. See Appendix~\ref{appen:methods} for method details and the rationale behind our choice of comparison methods.

\textbf{Fair comparison protocol.}
To isolate the effect of the regularization objective, we standardize the training pipeline. All neural methods use the same MLP architecture, optimizer family, number of epochs, and cross-validation splits. For randomized evaluation, we use the raw posterior probability vector. For deterministic evaluation in binary tasks, we use hard predictions obtained by a training-fold-tuned threshold $t^*$ chosen to maximize $\mathrm{Accuracy}-\lambda\cdot\mathrm{EO\ gap}$, simulating deployment-time decision selection. We report mean $\pm$ one standard deviation over five folds.

\subsection{Results and Analysis}

We analyze the Pareto frontiers generated by tuning the fairness trade-off parameter for each method. We begin with Bank and CelebA, which highlight two different experimental focuses: a rapidly increasing price of separation on Bank and high-dimensional image input features on CelebA. We then discuss Adult and COMPAS as consistency checks across additional tabular domains, before turning to the ACSOccupation experiment with multi-class targets and multi-group sensitive attributes.

\subsubsection{Bank Dataset: Frontier Shape and Operational Transfer}
\label{subsubsec:bank}

\begin{figure*}[t]
    \centering
    \setlength{\tabcolsep}{1pt}
    
    \subfloat{\includegraphics[width=0.49\textwidth]{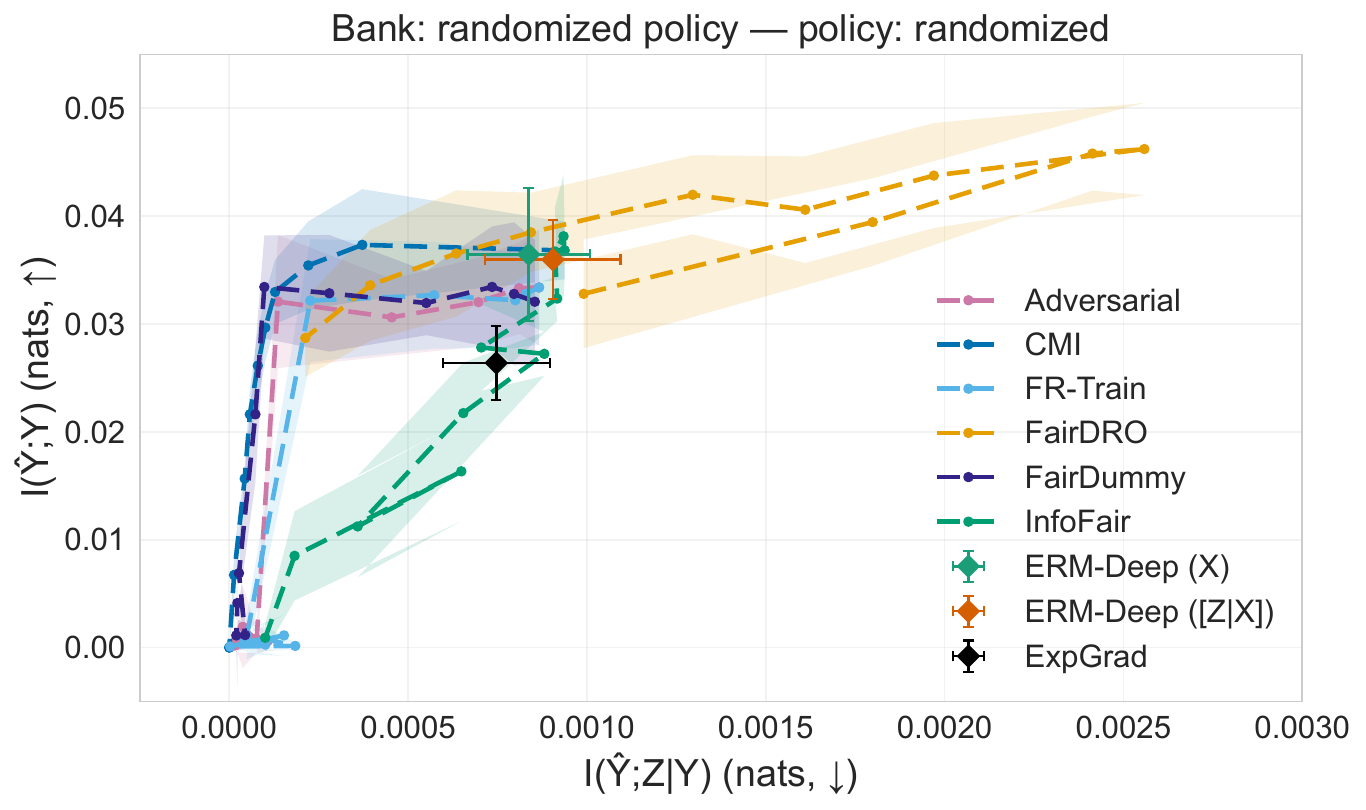}}
    \subfloat{\includegraphics[width=0.49\textwidth]{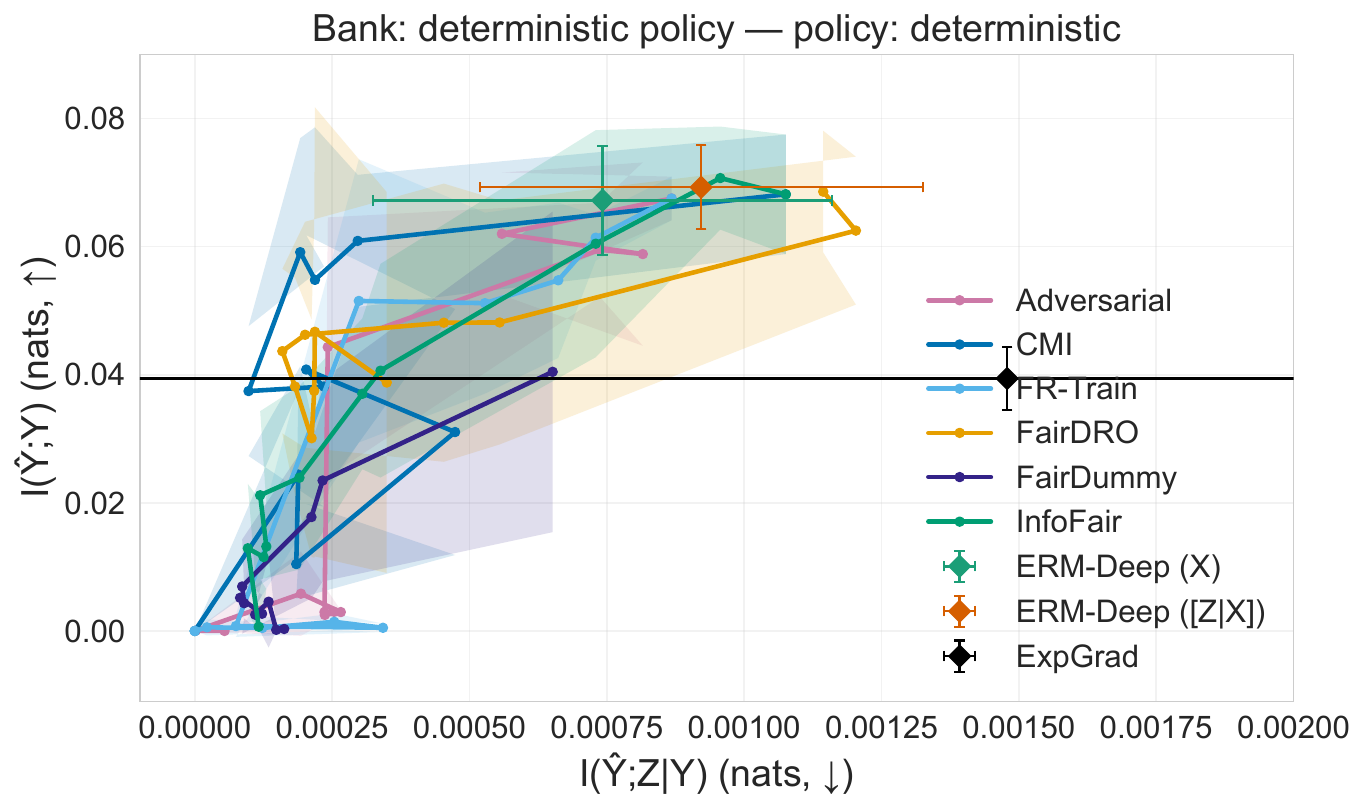}} \\
    \subfloat{\includegraphics[width=0.49\textwidth]{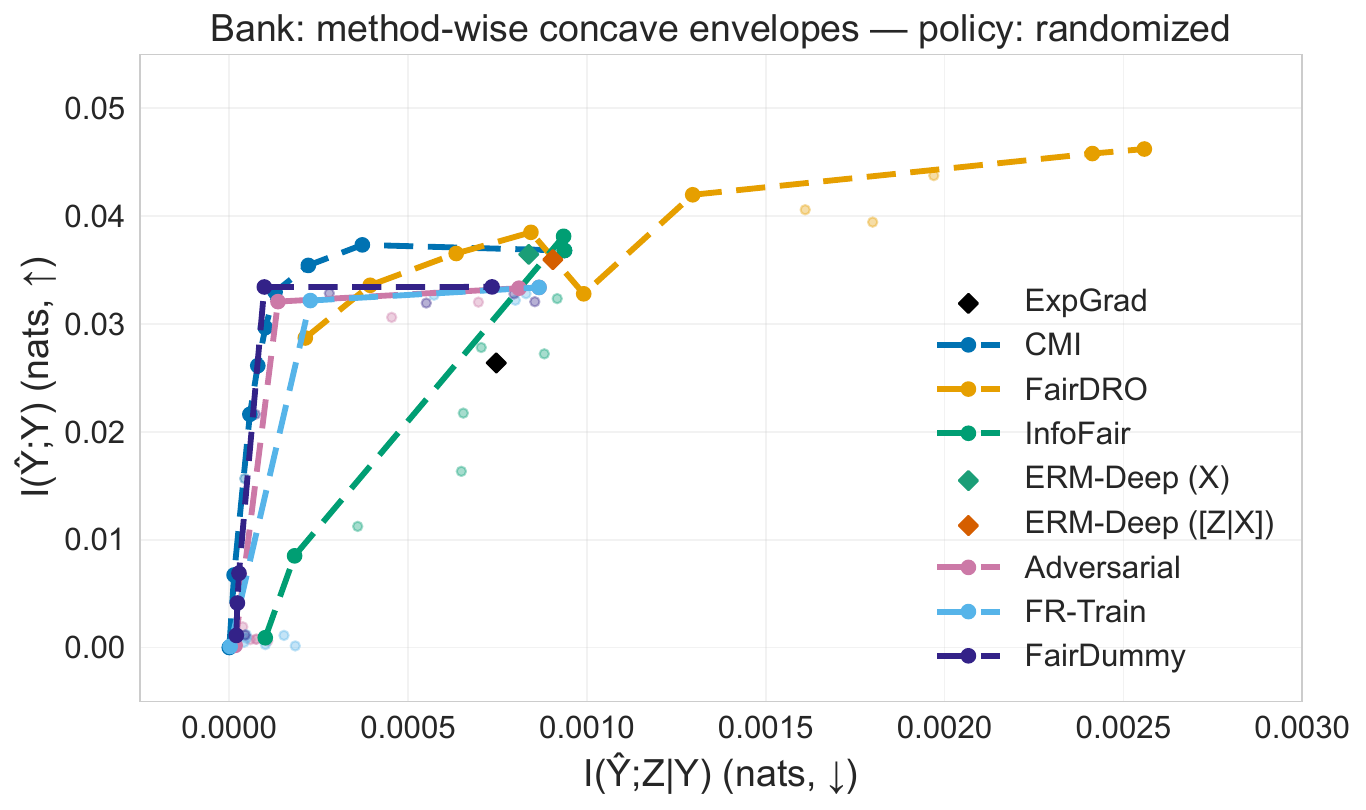}}
    \subfloat{\includegraphics[width=0.49\textwidth]{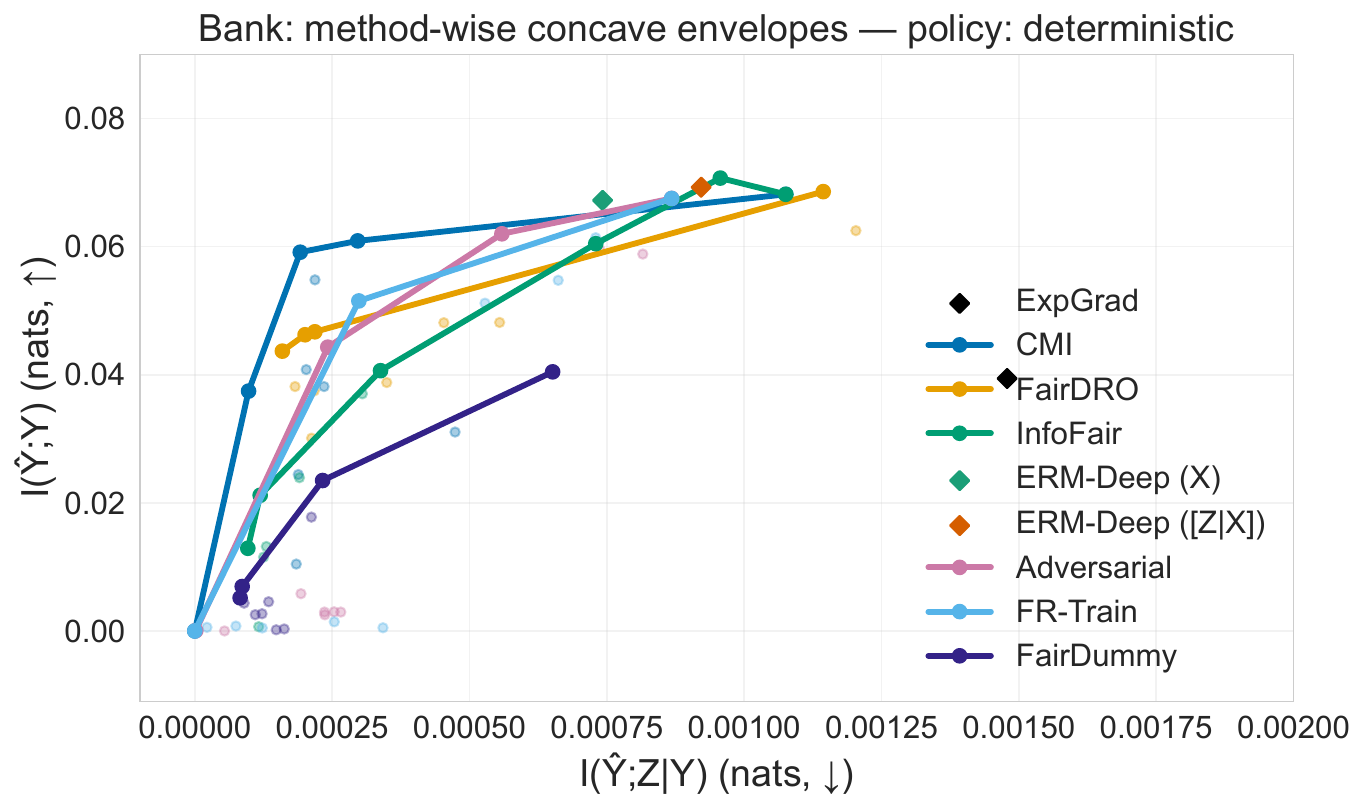}} \\
    \subfloat{\includegraphics[width=0.49\textwidth]{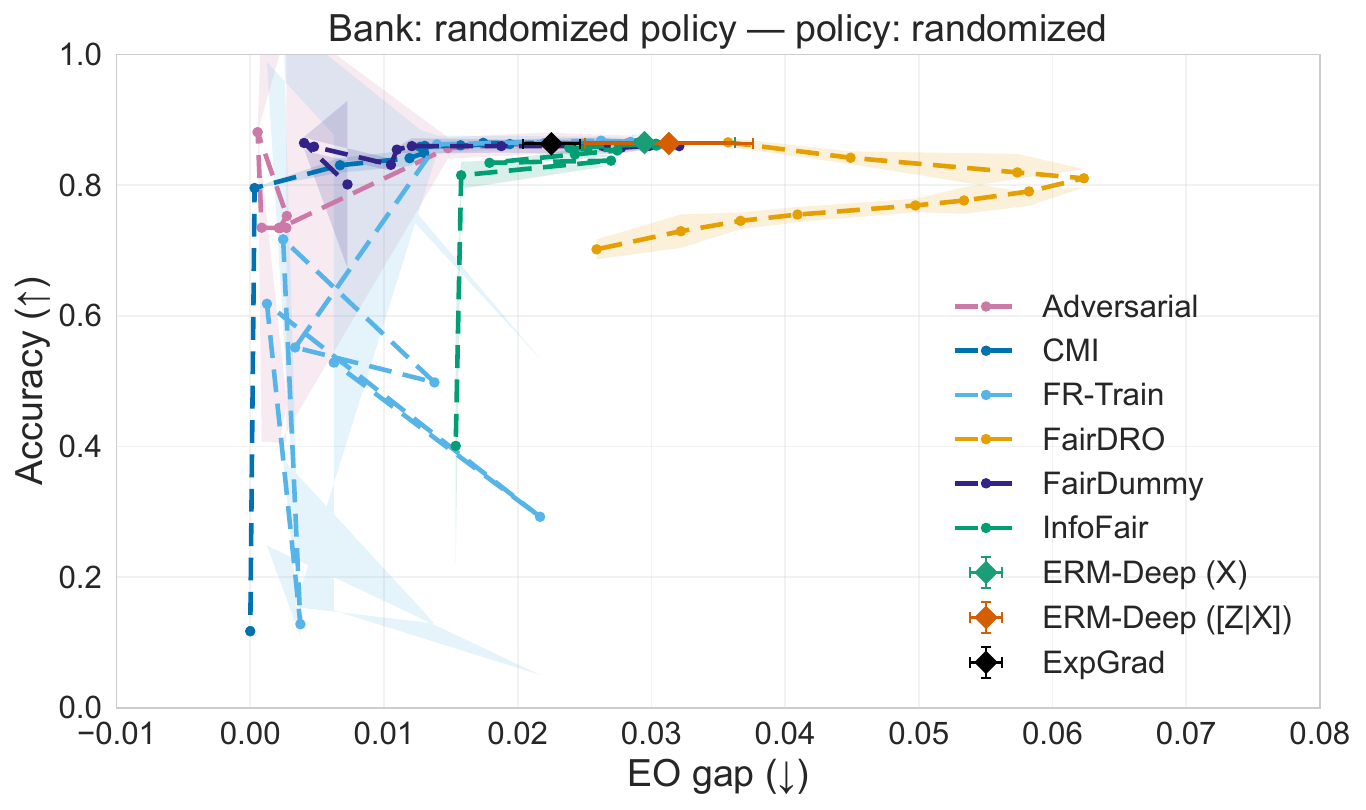}}
    \subfloat{\includegraphics[width=0.49\textwidth]{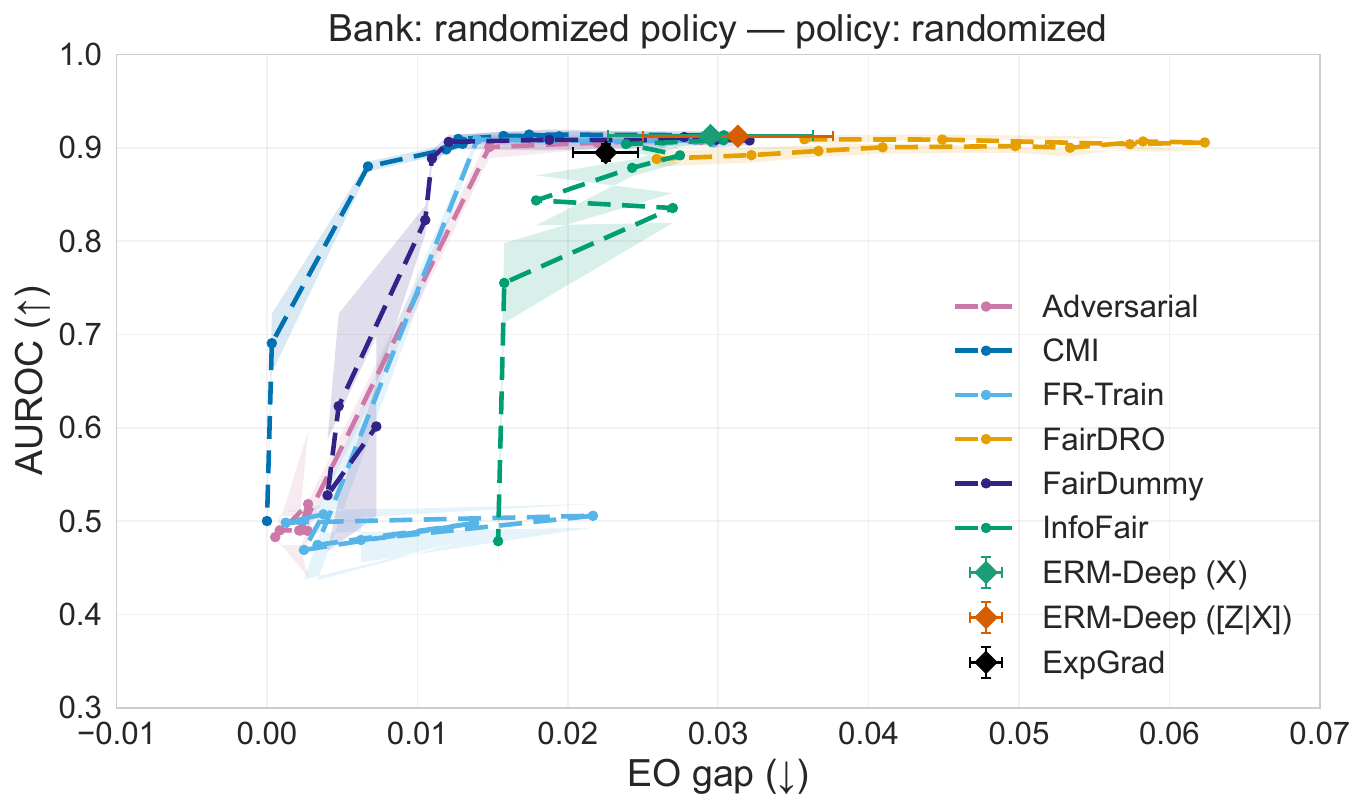}}
    
    \caption{\textbf{Bank Results: Frontier Shape, Method-wise Concave Envelopes, and Operational Transfer.}
    \textbf{Top:} Raw information-plane frontier approximations for posterior-level predictions (left) and deterministic predictions (right).
    \textbf{Middle:} Method-wise upper concave envelopes of the observed information-plane points, summarizing empirical frontiers achievable by revealed randomization among trained predictors of the same method.
    \textbf{Bottom:} Accuracy and AUROC as functions of EO gap, evaluating whether the information-plane frontier approximation translates to deployment metrics.}
    \label{fig:Bank_combined}
    \vspace{-10pt}
\end{figure*}

Figure~\ref{fig:Bank_combined} shows that Bank has a sharply nonuniform separation-utility trade-off. The raw information-plane frontier approximations reveal a steep ``knee'' as $I(\hat Y;Z\mid Y)$ approaches zero, indicating a rapidly increasing marginal cost of separation in the strict-separation regime. When the separation violation is not too small, high utility can be maintained with little additional cost. However, as the violation approaches zero, further reduction becomes much more expensive. Thus, exact or near-exact separation is costly, while permitting a very small violation recovers near-saturated utility.

We organize the Bank results around two points.

\begin{itemize}[leftmargin=*]
    \item \textbf{Frontier shape and empirical envelopes.}
    The proposed plug-in CMI regularizer provides a smooth traversal of the high-utility, low-violation region. In contrast, Adversarial, FR-Train, and FairDummy reduce utility near the unconstrained ERM $([Z|X])$ endpoint. FairDRO exhibits backtracking, while InfoFair shows utility collapse in parts of the path. The middle row further reports the method-wise upper concave envelopes of the observed information-plane points for posterior-level predictions on the left and deterministic predictions on the right. These envelopes summarize the empirical frontiers achievable by revealed randomization among trained predictors of the same method. The relevant fairness-utility frontier is the portion moving from the unconstrained ERM endpoint toward smaller separation violation. CMI gives the clearest empirical path toward near-separation while preserving high utility in both randomized and deterministic policy settings.

    \item \textbf{Operational transfer.}
    The bottom row evaluates whether each method's information-plane frontier approximation translates to deployment metrics. CMI maintains relatively high Accuracy and AUROC as $I(\hat Y;Z\mid Y)$ approaches zero, and this behavior transfers effectively to the low-EO-gap region. In contrast, other baseline methods show unstable or non-monotone behavior when transferring the information-theoretic frontier traversal to the Accuracy-EO-gap plane, and they incur larger AUROC degradation to achieve comparable EO-gap reduction. This further supports the practical relevance of directly controlling $I(\hat Y;Z\mid Y)$ as a separation violation.
\end{itemize}

\subsubsection{CelebA Dataset: Scalability and Posterior Separation}
\label{subsubsec:celeba}

\begin{figure*}[t]
    \centering
    \setlength{\tabcolsep}{1pt}
    
    \subfloat{\includegraphics[width=0.49\textwidth]{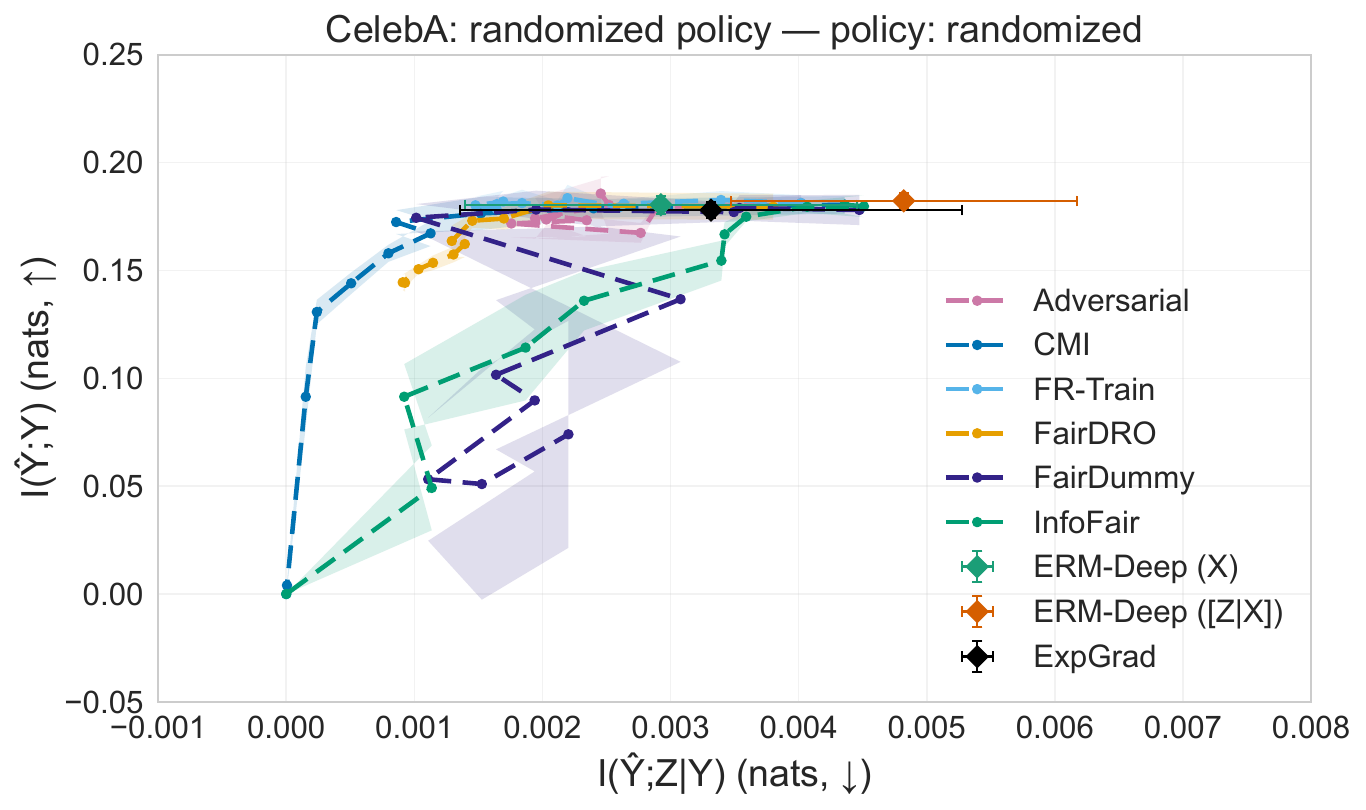}} \hfill
    \subfloat{\includegraphics[width=0.49\textwidth]{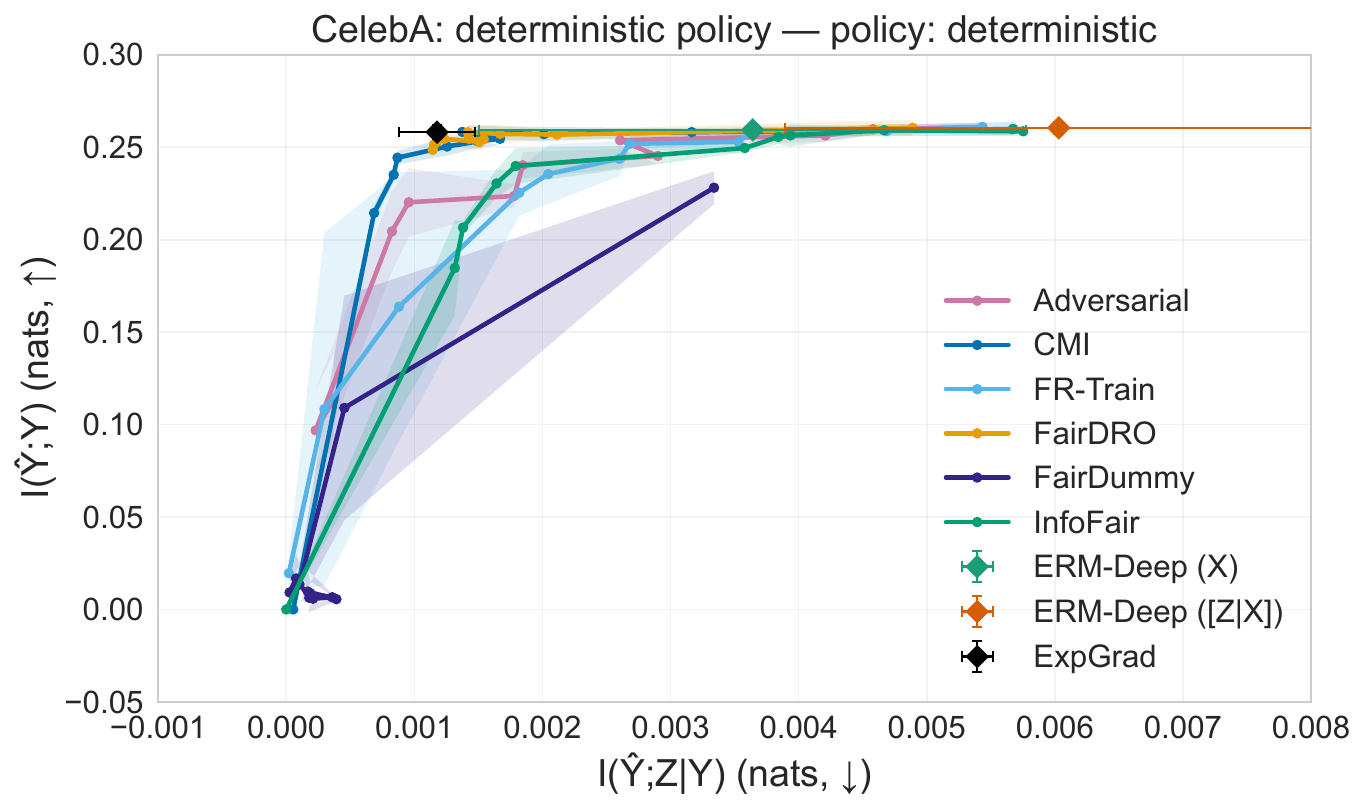}} \\
    \subfloat{\includegraphics[width=0.49\textwidth]{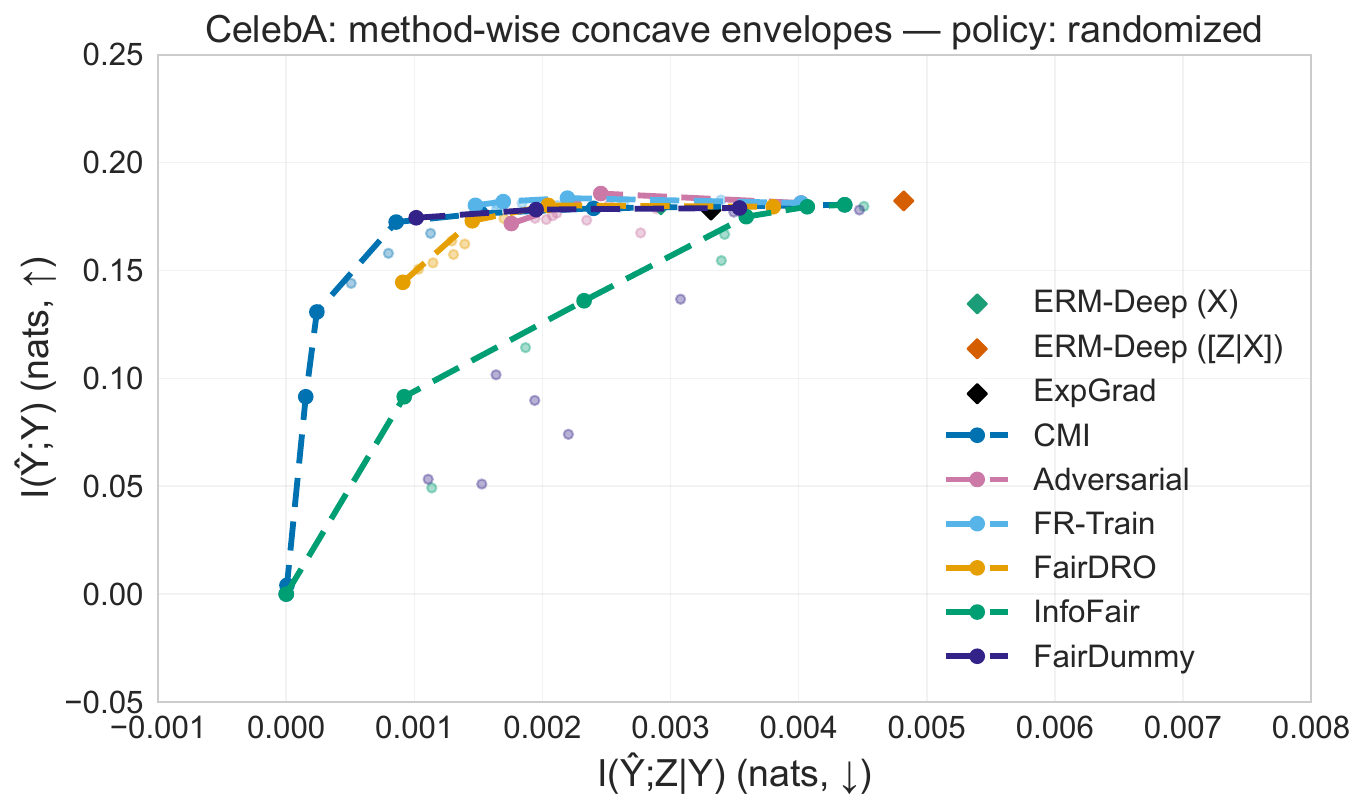}} \hfill
    \subfloat{\includegraphics[width=0.49\textwidth]{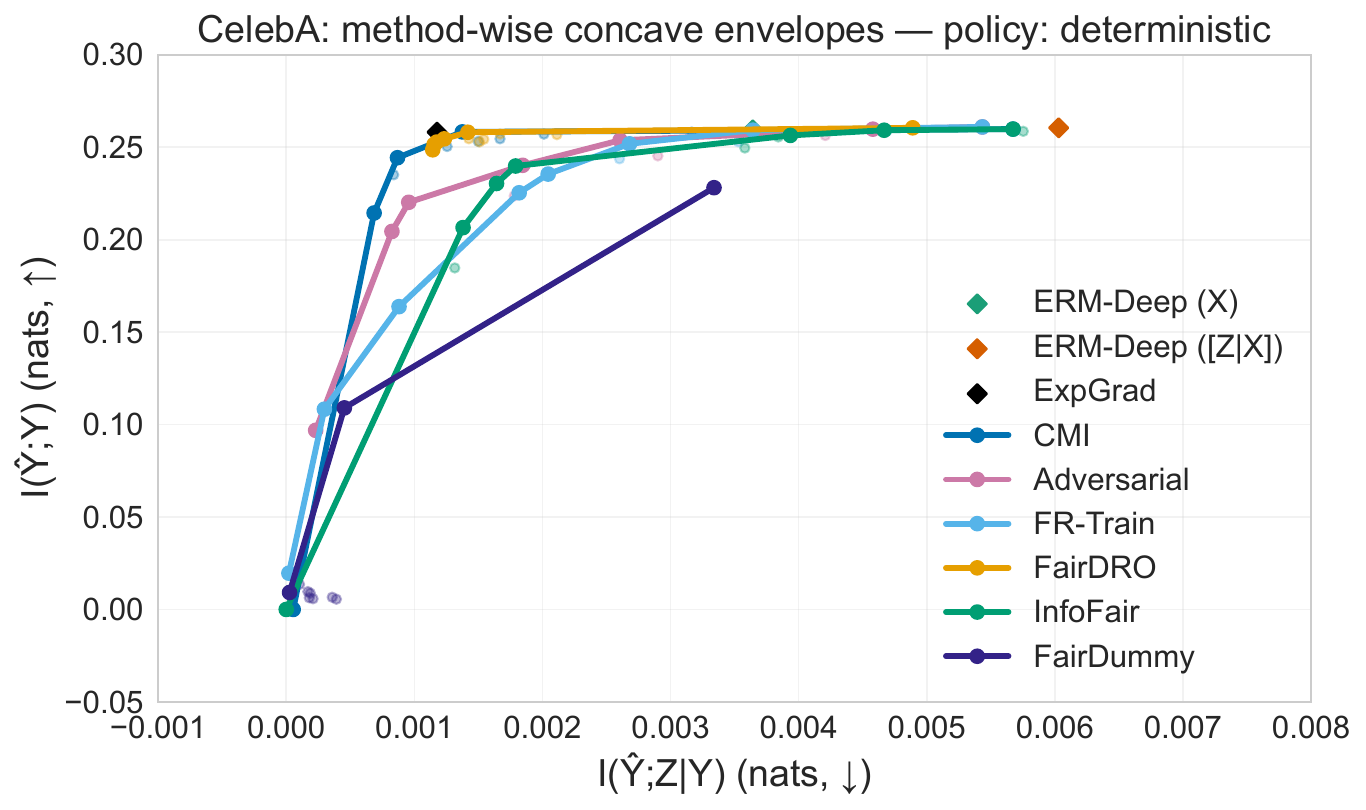}} \\
    \subfloat{\includegraphics[width=0.49\textwidth]{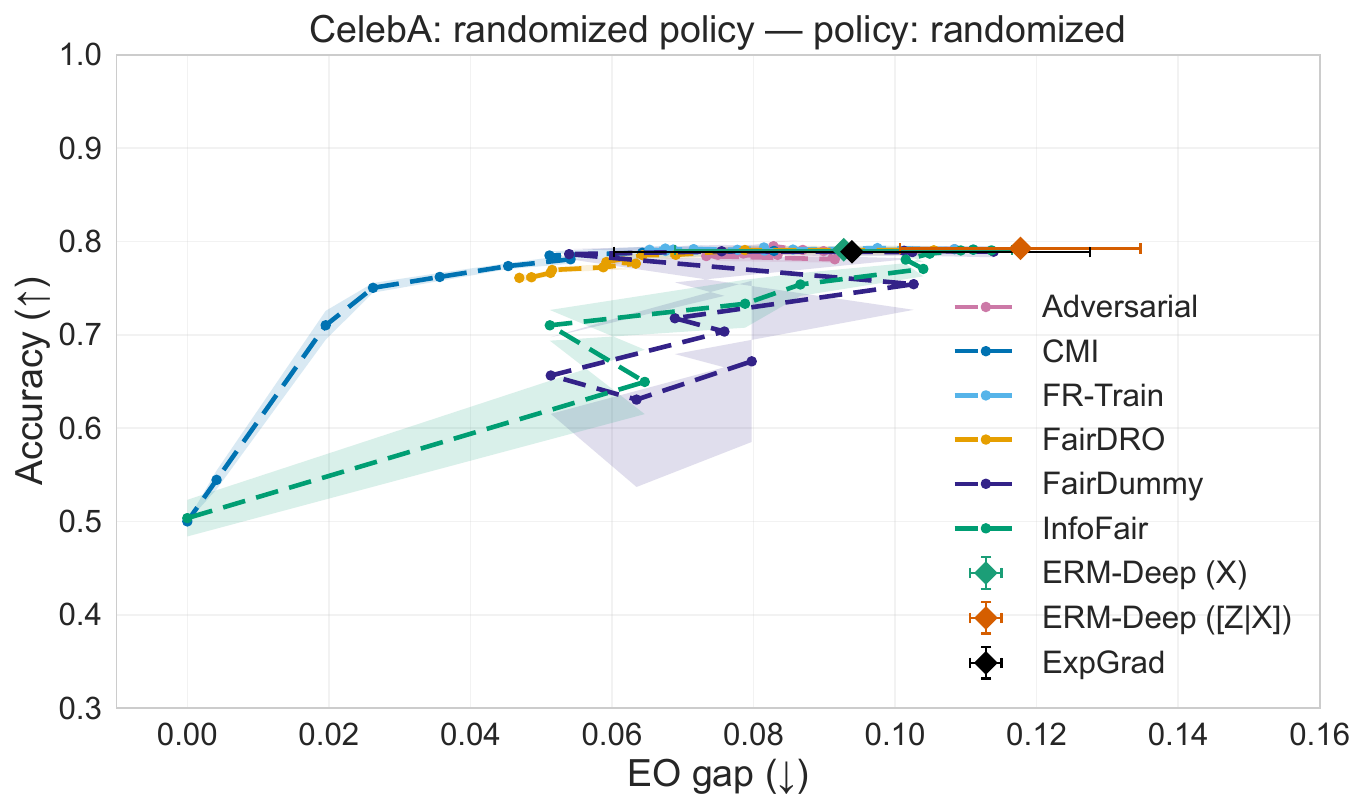}} \hfill
    \subfloat{\includegraphics[width=0.49\textwidth]{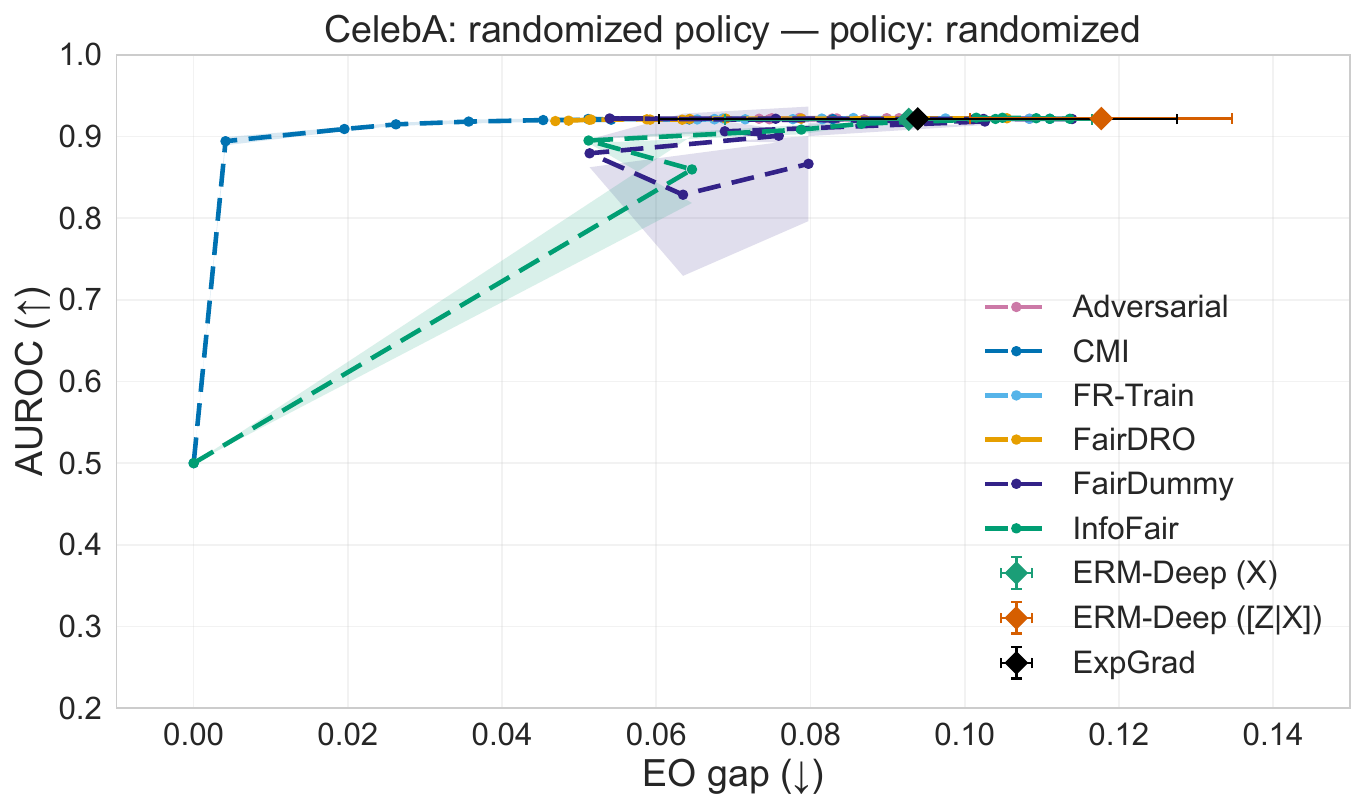}}
    
    \caption{\textbf{CelebA Results: Scalability, Empirical Envelopes, and Posterior Separation.} 
    \textbf{Top:} In the randomized-policy posterior-level view (left), the gradient-normalized CMI is able to maintain relatively high utility while reducing $I(\hat Y;Z\mid Y)$ close to zero, avoiding the range collapse, backtracking, and instability observed in other baselines on high-dimensional image-input setting. The deterministic-policy plot on the right shows that hard thresholding can make several methods appear similarly fair at high utility, even when posterior-level dependence on the sensitive attribute remains in the learned predictive distribution.
    \textbf{Middle:} We plot the method-wise upper concave envelopes of the observed information-plane points. These envelopes represent empirical frontiers achievable by revealed randomization among the trained predictors of each method.
    \textbf{Bottom:} Operational metrics mirror the posterior-level frontier. CMI maintains high Accuracy and AUROC while reducing the EO gap close to zero.}
    \label{fig:CelebA_combined}
    \vspace{-10pt}
\end{figure*}

Figure~\ref{fig:CelebA_combined} uses CelebA to test the framework on high-dimensional image inputs. While the prediction target remains binary, the optimization challenge lies in controlling separation for complex, high-dimensional input data. This benchmark evaluates each method's scalability to complex input structures, distinguishing methods that remain effective on image input data from those that degrade under high input dimensionality.

\begin{itemize}[leftmargin=*]
    \item \textbf{Scalability to high-dimensional input.}
    In the randomized/posterior-level information plane (first and second rows, left), CMI traces a stable Pareto-frontier path and reaches high utility with small $I(\hat Y;Z\mid Y)$. It avoids the range collapse, backtracking, and utility collapse observed in several comparison baselines. This provides empirical evidence that, unlike adversarial or variational objectives that can struggle in high-dimensional input settings such as CelebA, the direct CMI estimator provides a stable and well-conditioned training signal for posterior-level separation-utility frontier approximation.

    \item \textbf{Masking effect.} The discrepancy between the randomized-policy (posterior-level) frontiers (first and second rows, left) and deterministic-policy frontiers (first and second rows, right) is most pronounced on CelebA. Several baselines fail to reduce posterior-level separation violation while retaining meaningful utility, yet their deterministic-policy predictions have significantly reduced separation violation after thresholding. This indicates that hard decisions can mask posterior-level dependence on the sensitive attribute that remains in the learned predictive distribution. CMI, by contrast, directly reduces sensitive dependence in the learned predictive posterior, supporting its effectiveness for intrinsic bias mitigation in complex data modalities.

    \item \textbf{Operational transfer.}
    The operational plots (third row) confirm that the proposed CMI method's randomized-policy information-theoretic frontier traversal transfers effectively to deployment-metric trade-offs. In particular, CMI maintains high Accuracy and AUROC while reducing the EO gap close to zero.

\end{itemize}

\subsubsection{Adult and COMPAS: Necessary Trade-off and Cross-Domain Consistency}
\label{subsubsec:consistency}

\begin{figure*}[!p]
    \centering
    \setlength{\tabcolsep}{1pt}
    
    \subfloat{\includegraphics[width=0.49\textwidth]{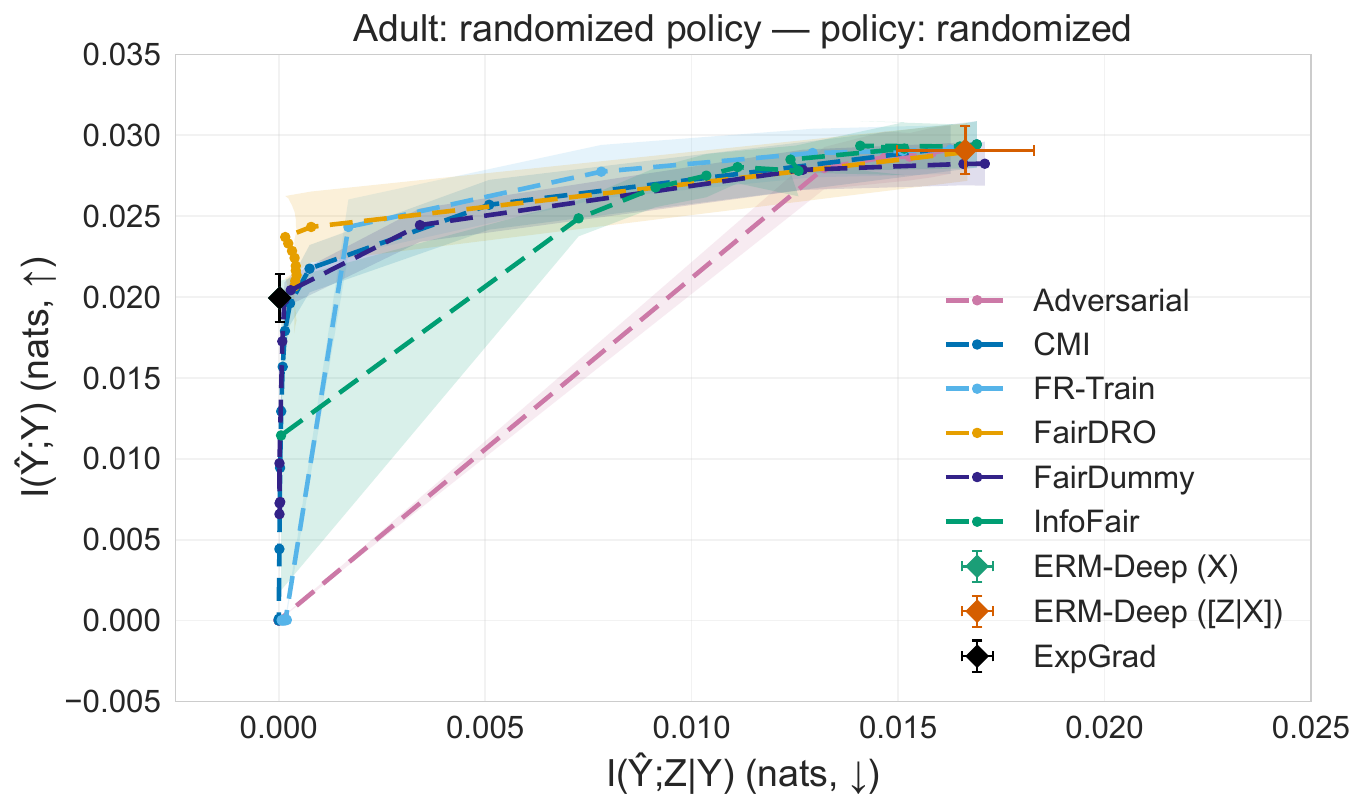}}
    \subfloat{\includegraphics[width=0.49\textwidth]{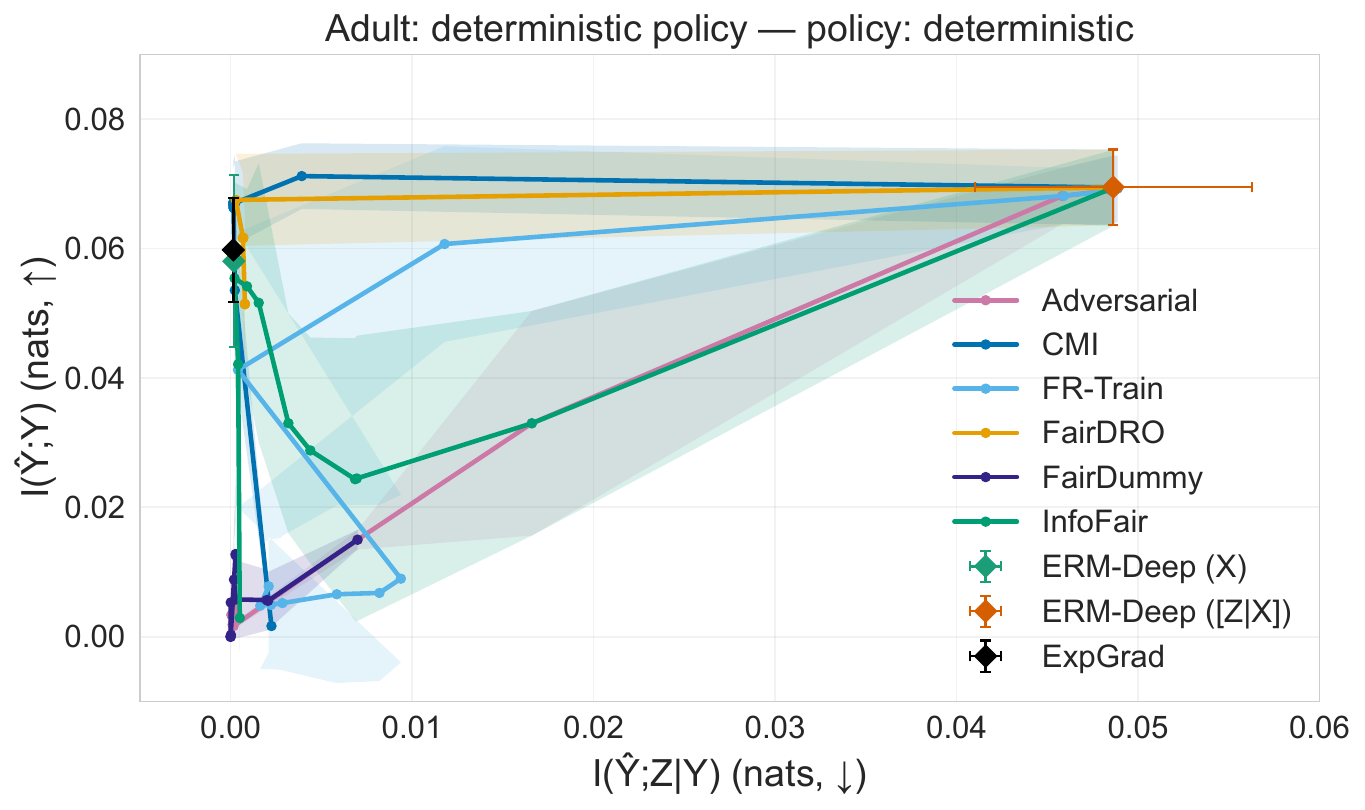}} \\
    \subfloat{\includegraphics[width=0.49\textwidth]{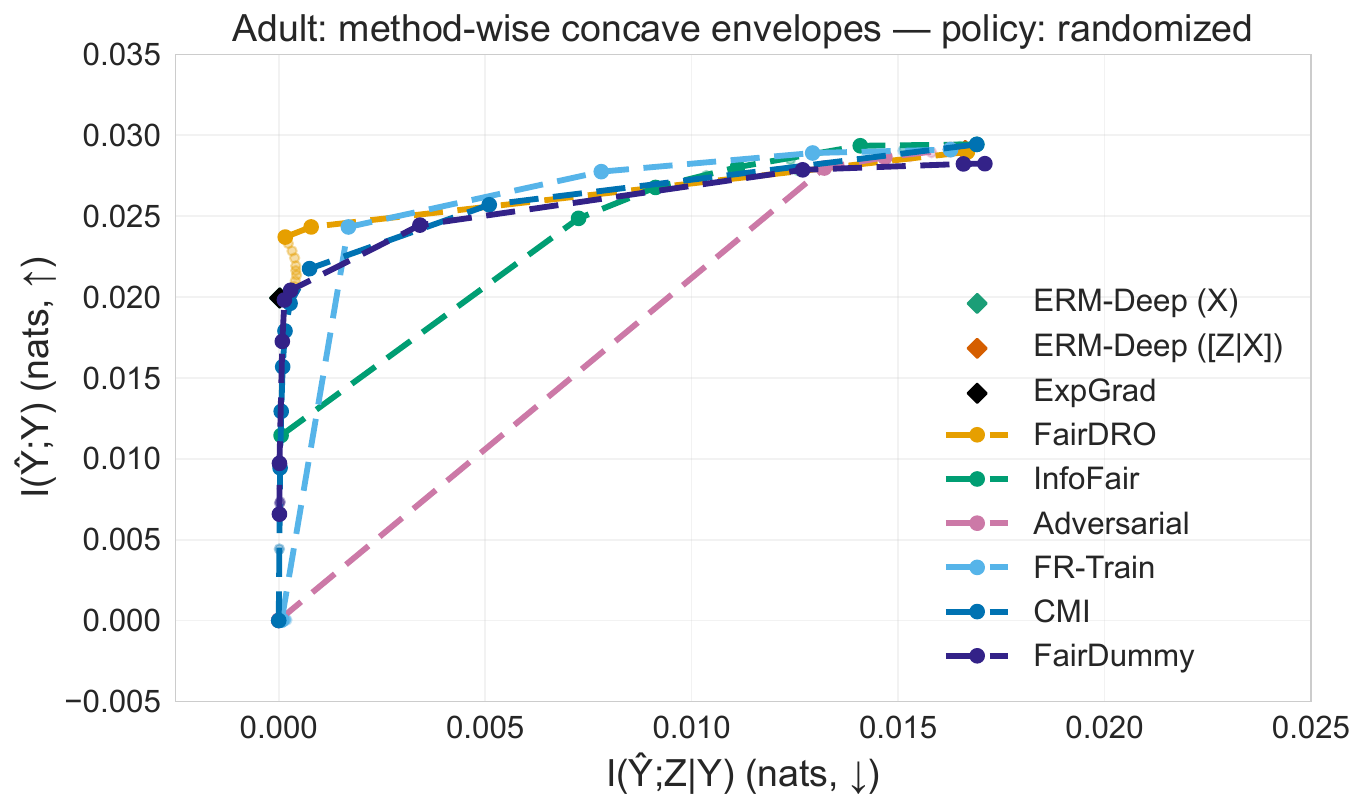}}
    \subfloat{\includegraphics[width=0.49\textwidth]{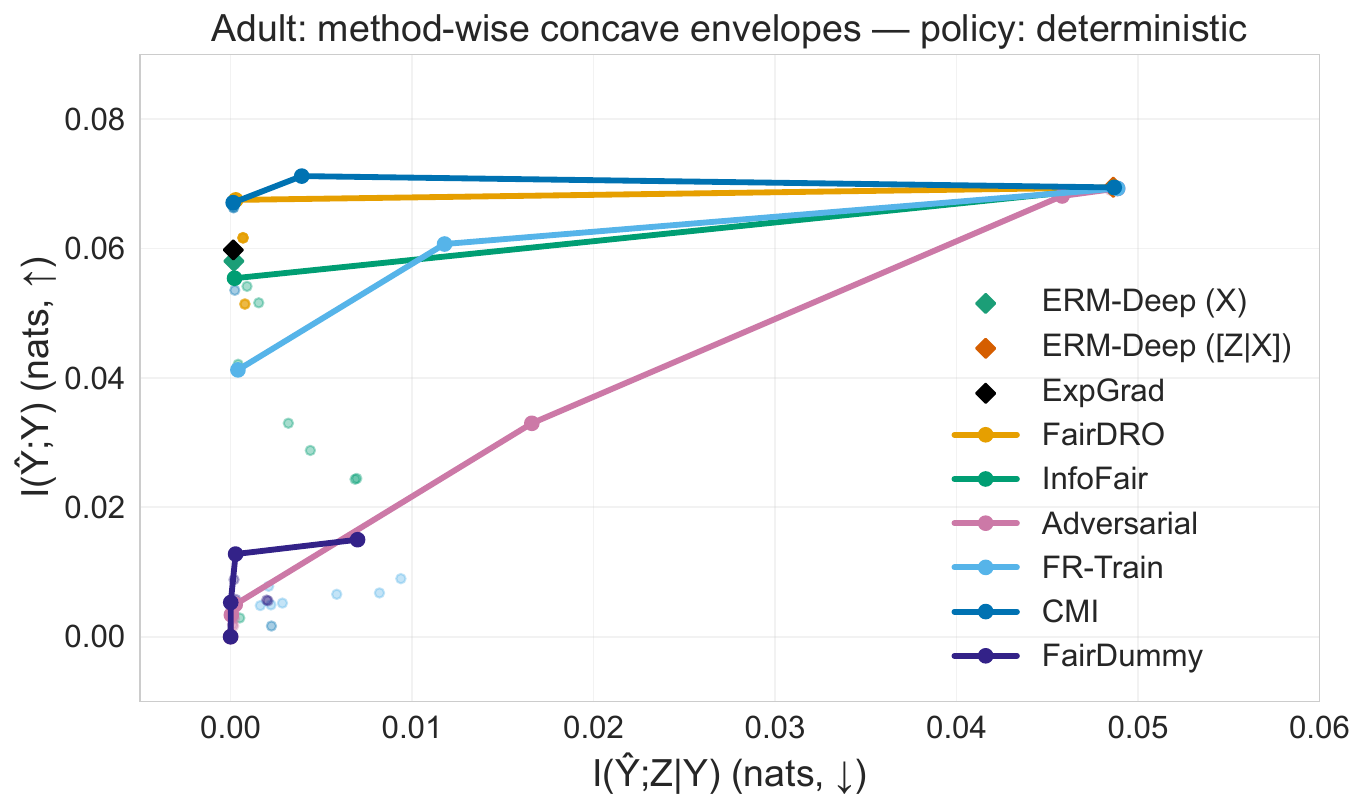}} \\
    \subfloat{\includegraphics[width=0.49\textwidth]{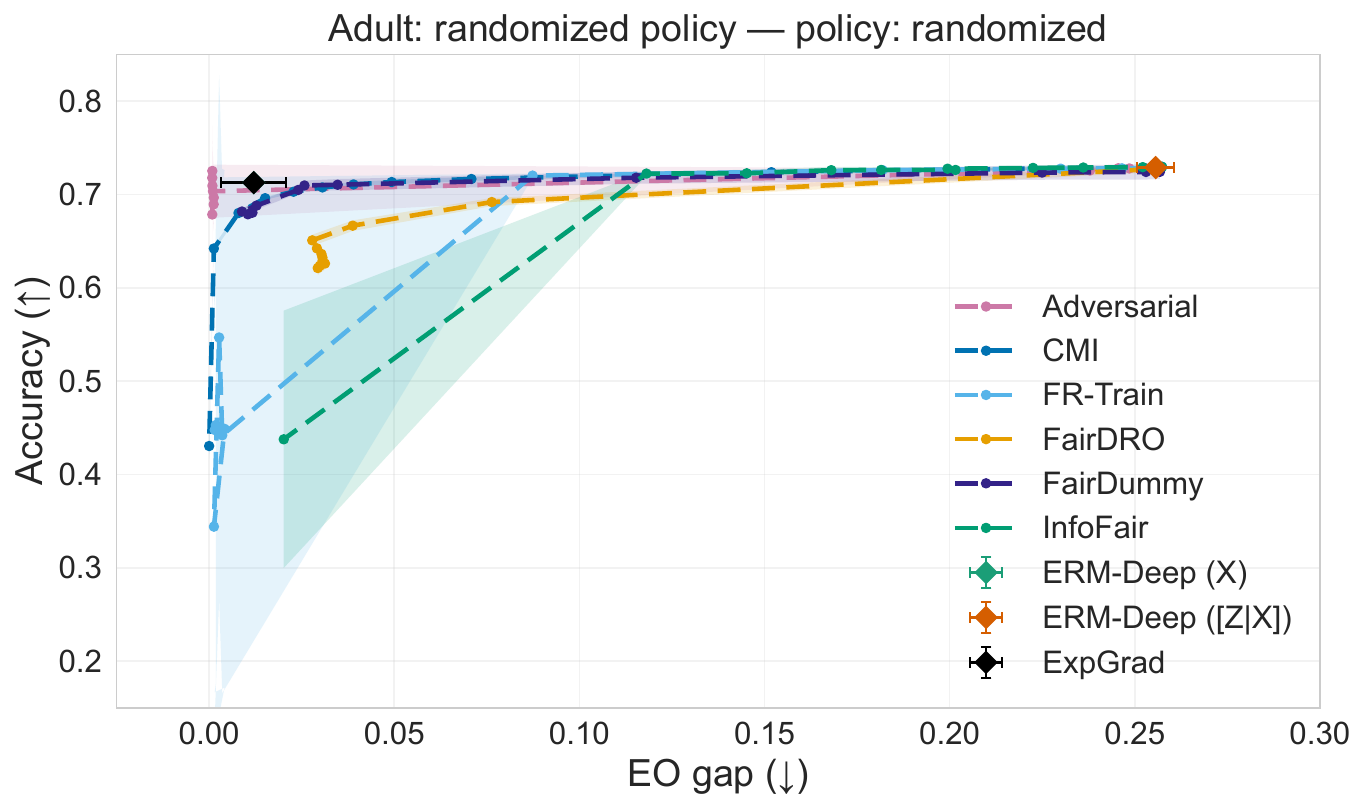}}
    \subfloat{\includegraphics[width=0.49\textwidth]{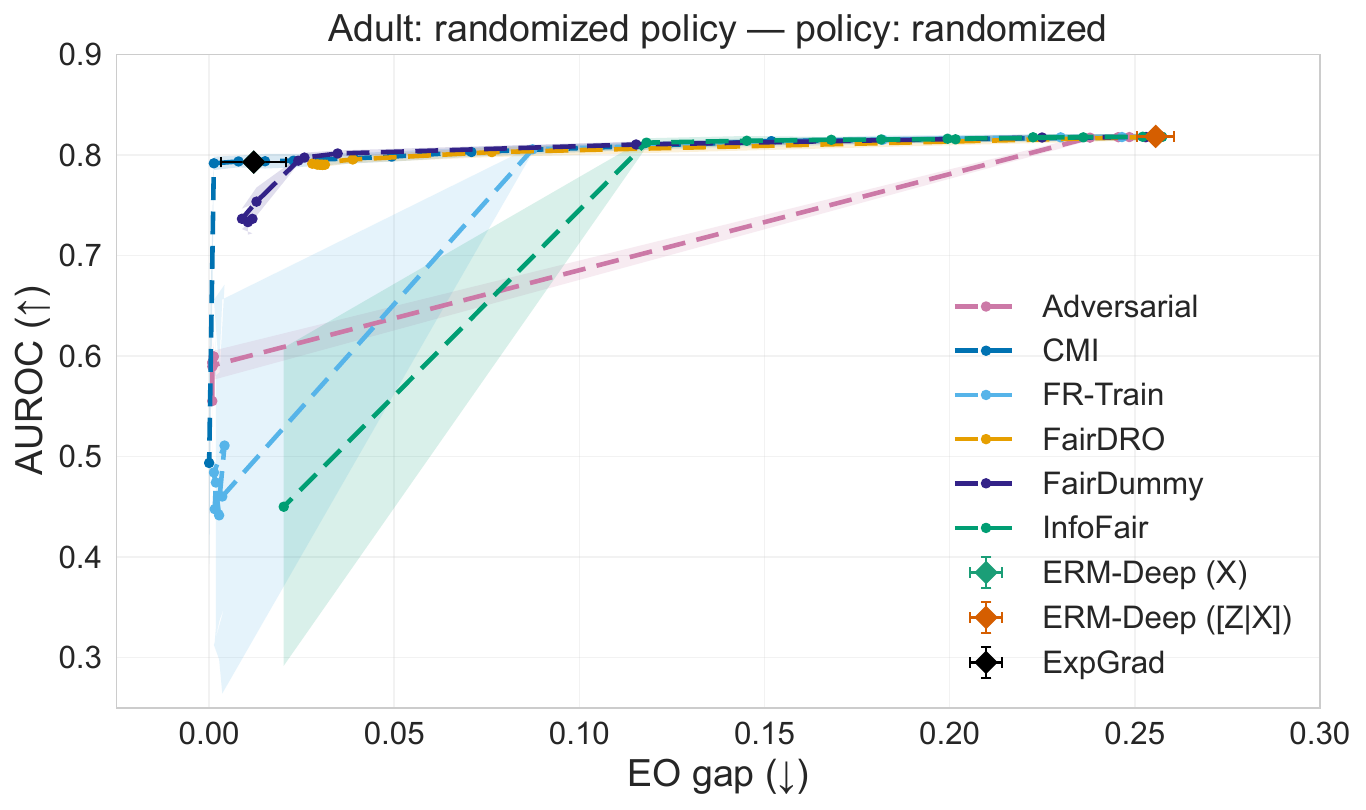}}
    \caption{\textbf{Adult Results: Low Marginal Cost, Necessary Trade-off, and Operational Transfer.}
    \textbf{Top and Middle:} Adult exhibits a relatively low marginal cost of separation compared with Bank: utility decreases gradually as $I(\hat Y;Z\mid Y)$ is reduced toward zero. That is, the marginal cost of separation increases very slowly. The randomized-policy information plane also illustrates the necessary-tradeoff mechanism in Theorem~\ref{thm:necessary-tradeoff}: the ERM-Deep $(X)$ baseline lies near the strict-separation axis, which provides empirical evidence for the condition $X \perp Z \mid Y$, while ERM-Deep $([Z|X])$ achieves higher utility only at positive separation violation. Among the comparison methods, FairDRO gives the strongest randomized-policy information-plane frontier, while CMI is strongest in the deterministic-policy frontier, both by a small margin.
    \textbf{Bottom:} The operational plots evaluate whether the information-plane frontier translates to deployment metrics. CMI shows effective transfer to the Accuracy-EO-gap and AUROC-EO-gap planes: it maintains competitive Accuracy and near-optimal AUROC at negligible EO gaps, with a stable and monotone traversal. In contrast, FairDRO's strong randomized-policy information-plane performance does not translate as effectively to the operational metrics.}
    \label{fig:Adult_combined}
\end{figure*}

\begin{figure*}[t]
    \centering
    \setlength{\tabcolsep}{1pt}
    
    \subfloat{\includegraphics[width=0.49\textwidth]{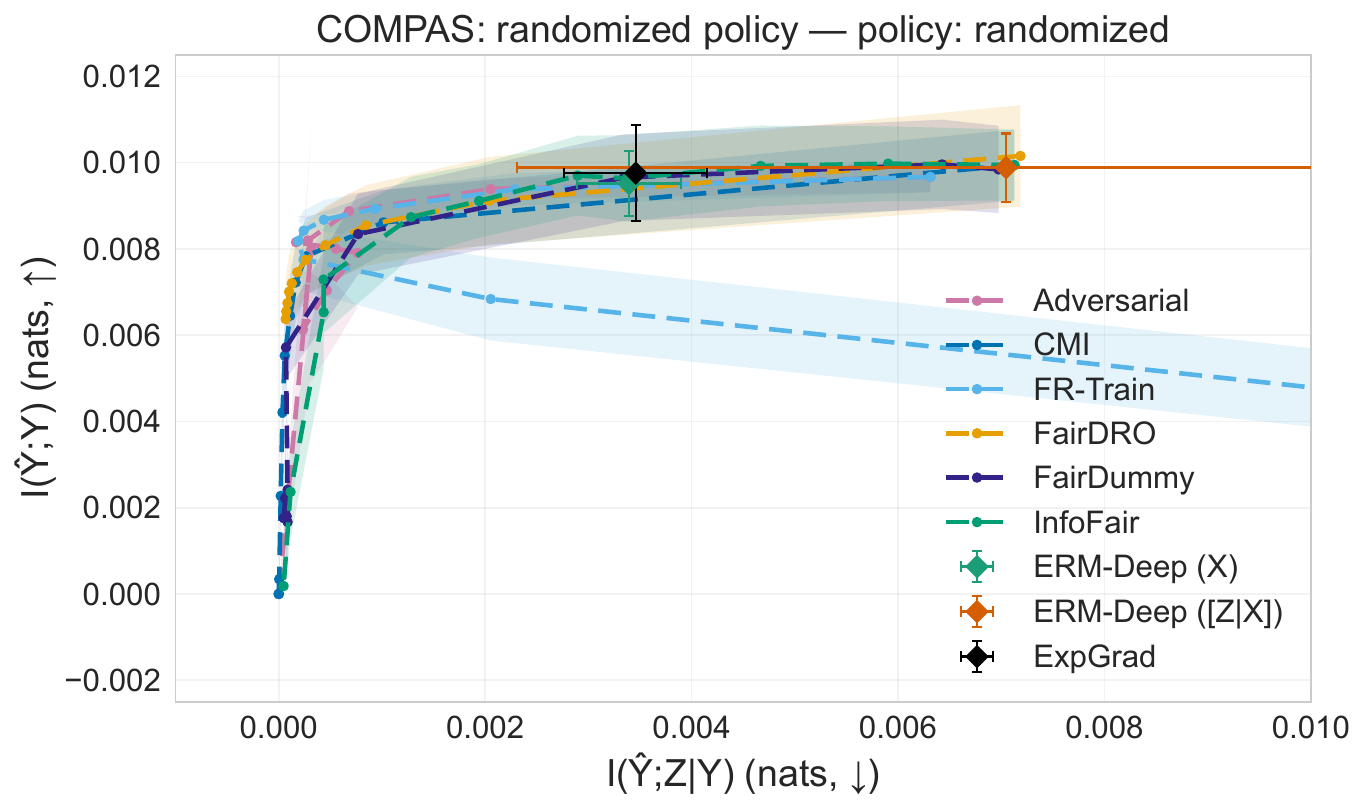}}
    \subfloat{\includegraphics[width=0.49\textwidth]{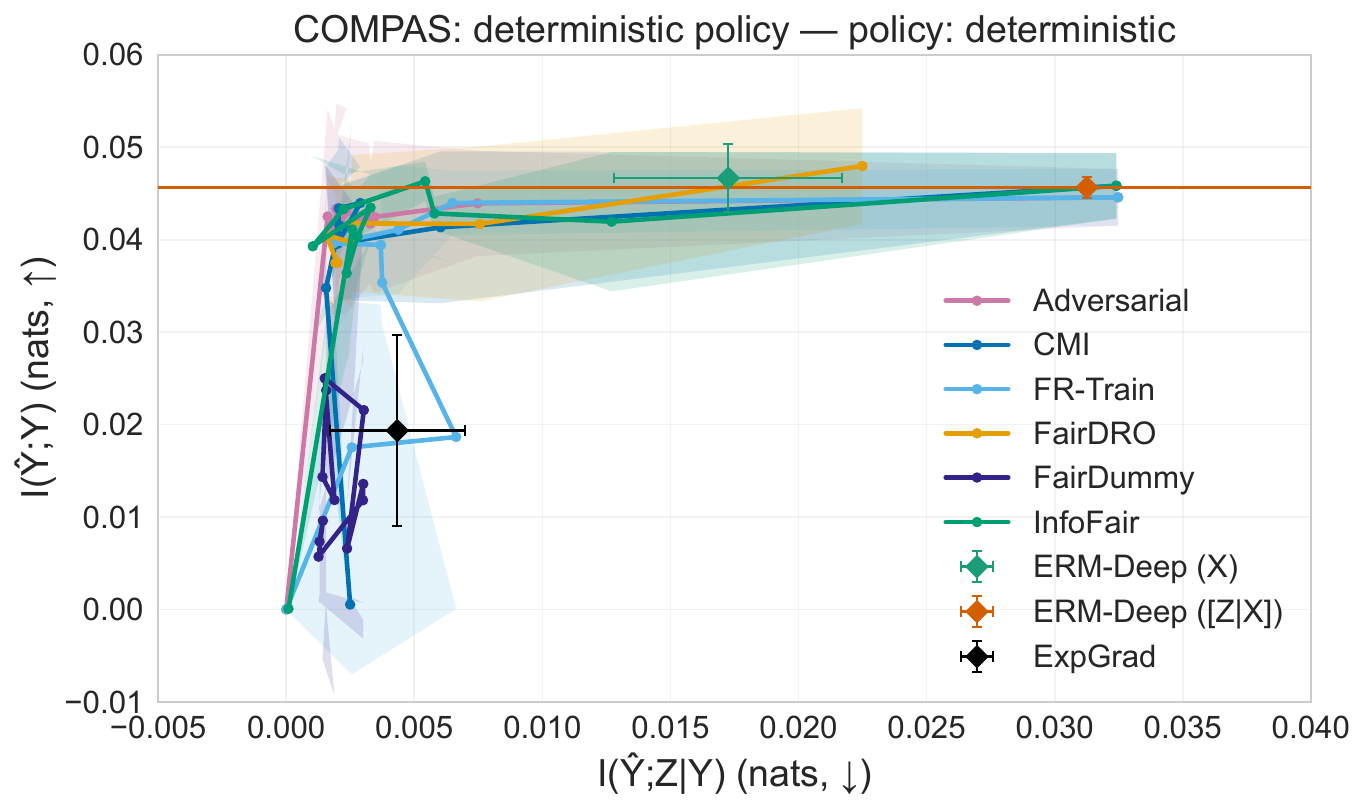}} \\
    \subfloat{\includegraphics[width=0.49\textwidth]{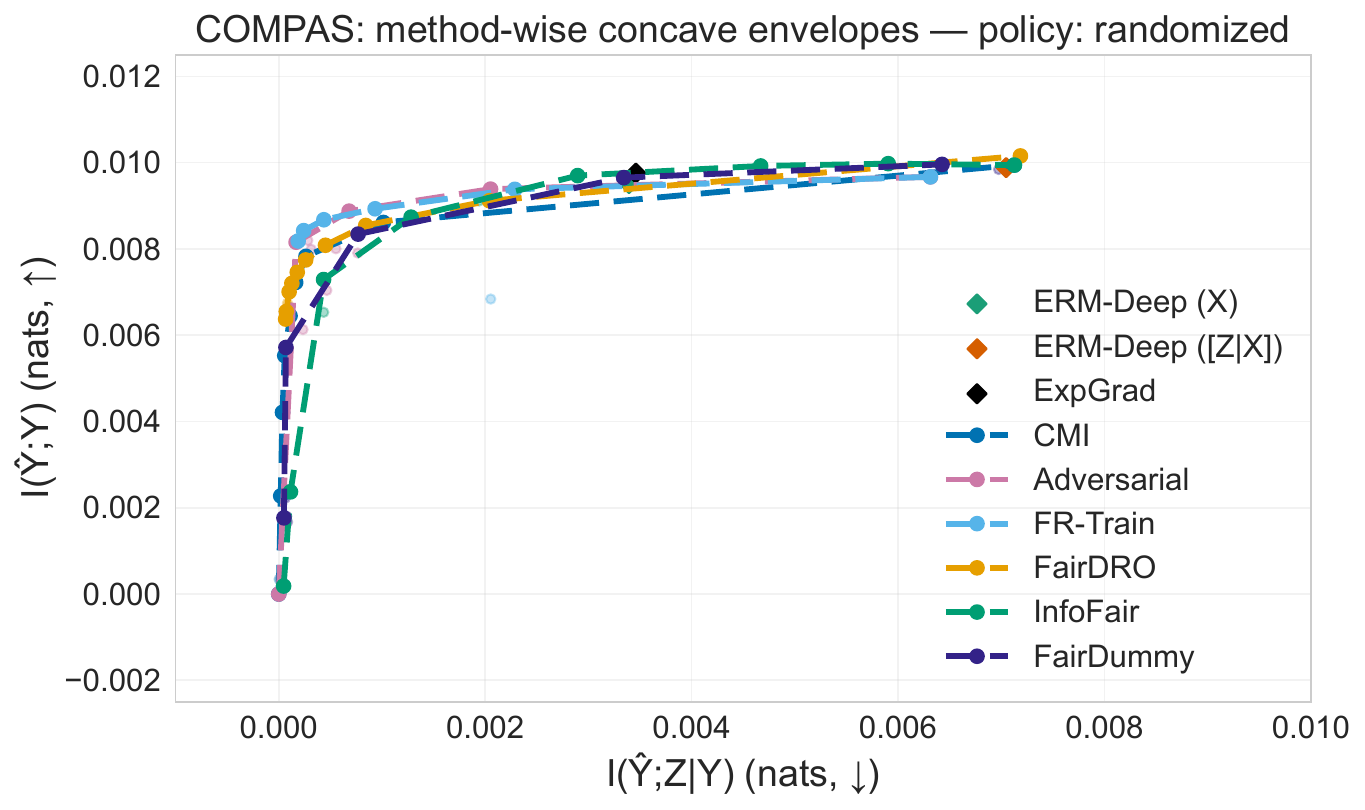}}
    \subfloat{\includegraphics[width=0.49\textwidth]{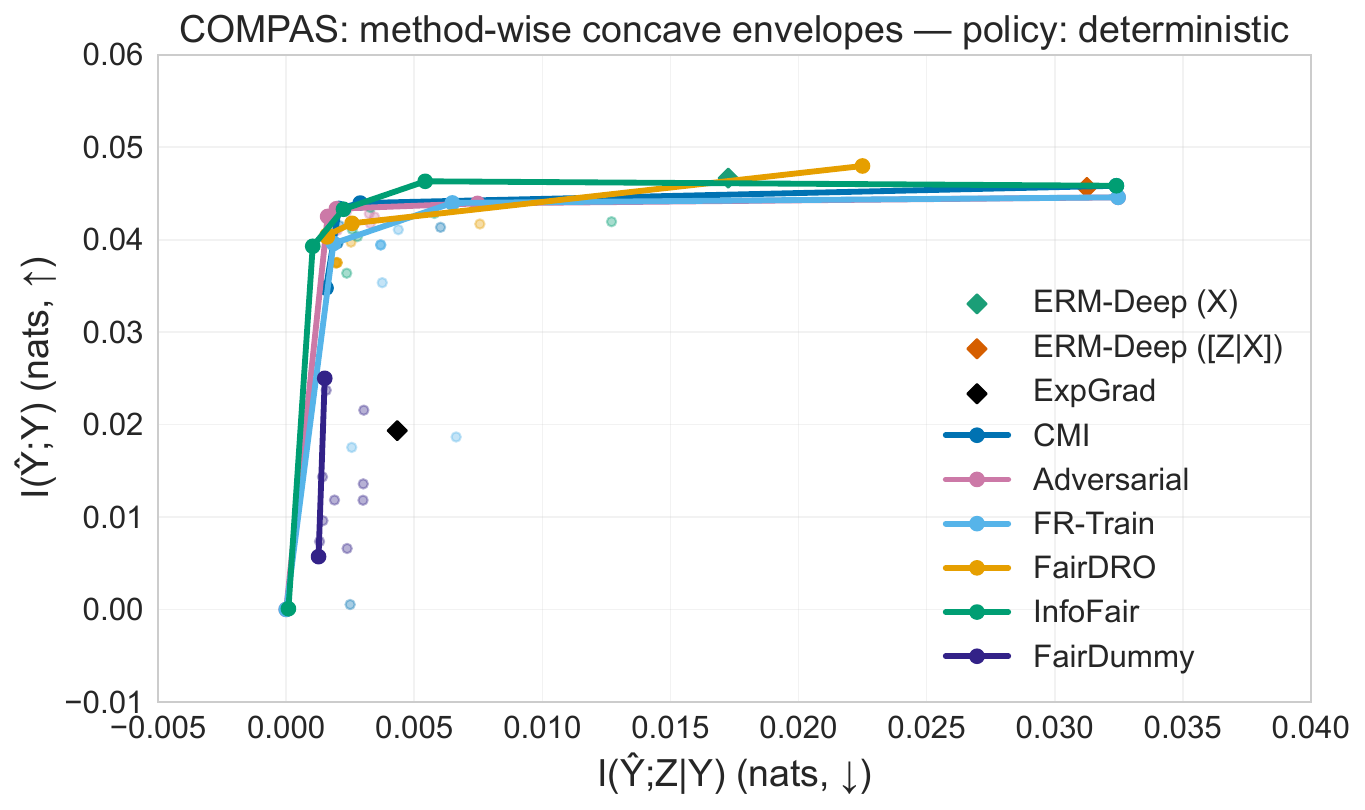}} \\
    \subfloat{\includegraphics[width=0.49\textwidth]{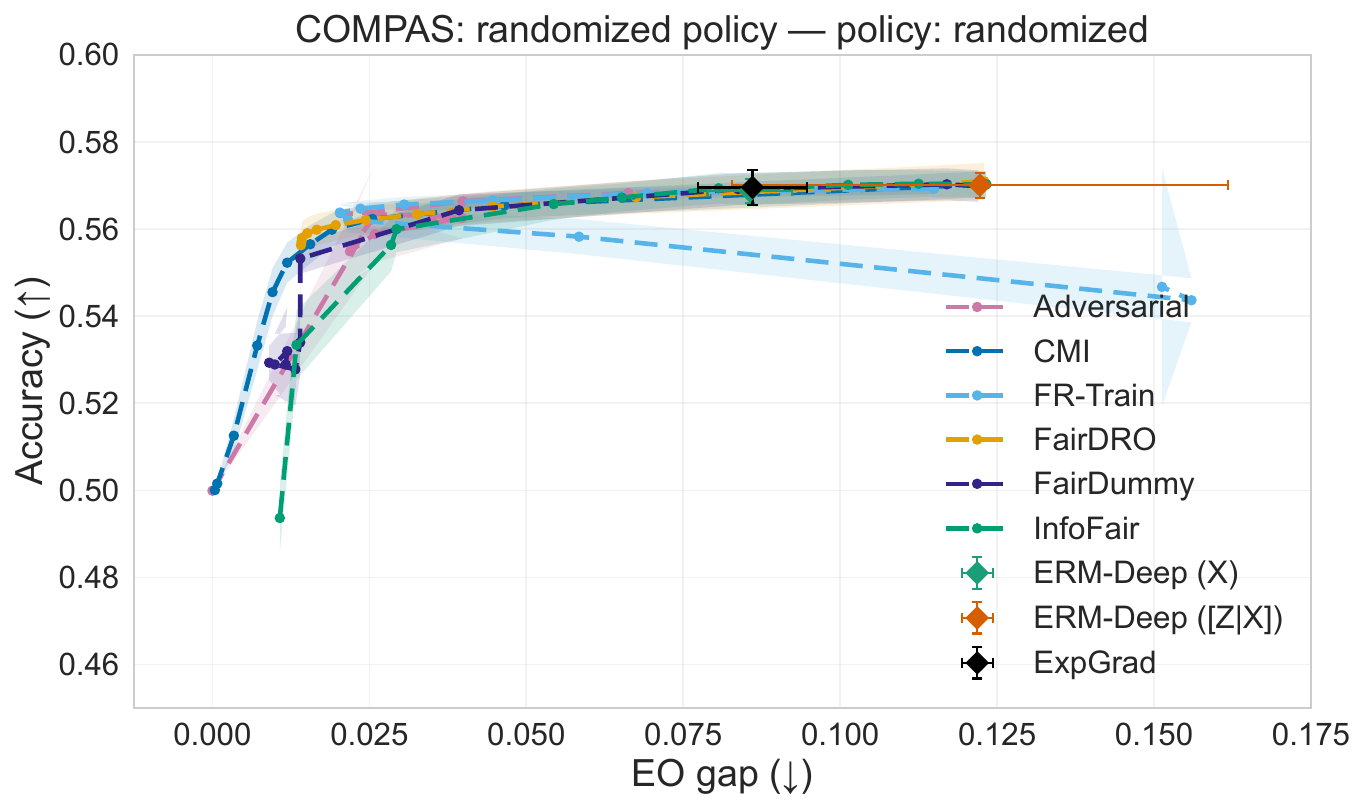}}
    \subfloat{\includegraphics[width=0.49\textwidth]{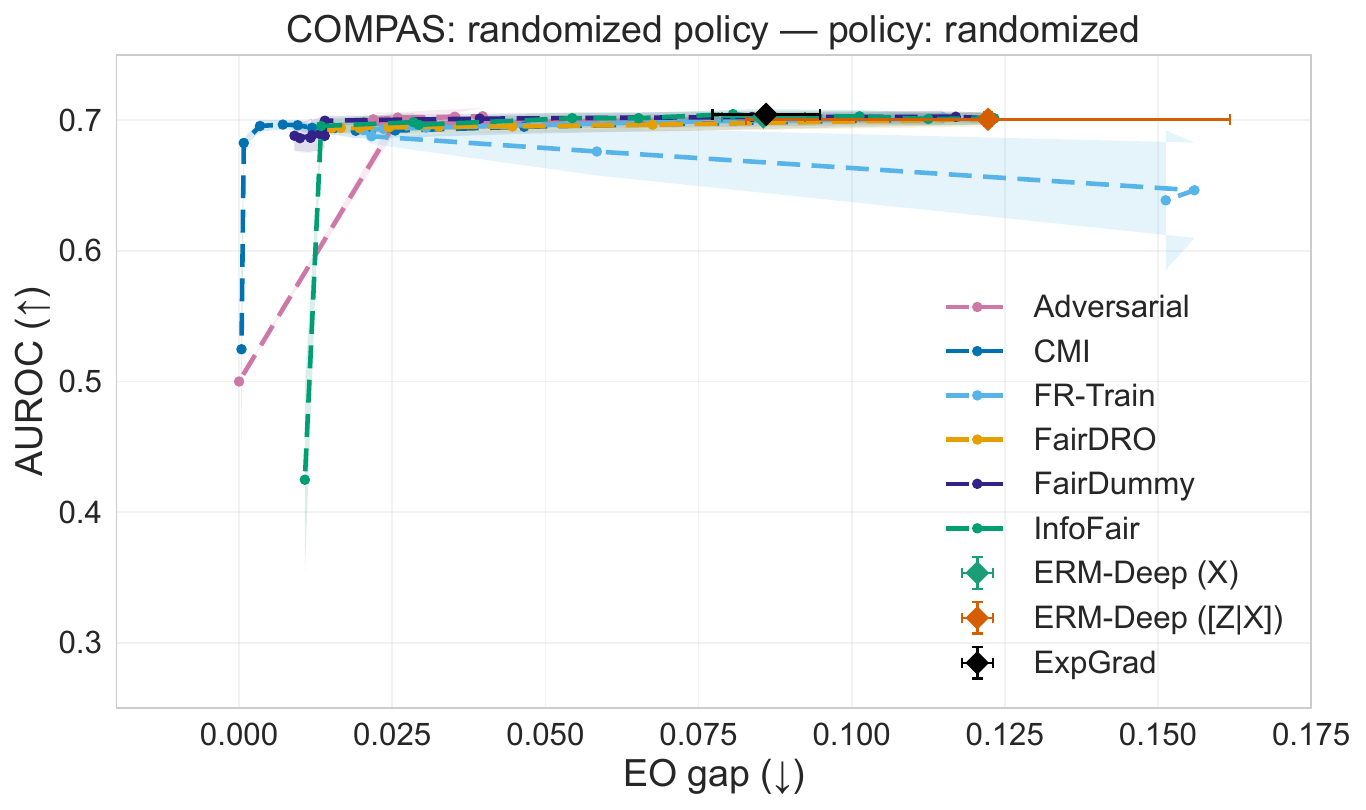}}
    \caption{\textbf{COMPAS Results: Low Marginal Cost of Separation and Operational Transfer.}
    \textbf{Top and Middle:} COMPAS exhibits a relatively low marginal cost of separation. In the randomized-policy setting (left), CMI and FairDRO are both top-performing methods by generating a stable and monotone Pareto frontier approximation and traversal as $I(\hat{Y};Z \mid Y)$ goes to zero. In contrast, FR-Train and Adversarial achieve comparable or slightly better utility when $I(\hat{Y};Z \mid Y)$ is not too small, but start to show backtracking and utility collapse near the strict-separation region. In the deterministic-policy setting (right), most methods achieve similar high-utility frontiers, with InfoFair slightly leading and CMI and Adversarial following closely, while FairDummy remains noticeably weaker.
    \textbf{Bottom:} The operational plots evaluate whether the information-theoretic frontier traversal transfers to deployment metrics. CMI shows the clearest operational transfer: it reduces the EO gap close to zero while retaining high AUROC and competitive Accuracy. In comparison, several baselines either degrade more sharply or do not reduce the EO gap as effectively, despite strong information-plane performance.}
    \label{fig:COMPAS_combined}
\end{figure*}

Figures~\ref{fig:Adult_combined} and~\ref{fig:COMPAS_combined} provide two complementary consistency checks beyond Bank and CelebA. Adult gives the clearest empirical illustration of the necessary-tradeoff mechanism in Theorem~\ref{thm:necessary-tradeoff}, while COMPAS serves as an additional tabular benchmark for testing whether the observed information-plane behavior transfers across domains.

\textbf{Empirical evidence for the necessary trade-off.}
The Adult randomized-policy information plane in Figure~\ref{fig:Adult_combined} provides a direct empirical illustration of Theorem~\ref{thm:necessary-tradeoff}. The ERM-Deep $(X)$ baseline lies near the strict-separation axis, with empirical violation
\[
    v_X^{\mathrm{emp}}
    =
    I(U_X;Z\mid Y)
    \approx 0,
\]
providing empirical evidence consistent with the condition $X \perp Z \mid Y$. Its utility
\[
    u_X^{\mathrm{emp}}
    =
    I(U_X;Y)
\]
serves as an empirical approximation of the best $X$-only utility level $u_X^*$. In contrast, the ERM-Deep $([Z|X])$ baseline attains higher utility, but only at strictly positive conditional dependence:
\[
    I(U_{X,Z};Z\mid Y)>0.
\]
Thus, the learned Pareto frontier mirrors the theorem's qualitative prediction: utilities above the empirical $X$-only level are attained only after incurring positive separation violation.

\textbf{Frontier geometry and low marginal cost.}
In contrast to the steep small-violation trade-off observed on Bank, both Adult and COMPAS exhibit a lower and more stable marginal cost of separation. On Adult, utility decreases gradually as $I(\hat Y;Z\mid Y)$ is reduced toward zero, as shown by FairDRO in the randomized-policy setting and CMI in the deterministic-policy setting. On COMPAS, CMI and FairDRO both trace strong randomized-policy frontier approximations near the strict-separation regime. These results suggest that the information-plane frontier provides useful guidance by revealing dataset-specific frontier geometry.

\textbf{Operational transfer.}
The bottom rows of Figures~\ref{fig:Adult_combined} and~\ref{fig:COMPAS_combined} show that information-plane improvements can transfer to deployment metrics. On Adult, CMI maintains competitive Accuracy and near-optimal AUROC at negligible EO gaps, despite FairDRO's stronger randomized-policy information-plane frontier. On COMPAS, CMI reduces the EO gap close to zero while retaining high AUROC and competitive Accuracy. Across both datasets, this supports the practical relevance of directly controlling $I(U;Z\mid Y)$ as a separation violation, since the information-theoretic frontier traversal translates into favorable EO-Accuracy and EO-AUROC trade-offs.

\subsubsection{ACS Benchmark: Multi-Class and Multi-Group Separation}
\label{subsubsec:acs}

To assess whether the proposed separation-utility frontier extends beyond legacy binary benchmarks, we further evaluate on an ACS task derived from the American Community Survey~\cite{ACS}. ACS provides large-scale tabular prediction problems and is increasingly used as a modern fairness benchmark. Our goal here is two-fold. First, we test whether the information-theoretic formulation remains meaningful in a structurally more complex tabular setting. Second, and more importantly, we test the non-binary setting in which both the target and the sensitive attribute have more than two categories.

In our main ACSOccupation experiment, \(Y\) is the occupation code restricted to the top \(20\) most frequent occupation categories, and \(Z\) is the multi-group race attribute RAC1P with \(9\) observed groups. The input \(X\) consists of demographic and socioeconomic ACS covariates, one-hot encoded into a \(55\)-dimensional feature vector. This main version uses a conservative complete-case preprocessing rule: rows with missing or structurally unavailable covariate values are removed before one-hot encoding. Because ACS contains many structurally unavailable covariate entries, this rule leaves \(2{,}576\) samples. Thus, unlike the legacy binary benchmarks, ACSOccupation tests separation in a genuinely multivariate label--group setting:
\[
    |\mathcal Y|=20,
    \qquad
    |\mathcal Z|=9,
    \qquad
    |\mathcal Y|\cdot|\mathcal Z|=180.
\]
To assess the sensitivity of the ACS conclusions to this conservative complete-case preprocessing, Appendix~\ref{append:acs_large} reports an additional large-scale ACSOccupation variant in which missing or structurally unavailable covariate values are retained as explicit categorical levels rather than removed.

For ACSOccupation, we compare against baselines that naturally support multi-class prediction and multi-group sensitive attributes. Binary post-processing methods such as ExpGrad and threshold-based equalized-odds repair are excluded because their standard implementations are designed for binary decisions and do not directly apply to multi-class, multi-group separation without additional reductions or post-processing choices. We also exclude FairDummy in this experiment. Although its high-level idea could be extended by sampling a categorical dummy sensitive attribute $\widetilde Z\sim P(Z\mid Y)$, a faithful multi-class/multi-group implementation would require nontrivial design choices and would become closely related to adversarial separation learning. To avoid introducing an additional nonstandard baseline with its own implementation choices, we reserve this extension for future work and use ACSOccupation as a clean comparison among methods that naturally extend to multi-class, multi-group separation.

\begin{figure*}[t]
    \centering
    \setlength{\tabcolsep}{1pt}
    
    \subfloat{\includegraphics[width=0.49\textwidth]{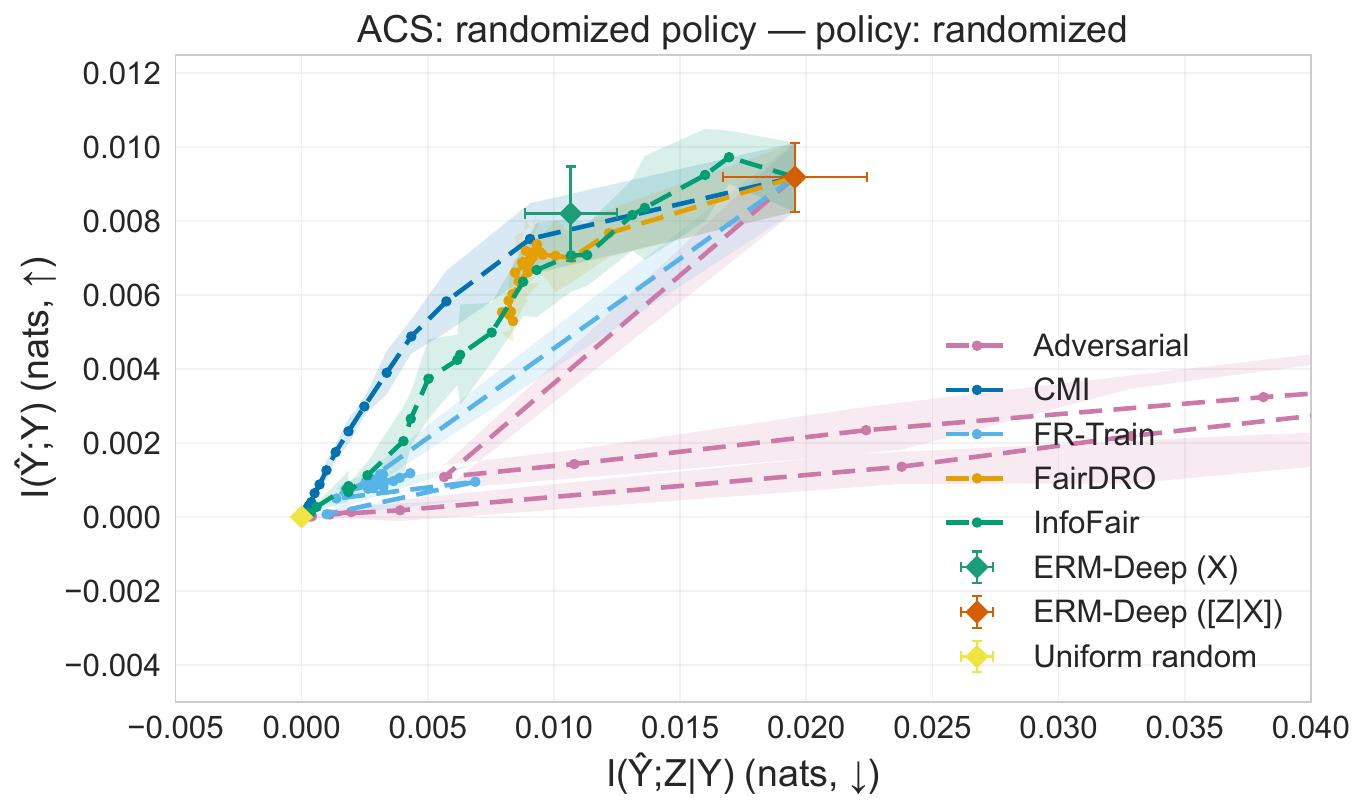}}
    \subfloat{\includegraphics[width=0.49\textwidth]{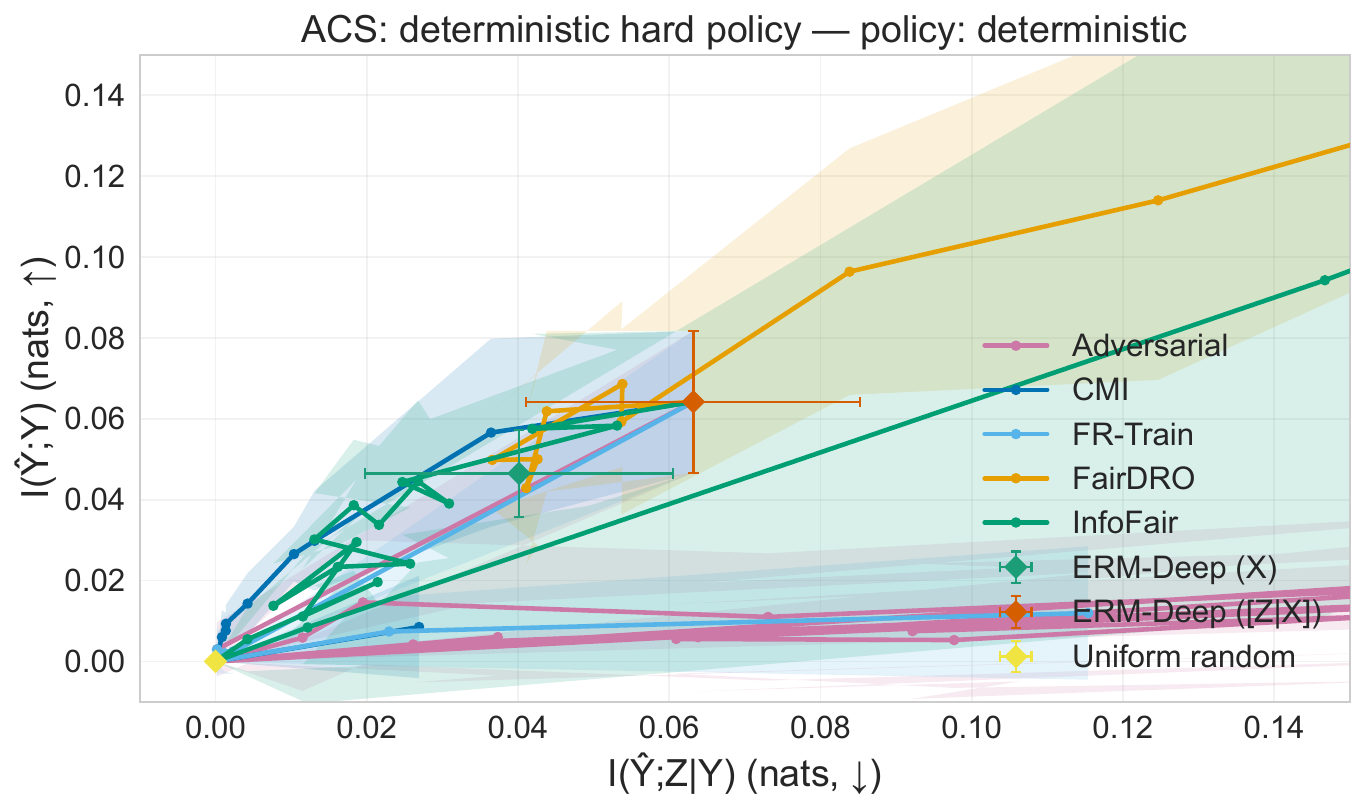}} \\
    \subfloat{\includegraphics[width=0.49\textwidth]{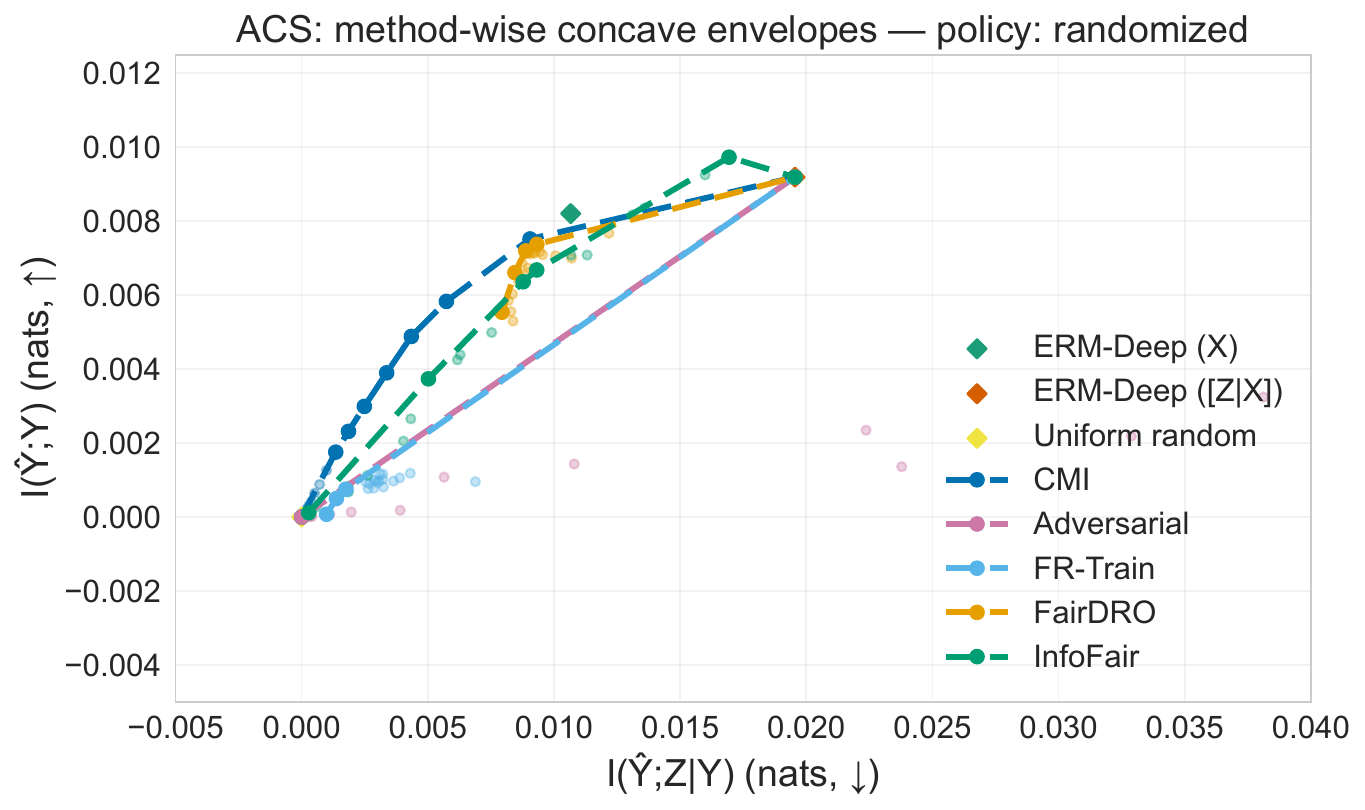}}
    \subfloat{\includegraphics[width=0.49\textwidth]{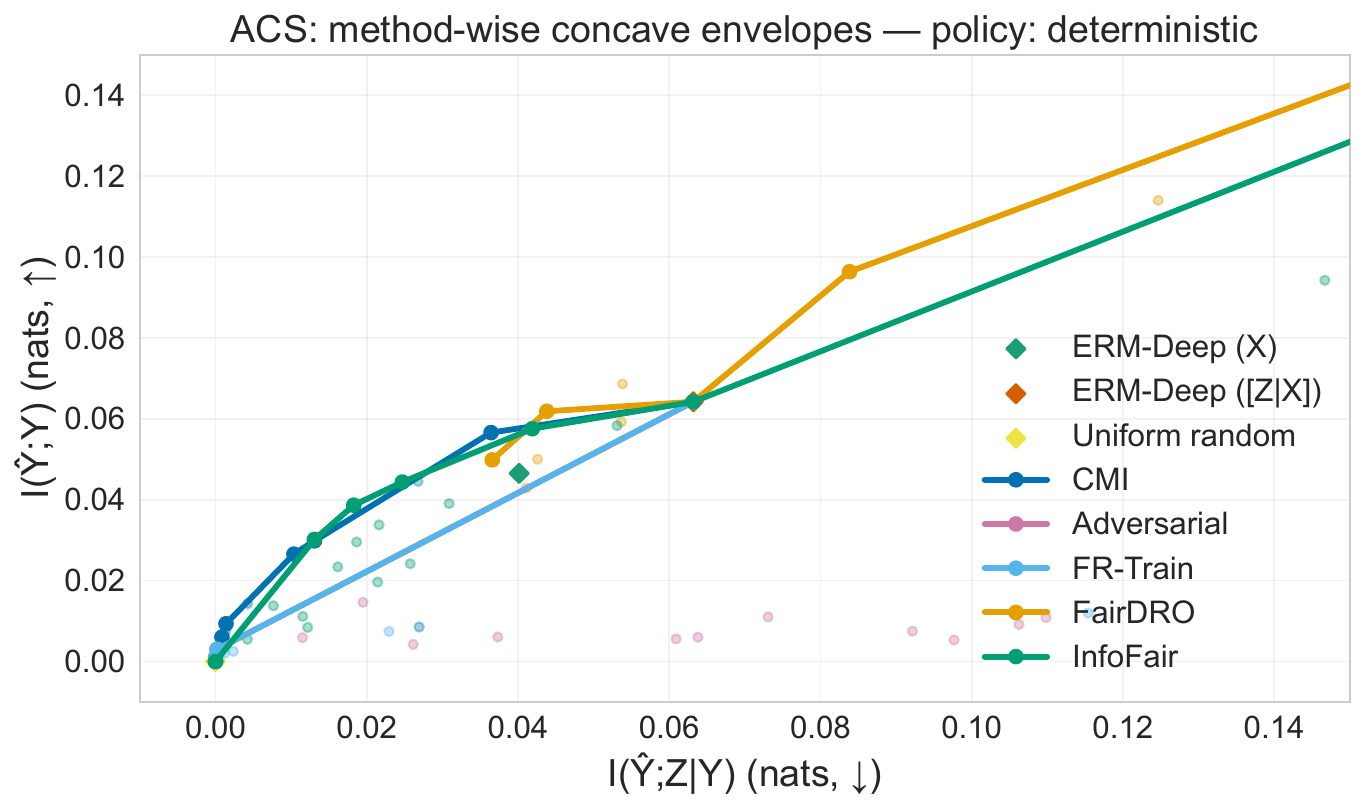}} \\
    \subfloat{\includegraphics[width=0.49\textwidth]{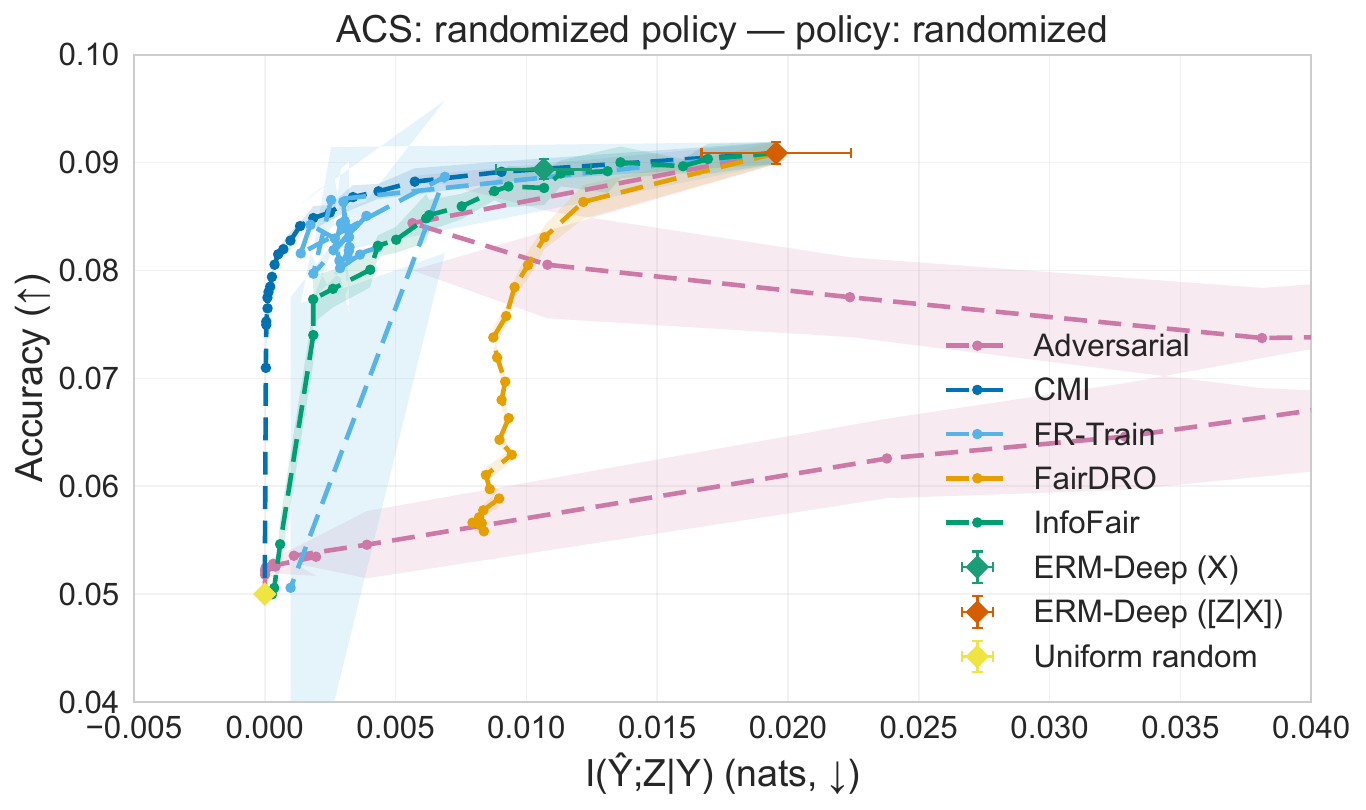}}
    \subfloat{\includegraphics[width=0.49\textwidth]{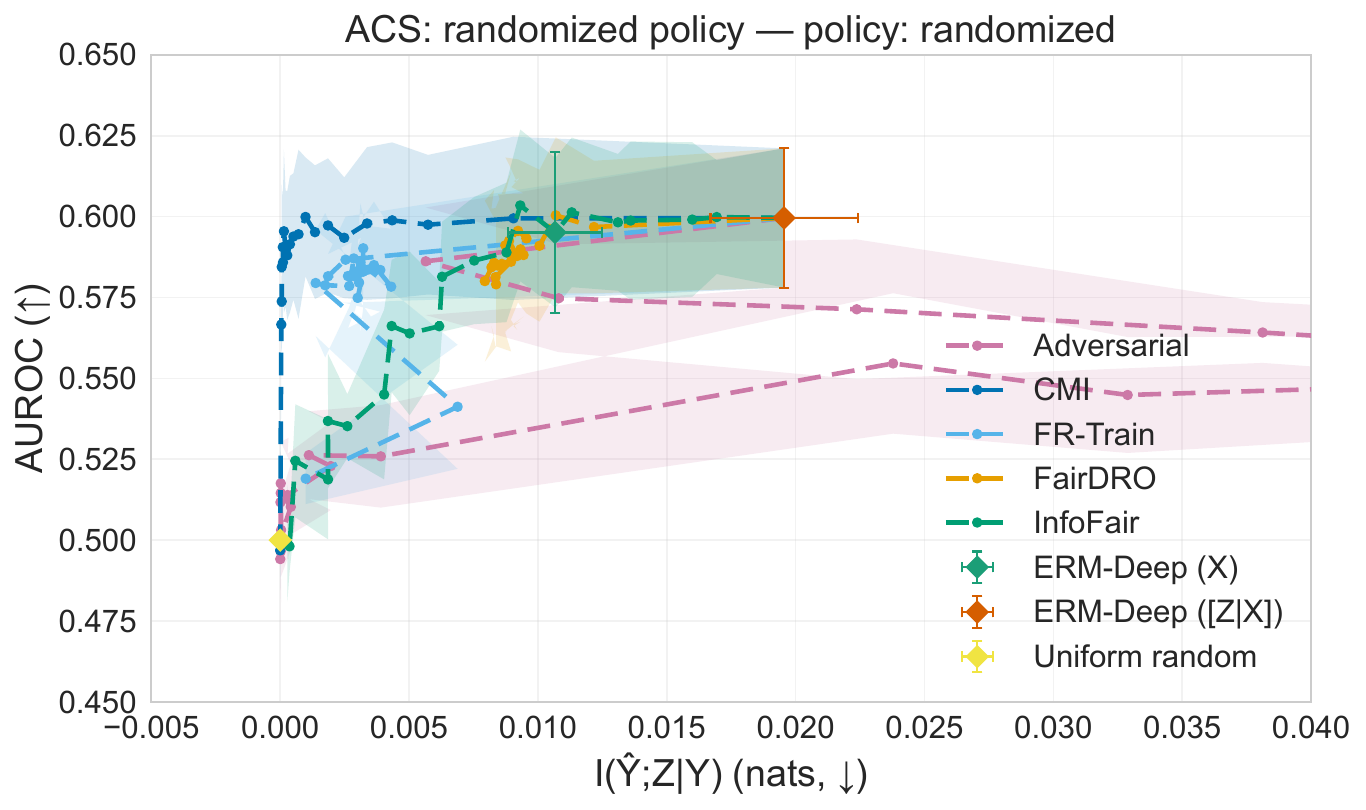}} \\
    \caption{\textbf{ACSOccupation Results: Multi-class and Multi-group Separation.}
    \textbf{Top:} Information-plane frontier approximations for randomized-policy (posterior-level) and deterministic-policy predictions, plotting utility $I(\hat Y;Y)$ against separation violation $I(\hat Y;Z\mid Y)$. CMI traces a stable and monotone frontier approximation while preserving relatively high utility, especially in the low-separation-violation region. In contrast, the other baselines exhibit lower utility, backtracking, or less targeted movement in the information plane.
    \textbf{Middle:} The method-wise upper concave envelopes summarize empirical approximations of the revealed-randomization frontier over the observed trained predictors. These plots provide a cleaner comparison of method-wise attainable frontiers in the multi-class and multi-group setting.
    \textbf{Bottom (Operational Transfer):} Expected randomized accuracy and macro-AUROC are plotted against the separation violation. CMI transfers its stable information-theoretic frontier traversal to deployment-metric trade-offs, maintaining strong accuracy and AUROC in the low-separation-violation region along a relatively smooth trajectory compared with the baselines.}
    \label{fig:ACS_combined}
\end{figure*}

Figure~\ref{fig:ACS_combined} shows the multi-class and multi-group results. The six panels evaluate complementary aspects of the learned frontier: raw information-plane behavior, method-restricted concave envelopes obtained from the observed trained predictors, and operational transfer to expected randomized accuracy and macro-AUROC.

\begin{itemize}[leftmargin=*]
    \item \textbf{Difficulty of the multi-class, multi-group task.}
    ACSOccupation is substantially harder than the binary benchmarks. The absolute values of $I(\hat Y;Y)$ and predictive accuracy are modest because the task requires predicting a $20$-class occupation label from coarse ACS covariates. Nevertheless, the ERM baselines and comparison methods achieve nontrivial performance above the uniform predictor, confirming that the task contains predictive signal while remaining challenging.

    \item \textbf{Natural generalization as both quantification and regularization.}
    The information-plane plots demonstrate that CMI remains effective in the genuinely multivariate setting. The CMI frontier moves from the unconstrained ERM endpoint toward the strict-separation region with a smooth and interpretable trajectory. This is precisely the setting in which the proposed information-theoretic formulation is most useful: no binary reduction, pairwise group comparison, or post-hoc threshold design is needed. It directly handles the multi-class outcome and the multi-group sensitive attribute. The envelope row further shows that the method-wise revealed-randomization diagnostic remains meaningful in this multi-class and multi-group setting.

    \item \textbf{Operational transfer.}
    The bottom row shows that posterior-level separation control does not merely minimize a formal information quantity. The CMI method preserves competitive expected randomized accuracy and macro-AUROC throughout the low-separation-violation region. In comparison, InfoFair obtains lower accuracy and AUROC performance, FR-Train suffers from instability and backtracking in low-separation-violation region, and FairDRO has difficulty in reducing the violation to zero. Adversarial training exhibits a less stable frontier approximation.
\end{itemize}

We report additional ACS ablations in Appendix~\ref{append:acs_ablation}, including batch-size sensitivity of the CMI estimator and raw versus gradient-normalized CMI training. The batch-size ablation evaluates whether the empirical CMI signal remains stable when the number of samples per $(Y,Z)$ cell changes, which is important because ACSOccupation has $|\mathcal Y|\cdot|\mathcal Z|=180$ conditional cells. The gradient-normalization ablation verifies that the proposed normalized objective yields a smoother and more reliable frontier than the unnormalized CMI penalty. Together, these ablations support that (1) the observed ACS trends are not artifacts of a particular batch size and (2) gradient normalization is important for stable and smooth frontier approximation.

\section{Conclusion}
\label{sec:conclusion}

This work connects fundamental fairness limits with practical optimization by giving a model-agnostic characterization of the separation-utility Pareto frontier. On the theoretical side, we formulated separation violation through conditional mutual information and proved that the optimal randomized frontier is the concave closure of the deterministic frontier via revealed randomization (Theorem~\ref{thm:timesharing-frontier-polish}). We also identified conditions under which improving utility beyond the best $X$-only predictor necessarily incurs positive separation violation. On the empirical side, we showed that, for discrete target and sensitive variables, complex learned proxies are often unnecessary: a direct plug-in estimator of conditional mutual information yields stable empirical frontiers, directly controls posterior-level separation violation, and transfers effectively to operational EO-Accuracy and EO-AUROC trade-offs across standard and multi-class/multi-group benchmarks.

\textbf{Open challenges.}
Despite these advances, critical questions remain.
\emph{Theoretical necessity:} While we identify sufficient conditions under which utility gains beyond the best $X$-only predictor necessarily incur positive separation violation, a full characterization of when such strict trade-offs are unavoidable for arbitrary distributions remains open.
\emph{Continuous and high-dimensional estimation:} Our direct estimator relies on the discreteness of the target and sensitive variables. Extending transparent, theoretically guaranteed CMI estimators to continuous or high-dimensional targets and sensitive attributes, without reverting to unstable adversarial or variational proxies, remains an important direction for future work.

\newpage

\bibliography{main}
\bibliographystyle{abbrvnat}

\appendix

\newpage

\addtocontents{toc}{\protect\setcounter{tocdepth}{2}}
\tableofcontents 

\newpage

\section{Appendix to Section \ref{sec:introduction}}

\subsection{Related Machine Learning Fairness Definitions}

We briefly review several standard statistical fairness definitions to situate separation relative to other commonly used notions. These definitions are included for background only. The main paper focuses on separation.

\textbf{1. Independence, statistical parity, or demographic parity.}
Independence requires the prediction to be statistically independent of the sensitive attribute:
\[
    \hat{Y} \perp Z.
\]
Equivalently, for any measurable event \(A_{\mathcal Y}\in\mathcal B_{\mathcal Y}\) and any \(z,z'\in\mathcal Z\),
\[
    P(\hat{Y}\in A_{\mathcal Y}\mid Z=z)
    =
    P(\hat{Y}\in A_{\mathcal Y}\mid Z=z').
\]
In binary classification, this reduces to
\[
    P(\hat{Y}=1\mid Z=z)=P(\hat{Y}=1),
    \qquad \forall z\in\mathcal Z.
\]
This criterion asks for equal prediction distributions across groups, without conditioning on the true outcome.

\textbf{2. Separation or equalized odds.}
Separation requires the prediction to be conditionally independent of the sensitive attribute given the true outcome:
\[
    \hat{Y} \perp Z\mid Y.
\]
Equivalently, for any measurable event \(A_{\mathcal Y}\in\mathcal B_{\mathcal Y}\), any \(y\in\mathcal Y\), and any \(z,z'\in\mathcal Z\),
\[
    P(\hat{Y}\in A_{\mathcal Y}\mid Y=y,Z=z)
    =
    P(\hat{Y}\in A_{\mathcal Y}\mid Y=y,Z=z').
\]
In binary classification, this is equalized odds:
\[
    P(\hat{Y}=1\mid Y=y,Z=z)
    =
    P(\hat{Y}=1\mid Y=y),
    \qquad y\in\{0,1\},\ z\in\mathcal Z.
\]
This criterion requires the conditional error behavior of the predictor to be comparable across sensitive groups. Common relaxations include equal opportunity, which matches true positive rates,
\[
    P(\hat{Y}=1\mid Y=1,Z=z)
    =
    P(\hat{Y}=1\mid Y=1),
\]
and predictive equality, which matches false positive rates,
\[
    P(\hat{Y}=1\mid Y=0,Z=z)
    =
    P(\hat{Y}=1\mid Y=0).
\]

\textbf{3. Sufficiency, predictive parity, or calibration within groups.}
Sufficiency requires the true outcome to be conditionally independent of the sensitive attribute given the prediction:
\[
    Y \perp Z\mid \hat{Y}.
\]
For a binary score or prediction, this corresponds to matching outcome rates across groups among individuals assigned the same prediction value:
\[
    P(Y=1\mid \hat{Y}=\hat y,Z=z)
    =
    P(Y=1\mid \hat{Y}=\hat y),
    \qquad \forall \hat y,\ z.
\]
For probabilistic scores, a related notion is calibration within groups:
\[
    P(Y=1\mid \widehat p(X)=s,Z=z)=s,
    \qquad \forall s\in[0,1],\ z\in\mathcal Z.
\]

\textbf{4. Other fairness notions.}
Other fairness definitions include individual fairness, subgroup fairness, treatment equality, and counterfactual fairness. These notions involve either metric assumptions, subgroup families, error-ratio constraints, or causal structure, and are therefore conceptually distinct from the conditional-independence notion of separation studied in this paper.

\section{Appendix to Section \ref{sec:theory-regularization}}
\subsection{Proof of Proposition \ref{prop:det-monotone}}\label{append:det-monotone}

\begin{proof}
If \(0\le v_1\le v_2\), then
\[
\{(v',u')\in \overline{\mathcal S_{\det}}: v'\le v_1\}
\subseteq
\{(v',u')\in \overline{\mathcal S_{\det}}: v'\le v_2\}.
\]
Taking suprema of the second coordinates over these two nested sets yields
\[
U^\star_{\det}(v_1)\le U^\star_{\det}(v_2).
\]
\end{proof}

\subsection{Proof of Theorem \ref{thm:timesharing-frontier-polish}}\label{append:timesharing-frontier-polish}

\begin{proof}
\textit{Step 1 (W.l.o.g. reduction to a uniform selector).} Any standard Borel probability space $(\mathcal B,\mu)$ can be represented by the push-forward of the unit interval $([0,1],\lambda)$ with Lebesgue measure $\lambda$ under a measurable map (See, for example, \cite[Theorem 1.8]{Kallenberg2002}). This means any revealed randomization scheme can be realized, without loss of generality, by a selector $B\sim\mathrm{Unif}[0,1]$ and a jointly measurable $F:\mathcal X\times\mathcal Z\times[0,1]\to\mathcal U$. We assume this simplified representation for the remainder of the proof.

\textit{Step 2 (Average identity).} Let $U_b:=f_b(X,Z)$ and $\widetilde U:=(B,U_B)$ with $B\sim\mathrm{Unif}[0,1]$. Since all spaces are Polish with Borel \(\sigma\)-algebras, the relevant regular conditional distributions exist. Therefore, the chain rule for mutual information gives
\begin{align*}
    I(\widetilde U;Y) & = I(B,U_B;Y) = I(B;Y) + I(U_B;Y\mid B) \\
    I(\widetilde U;Z\mid Y) & = I(B,U_B;Z\mid Y) = I(B;Z\mid Y) + I(U_B;Z\mid Y,B)
\end{align*}
By assumption, $B$ is independent of $(X,Y,Z)$, which implies $B \perp Y$ and $B \perp Z \mid Y$. Therefore, $I(B;Y)=0$ and $I(B;Z\mid Y)=0$. The chain rules now simplify to:
\begin{align*}
    I(\widetilde U;Y) &= I(U_B;Y\mid B) = \int_0^1 I(U_b;Y)\,db \\
    I(\widetilde U;Z\mid Y) &= I(U_B;Z\mid Y,B) = \int_0^1 I(U_b;Z\mid Y)\,db
\end{align*}
The last equality in each line follows from disintegration and the measurability of the maps \(b \mapsto I(U_b;Y)\) and \(b \mapsto I(U_b;Z\mid Y)\). Thus, every randomization point on the $(v,u)$ plane is the average of some deterministic point parametrized by the realization $B = b$:
\begin{equation}\label{eq:barycenter}
\bigl(I(\widetilde U;Z\mid Y),\,I(\widetilde U;Y)\bigr)=\int_0^1 \bigl(v_{f_b},u_{f_b}\bigr)\,db.
\end{equation}

\textit{Step 3 ($\mathrm{conv}(\mathcal{S}_{\det})\subseteq \mathcal{S}_{\rand}$).} We show that any finite convex combination $y = \sum_{j=1}^J\alpha_j(v_{f_j},u_{f_j})$ is in $\mathcal{S}_{\rand}$, so that $\mathrm{conv}(\mathcal{S}_{\det})\subseteq \mathcal{S}_{\rand}$. We construct a time-sharing scheme by partitioning the interval $[0,1]$ into disjoint intervals $A_j$ with lengths $|A_j| = \alpha_j$. We then define the measurable map $F$ such that $f_b(x,z) := f_j(x,z)$ for all $b \in A_j$.
Applying the integral identity \eqref{eq:barycenter} to this scheme:
\[
\int_0^1 (v_{f_b},u_{f_b})\,db = \sum_{j=1}^J \int_{A_j} (v_{f_j},u_{f_j})\,db = \sum_{j=1}^J |A_j| (v_{f_j},u_{f_j}) = y.
\]
Since $y$ is achieved by a valid scheme, $y \in \mathcal{S}_{\rand}$. Thus, $\mathrm{conv}(\mathcal{S}_{\det})\subseteq\mathcal{S}_{\rand}$.

\textit{Step 4 ($\mathcal{S}_{\rand}\subseteq \overline{\mathrm{conv}}(\mathcal{S}_{\det})$).}
Let $y$ be any point in $\mathcal{S}_{\rand}$ achieved by some randomized scheme
$(B,\widetilde U_B)$. By \eqref{eq:barycenter},
\[
    y=\int_0^1 s(b)\,db,
    \qquad
    s(b):=(v_{f_b},u_{f_b})\in \mathcal S_{\det}.
\]
Since \(s(b)=(v_{f_b},u_{f_b})\in\mathbb R_{\ge0}^2\) for almost every \(b\), we may use the \(\ell^1\)-norm on \(\mathbb R^2\), for which
\[
\|s(b)\|_1 = v_{f_b}+u_{f_b}.
\]
By the identities in Step~2,
\[
\int_0^1 u_{f_b}\,db = I(\widetilde U;Y)<\infty,
\qquad
\int_0^1 v_{f_b}\,db = I(\widetilde U;Z\mid Y)<\infty.
\]
Hence,
\[
\int_0^1 \|s(b)\|_1\,db
=
\int_0^1 \bigl(v_{f_b}+u_{f_b}\bigr)\,db
<\infty,
\]
and therefore \(s\in L^1([0,1];\mathbb R^2)\).

We claim that
\[
    \int_0^1 s(b)\,db
    \in
    \overline{\mathrm{conv}}(\mathcal S_{\det}).
\]
Indeed, suppose for contradiction that
\[
    y\notin \overline{\mathrm{conv}}(\mathcal S_{\det}).
\]
Since \(\overline{\mathrm{conv}}(\mathcal S_{\det})\) is a closed convex subset
of \(\mathbb R^2\), the separating hyperplane theorem gives a vector
\(a\in\mathbb R^2\) and a constant \(\alpha\in\mathbb R\) such that
\[
    a\cdot y>\alpha
    \qquad\text{and}\qquad
    a\cdot x\le \alpha
    \quad
    \text{for all }x\in \overline{\mathrm{conv}}(\mathcal S_{\det}).
\]
Since \(s(b)\in\mathcal S_{\det}\subseteq
\overline{\mathrm{conv}}(\mathcal S_{\det})\) for almost every \(b\), we have $a\cdot s(b)\le \alpha$ for almost every $b\in[0,1]$. Integrating this inequality gives
\[
    a\cdot y
    =
    a\cdot \int_0^1 s(b)\,db
    =
    \int_0^1 a\cdot s(b)\,db
    \le
    \alpha,
\]
which contradicts \(a\cdot y>\alpha\). Therefore,
\[
    y\in \overline{\mathrm{conv}}(\mathcal S_{\det}).
\]
Since \(y\in\mathcal S_{\rand}\) was arbitrary, we conclude that $\mathcal S_{\rand}\subseteq \overline{\mathrm{conv}}(\mathcal S_{\det})$.

By Step 3 and 4, we have $\mathrm{conv}(\mathcal{S}_{\det}) \subseteq \mathcal{S}_{\rand}\subseteq \overline{\mathrm{conv}}(\mathcal{S}_{\det})$. By taking closure, we have $\overline{\mathcal{S}_{\rand}} = \overline{\mathrm{conv}}(\mathcal{S}_{\det})$. Finally, it follows from the definition of $U^\star_{\rand}(v)$ and $U_{\overline{\operatorname{conv}}(\mathcal{S}_{\det})}(v)$ that
\begin{equation*}
    U^\star_{\rand}(v) = U_{\overline{\operatorname{conv}}(\mathcal{S}_{\det})}(v), \qquad \forall v\ge 0.
\end{equation*}
That completes the proof.
\end{proof}

\subsection{Proof of Proposition \ref{prop:CMI_characterization}}\label{a:CMI_characterization}

\begin{proof}
By the disintegration formula for conditional mutual information,
\[
I(U;Z\mid Y)
=
\int
D_{\mathrm{KL}}
\!\left(
P_{U,Z\mid Y=y}
\,\middle\|\,
P_{U\mid Y=y}\otimes P_{Z\mid Y=y}
\right)
\,P_Y(dy).
\]
Since KL divergence is nonnegative, \(I(U;Z\mid Y)=0\) holds if and only if
\[
D_{\mathrm{KL}}
\!\left(
P_{U,Z\mid Y=y}
\,\middle\|\,
P_{U\mid Y=y}\otimes P_{Z\mid Y=y}
\right)
=0
\]
for \(P_Y\)-almost every \(y\). Since \(D_{\mathrm{KL}}(P\|Q)=0\) if and only if
\(P=Q\), this is equivalent to
\[
P_{U,Z\mid Y=y}
=
P_{U\mid Y=y}\otimes P_{Z\mid Y=y}
\]
for \(P_Y\)-almost every \(y\). This is exactly
\[
U\perp Z\mid Y
\]
by definition. We are done.
\end{proof}

\subsection{Proof of Lemma \ref{l:MI_bound}}\label{a:proof_MI_bound}

\begin{proof}
Let \(h:\mathcal U\to\mathbb R^d\) and
\(g:\mathcal Z\to\mathbb R^d\) be bounded measurable functions. By replacing
\(h\) and \(g\) with \(h-\mathbb E[h(U)]\) and
\(g-\mathbb E[g(Z)]\), respectively, it suffices to prove the bound in the
centered case. Thus assume
\[
    \mathbb E[h(U)]=0,
    \qquad
    \mathbb E[g(Z)]=0.
\]
Then we have:
\begin{align*}
    \frac{\left|\langle h(U), g(Z)\rangle_{L^2}\right|}
    {\|h\|_{L^{\infty}}\|g\|_{L^{\infty}}}
    &= \frac{1}{\|h\|_{L^{\infty}}\|g\|_{L^{\infty}}}
    \left|
    \int_{\mathcal{U}\times\mathcal{Z}}
    \langle h(u),g(z)\rangle\,
    d\mathbb{P}_{(U, Z)}(u,z)
    \right|\\[5pt]
    &= \frac{1}{\|h\|_{L^{\infty}}\|g\|_{L^{\infty}}}
    \left|
    \int_{\mathcal{U}\times\mathcal{Z}}
    \langle h(u),g(z)\rangle\,
    d(\mathbb{P}_{(U,Z)} - \mathbb{P}_{U}\otimes\mathbb{P}_{Z})(u,z)
    \right|\\[5pt]
    &\leq \frac{1}{\|h\|_{L^{\infty}}\|g\|_{L^{\infty}}}
    \int_{\mathcal{U}\times\mathcal{Z}}
    |\langle h(u),g(z)\rangle|\,
    d|\mathbb{P}_{(U,Z)} - \mathbb{P}_{U}\otimes\mathbb{P}_{Z}|(u,z)\\[5pt]
    &\leq \int_{\mathcal{U}\times\mathcal{Z}}
    d|\mathbb{P}_{(U,Z)} - \mathbb{P}_{U}\otimes\mathbb{P}_{Z}|(u,z)\\[5pt]
    &= 2\,\|\mathbb{P}_{(U,Z)} - \mathbb{P}_{U}\otimes\mathbb{P}_{Z}\|_{\mathrm{TV}} \\[5pt]
    &\leq 2\,\sqrt{\frac{1}{2}D_{\mathrm{KL}}\left(\mathbb{P}_{(U,Z)}\,\|\,\mathbb{P}_{U}\otimes\mathbb{P}_{Z}\right)}\\[5pt]
    &= \sqrt{2\,I(U;Z)}.
\end{align*}
Here, the fourth line follows from
\[
    |\langle h(u),g(z)\rangle|
    \leq
    \|h\|_{\infty}\|g\|_{\infty},
\]
the fifth from the definition of the total variation norm, the sixth from
Pinsker's inequality, and the last from the definition of mutual information.
We are done.
\end{proof}

\subsection{Examples of Lemma \ref{l:MI_bound}}\label{a:Lemma_MI_Bound_Examples}

On direct application or corollary of the above result is that one can effectively upper bound the covariance between $f(\hat{Y})$ and $g(Z)$ for any bounded $f$ and $g$:
If $I(\hat{Y};Z)\le C$, then for any bounded $f,g$,
\begin{equation*}
    \big|\operatorname{Cov}(f(\hat{Y}),g(Z))\big| \;\le\;\|f(\hat{Y})\|_{L^\infty}\,\|g(Z)\|_{L^\infty}\,\sqrt{2C}.
\end{equation*}

That is, a bound on mutual information uniformly controls covariance for \emph{all} bounded queries. Below, we provide two concrete examples:

\begin{exam}[Binary Indicator]
    Let $f(\hat Y)=\mathbf{1}\{h(\hat Y)=1\}$ and $g(Z)=\mathbf{1}\{a(Z)=1\}$ for (possibly randomized) binary classifiers $h,a$. Then
    \begin{align*}
    \mathrm{Cov}\!\bigl(\mathbf{1}\{h(\hat Y)=1\},\,\mathbf{1}\{a(Z)=1\}\bigr)
    &= \mathbb{P}\!\bigl(h(\hat Y)=1,\,a(Z)=1\bigr)
   - \mathbb{P}\!\bigl(h(\hat Y)=1\bigr)\,\mathbb{P}\!\bigl(a(Z)=1\bigr).
    \end{align*}
    Moreover, by Lemma~\ref{l:MI_bound},
    \begin{align*}
    \bigl|\mathrm{Cov}(f(\hat Y),g(Z))\bigr| \;\le\; \|f\|_\infty\|g\|_\infty\sqrt{2\,I(\hat Y;Z)}.
    \end{align*}
    Since $\|f\|_\infty=\|g\|_\infty=1$ for indicators, we obtain the explicit bound
    \begin{equation*}
        \bigl|\mathbb{P}(h(\hat Y)=1,\,a(Z)=1)-\mathbb{P}(h(\hat Y)=1)\mathbb{P}(a(Z)=1)\bigr| \;\le\; \sqrt{2\,I(\hat Y;Z)}.
    \end{equation*}
\end{exam}

In Lemma $\ref{l:MI_bound}$, we assume the knowledge of the population mutual information. In practice, one would apply the sample or empirical estimation of mutual information to upper bound the sample or empirical estimation of the covariance between $f(\hat{Y})$ and $g(Z)$ for any (essentially) bounded $f$ and $g$. The next result applies concentration inequality to show the empirical estimation error and corresponding probability guarantee, given a sample size, so that together with Lemma \ref{l:MI_bound}, we can provide a uniform covariance control guarantee only based on sample estimations.

\begin{exam}[1-dimensional Covariance Control]\label{coro: empirical covariance control}
    Given i.i.d.\ samples $\{(\hat Y_t,Z_t)\}_{t=1}^n$ and bounded $f,g$ with $M_f:=\|f(\hat Y)\|_\infty$ and $M_g:=\|g(Z)\|_\infty$, define
    \begin{equation*}
        \widehat{\mathrm{Cov}}_n(f,g) := \frac{1}{n}\sum_{t=1}^n f(\hat Y_t)g(Z_t) - \Bigl(\frac{1}{n}\sum_{t=1}^n f(\hat Y_t)\Bigr)\Bigl(\frac{1}{n}\sum_{t=1}^n g(Z_t)\Bigr).
    \end{equation*}
    Then, for any $\delta\in(0,1)$, with probability at least $1-\delta$,
    \begin{equation}\label{eq:emp-cov-conc}
    \bigl|\widehat{\mathrm{Cov}}_n(f,g)-\mathrm{Cov}(f(\hat Y),g(Z))\bigr| \;\le\; \,M_f M_g\,\sqrt{\frac{2\log(6/\delta)}{n}}.
    \end{equation}
\begin{proof}
Write
\begin{equation*}
    \widehat{\mathrm{Cov}}_n(f,g)-\mathrm{Cov}(f(\hat Y),g(Z)) = \underbrace{\Bigl(\tfrac{1}{n}\sum_{t=1}^n f_t g_t - \mathbb{E}[fg]\Bigr)}_{\text{(A)}} \;-\; \underbrace{\Bigl(\overline f_n\,\overline g_n - \mathbb{E}f\,\mathbb{E}g\Bigr)}_{\text{(B)}},
\end{equation*}
where $f_t:=f(\hat Y_t)$, $g_t:=g(Z_t)$, $\overline f_n:=n^{-1}\sum_t f_t$, $\overline g_n:=n^{-1}\sum_t g_t$.
By Hoeffding’s inequality, since $|f_t g_t|\le M_f M_g$,
\[
\mathbb{P}\!\left(\bigl|\text{(A)}\bigr| \ge M_f M_g\sqrt{\frac{2\log(6/\delta)}{n}}\right)\le \frac{\delta}{3}.
\]
Also,
\[
\bigl|\text{(B)}\bigr| \le |\overline f_n-\mathbb{E}f|\,|\overline g_n| + |\mathbb{E}f|\,|\overline g_n-\mathbb{E}g|
\le M_g\,|\overline f_n-\mathbb{E}f| + M_f\,|\overline g_n-\mathbb{E}g|.
\]
Applying Hoeffding's inequality again to each mean (since $|f_t-\mathbb{E}f|\le 2M_f$, $|g_t-\mathbb{E}g|\le 2M_g$) and a union bound yields, with probability at least $1-\tfrac{2\delta}{3}$,
\[
|\overline f_n-\mathbb{E}f| \le M_f\sqrt{\frac{2\log(6/\delta)}{n}},\qquad
|\overline g_n-\mathbb{E}g| \le M_g\sqrt{\frac{2\log(6/\delta)}{n}}.
\]
Combining the three events and simplifying constants gives \eqref{eq:emp-cov-conc}.
\end{proof}
Together with the population bound
\[
|\mathrm{Cov}(f(\hat Y),g(Z))|
\le
\|f\|_\infty\|g\|_\infty\sqrt{2I(\hat Y;Z)},
\]
we obtain, with probability at least \(1-\delta\),
\[
\bigl|\widehat{\mathrm{Cov}}_n(f,g)\bigr|
\le
\|f\|_\infty\|g\|_\infty
\left(
\sqrt{2I(\hat Y;Z)}
+
\sqrt{\frac{2\log(6/\delta)}{n}}
\right).
\]
\end{exam}

\subsection{Counter-example for correlation}\label{a:counter_example_correlation}

\begin{exam}[Why not correlation?]
     Let $Y$ be a Bernoulli variable with $P(Y=0)=\epsilon$ (small) and $P(Y=1)=1-\epsilon$.  Set $Z=Y$ (perfectly correlated).  Then $$I(Y;Z)=H(Y)=-\epsilon\ln\epsilon -(1-\epsilon)\ln(1-\epsilon),$$
     which becomes approximately $\epsilon|\ln\epsilon|$ when $\epsilon\to0$. Now choose centered, bounded functions $$f(Y)=\frac{\mathbf{1}_{{Y=0}} - \epsilon}{\sqrt{\epsilon(1-\epsilon)}},\qquad g(Z)=\frac{\mathbf{1}_{{Z=0}} - \epsilon}{\sqrt{\epsilon(1-\epsilon)}}.$$ 
     By construction $E[f(Y)]=E[g(Z)]=0$, and one checks $$\Cov(f(Y),g(Z))=\frac{E[(\mathbf{1}_{Y=0}-\epsilon)^2]}{\epsilon(1-\epsilon)}=\frac{\epsilon(1-\epsilon)}{\epsilon(1-\epsilon)}=1.$$ In particular, with $L^2$ normalization we get $\frac{|\Cov(f,g)|}{||f||_2||g||_2}=1$.  But $\sqrt{2I(Y;Z)}\approx\sqrt{2\epsilon|\ln\epsilon|}\to0$ as $\epsilon\to0$.  Thus the $L^2$‐normalized covariance can be much larger than $\sqrt{2I}$, violating any bound of the form $|\Cov(f,g)|/(||f||_2||g||_2)\le \sqrt{2I(Y;Z)}$.
\end{exam}

\subsection{Proof of Theorem \ref{th:CMI_bound}}\label{a:proof_CMI_bound}

\begin{proof}
For \(P_Y\)-almost every \(y\), define the conditional mutual information density
\[
    I_y(U;Z)
    :=
    D_{\mathrm{KL}}\!\left(
    P_{U,Z\mid Y=y}
    \,\middle\|\,
    P_{U\mid Y=y}\otimes P_{Z\mid Y=y}
    \right).
\]
By the conditional centering assumptions,
\[
    \mathbb E[h(U)\mid Y=y]=0,
    \qquad
    \mathbb E[g(Z)\mid Y=y]=0
\]
for \(P_Y\)-almost every \(y\). Hence, applying Lemma~\ref{l:MI_bound}
under the conditional law \(P_{U,Z\mid Y=y}\), we obtain
\[
\frac{
\left|
\mathbb E\!\left[\langle h(U),g(Z)\rangle\mid Y=y\right]
\right|
}{
\|h\|_{L^\infty}\|g\|_{L^\infty}
}
\le
\sqrt{2I_y(U;Z)}
\]
for \(P_Y\)-almost every \(y\). Taking expectation over \(Y\) gives
\[
\frac{
\mathbb E_Y\!\left[
\left|
\mathbb E\!\left[\langle h(U),g(Z)\rangle\mid Y\right]
\right|
\right]
}{
\|h\|_{L^\infty}\|g\|_{L^\infty}
}
\le
\mathbb E_Y\!\left[
\sqrt{2I_Y(U;Z)}
\right].
\]
By Jensen's inequality,
\[
\mathbb E_Y\!\left[
\sqrt{2I_Y(U;Z)}
\right]
\le
\sqrt{
2\mathbb E_Y[I_Y(U;Z)]
}
=
\sqrt{2I(U;Z\mid Y)}.
\]
Taking the supremum over all bounded measurable \(h,g\) satisfying the
conditional centering assumptions completes the proof.
\end{proof}

\subsection{Proof of Lemma \ref{lem:budget}}\label{a:proof_budget}

\begin{proof}
Fix a generic predictor \(U\), it follows from the chain rule for mutual information that
\[
    I(U;(Y,Z))
    =
    I(U;Y)+I(U;Z\mid Y)
    =
    u+v.
\]

We next prove the data-processing upper bound. If \(U=f(X,Z)\) is deterministic, then
\[
    U \rightarrow (X,Z) \rightarrow (Y,Z)
\]
is a Markov chain, and hence
\[
    I(U;(Y,Z))
    \le
    I((X,Z);(Y,Z)).
\]
If \(U=(N,f(X,Z,N))\) is a revealed randomized predictor with
\(N\perp (X,Y,Z)\), then
\[
    U \rightarrow (X,Z,N) \rightarrow (Y,Z)
\]
is a Markov chain. Therefore,
\[
    I(U;(Y,Z))
    \le
    I((X,Z,N);(Y,Z)).
\]
Moreover, since \(N\perp (X,Y,Z)\),
\[
    I((X,Z,N);(Y,Z))
    =
    I((X,Z);(Y,Z)) + I(N;(Y,Z)\mid X,Z)
    =
    I((X,Z);(Y,Z)).
\]
Thus, in both cases, we have
\[
    u+v
    =
    I(U;(Y,Z))
    \le
    I((X,Z);(Y,Z)).
\]

Finally, applying the chain rule again gives
\[
    I((X,Z);(Y,Z))
    =
    I((X,Z);Y)+I((X,Z);Z\mid Y).
\]
Similarly, by data processing,
\[
    u=I(U;Y)\le I((X,Z);Y),
\]
and
\[
    v=I(U;Z\mid Y)\le I((X,Z);Z\mid Y),
\]
where, in the revealed randomized case, the same argument is applied with
\((X,Z,N)\) first and then uses \(I(N;Z\mid X,Z,Y)=0\).

Now, if \(Z\) is discrete, then we further have
\[
    I((X,Z);Z\mid Y)
    =
    H(Z\mid Y)-H(Z\mid X,Z,Y)
    =
    H(Z\mid Y),
\]
because \(H(Z\mid X,Z,Y)=0\). If \(Y\) is discrete and finite, then we also have
\[
    I((X,Z);Y)\le H(Y)\le \log|\mathcal Y|.
\]
That completes the proof.
\end{proof}

\subsection{Proof of Lemma \ref{lem:cond-law-matching}}\label{append:cond-law-matching}

\begin{proof}
Since the spaces are Polish, they are standard Borel spaces. Hence regular
conditional distributions exist.

For a fixed \(z\in\mathcal Z\), define the frozen predictor
\[
    U_z:=
    \begin{cases}
    f(X,z), & \text{if } U=f(X,Z),\\
    (N,f(X,z,N)), & \text{if } U=(N,f(X,Z,N)).
    \end{cases}
\]
Because \(N\perp (X,Y,Z)\), the conditional independence \(X\perp Z\mid Y\)
also implies \((X,N)\perp Z\mid Y\). Therefore, for each fixed \(z\), the
frozen predictor \(U_z\), being a measurable function of \((X,N)\), is
conditionally independent of \(Z\) given \(Y\). Hence, for every Borel set
\(A\subseteq\mathcal U\),
\[
    \mathbb P(U_z\in A\mid Y=y,Z=z)
    =
    \mathbb P(U_z\in A\mid Y=y)
\]
for \(P_{Y,Z}\)-almost every \((y,z)\). On the other hand, since \(I(U;Z\mid Y)=0\), we have \(U\perp Z\mid Y\).
Therefore,
\[
    \mathbb P(U\in A\mid Y=y,Z=z)
    =
    \mathbb P(U\in A\mid Y=y)
\]
for \(P_{Y,Z}\)-almost every \((y,z)\). Finally, by the definition of \(U\), for each fixed \(z\in\mathcal Z\), on the fiber \(\{Z=z\}\) the predictor \(U\) is given by the same measurable function of \((X,N)\) as the frozen predictor \(U_z\). Equivalently, for \(P_{Y,Z}\)-almost every \((y,z)\),
\[
    \mathcal L(U\mid Y=y,Z=z)
    =
    \mathcal L(U_z\mid Y=y,Z=z).
\]
Hence, for every Borel set \(A\subseteq\mathcal U\),
\[
    \mathbb P(U\in A\mid Y=y,Z=z)
    =
    \mathbb P(U_z\in A\mid Y=y,Z=z)
\]
for \(P_{Y,Z}\)-almost every \((y,z)\). Combining the three equations above gives
\[
    \mathbb P(U_z\in A\mid Y=y)
    =
    \mathbb P(U\in A\mid Y=y)
\]
for \(P_{Y,Z}\)-almost every \((y,z)\).

It remains to pass from a \(P_{Y,Z}\)-almost everywhere statement to a statement
holding for \(P_Z\)-almost every \(z\) and \(P_Y\)-almost every \(y\). Let
\(\mathcal A\) be a countable \(\pi\)-system generating
\(\mathcal B(\mathcal U)\). For each \(A\in\mathcal A\), define
\[
    N_A
    :=
    \left\{
    (y,z):
    \mathbb P(U_z\in A\mid Y=y)
    \neq
    \mathbb P(U\in A\mid Y=y)
    \right\}.
\]
From the previous paragraph, \(P_{Y,Z}(N_A)=0\). Since
\(P_Y\otimes P_Z\ll P_{Y,Z}\), it follows that
\[
    (P_Y\otimes P_Z)(N_A)=0.
\]
Let $N:=\bigcup_{A\in\mathcal A} N_A$. Since \(\mathcal A\) is countable, it follows from countable additivity that
\[
    (P_Y\otimes P_Z)(N)=0.
\]
Then it follows from Fubini's theorem that the set $Z_N:=\left\{z\in\mathcal Z:P_Y(\{y:(y,z)\in N\})>0\right\}$ satisfies \(P_Z(Z_N)=0\). Define
\[
    \mathcal Z_0:=\mathcal Z\setminus Z_N.
\]
Then \(P_Z(\mathcal Z_0)=1\), and for every \(z_0\in\mathcal Z_0\), the equality
\[
    \mathbb P(U_{z_0}\in A\mid Y=y)
    =
    \mathbb P(U\in A\mid Y=y)
\]
holds for every \(A\in\mathcal A\) and for \(P_Y\)-almost every \(y\). 
It now follows from the \(\pi\)-\(\lambda\) theorem that the two regular conditional laws agree on all Borel sets:
\[
    \mathcal L(U_{z_0}\mid Y=y)
    =
    \mathcal L(U\mid Y=y)
\]
for \(P_Y\)-almost every \(y\). Finally, integrating with respect to \(P_Y(dy)\), we have
\[
    \mathcal L(U_{z_0},Y)=\mathcal L(U,Y).
\]
Since mutual information depends only on the joint law, we also have
\[
    I(U_{z_0};Y)=I(U;Y).
\]
That completes the proof.
\end{proof}

\subsection{Proof of Theorem \ref{thm:necessary-tradeoff}}
\label{append:necessary-tradeoff}

\begin{proof}
We first prove the equality
\[
    \sup_U
    \Big\{
        I(U;Y): I(U;Z\mid Y)=0
    \Big\}
    =
    u_X^\star.
\]

To start, let \(U_X\) be any \(X\)-only predictor:
\[
    U_X\in \{g(X),(N,g(X,N))\},
    \qquad
    N\perp (X,Y,Z).
\]
Since \(X\perp Z\mid Y\), and since \(N\) is independent of \((X,Y,Z)\), we have $U_X\perp Z\mid Y$. Hence,
\[
    I(U_X;Z\mid Y)=0.
\]
Therefore, every \(X\)-only predictor is feasible for the perfect-separation
constraint. That implies
\[
    \sup_U
    \Big\{
        I(U;Y): I(U;Z\mid Y)=0
    \Big\}
    \ge
    \sup_{U_X} I(U_X;Y)
    =
    u_X^\star.
\]

For the other direction, let \(U\) be any generic predictor satisfying
\[
    I(U;Z\mid Y)=0.
\]
By Lemma~\ref{lem:cond-law-matching}, there exists \(z_0\in\mathcal Z_0\) such
that the frozen predictor \(U_{z_0}\) satisfies $\mathcal L(U_{z_0},Y)=\mathcal L(U,Y)$. Therefore,
\[
    I(U_{z_0};Y)=I(U;Y).
\]
Since \(U_{z_0}\) is an \(X\)-only predictor, possibly using the same independent
external noise \(N\), we have
\[
    I(U;Y)
    =
    I(U_{z_0};Y)
    \le
    u_X^\star.
\]
Taking the supremum over all perfectly separated generic predictors \(U\)
gives
\[
    \sup_U
    \Big\{
        I(U;Y): I(U;Z\mid Y)=0
    \Big\}
    \le
    u_X^\star.
\]
That completes the proof for the equality.

Now, we are ready to prove the rest of the claims. If a generic predictor \(U\) satisfies $I(U;Y)>u_X^\star$, then it follows from the equality above and the definition of supremum that $I(U;Z\mid Y)>0$. Indeed, if $I(U;Z\mid Y)=0$, then we have $u_X^\star \geq I(U;Y)>u_X^\star$, a contradiction. Therefore, we have
\[
    I(U;Z\mid Y)>0.
\]

Finally, suppose \(u_{XZ}^\star>u_X^\star\). By the definition of the supremum,
there exists a generic predictor \(U\) such that
\[
    I(U;Y)>u_X^\star.
\]
Hence, it must satisfy
\[
    I(U;Z\mid Y)>0.
\]
That completes the proof.
\end{proof}

\section{Appendix of Section \ref{sec:algorithm}}\label{Append:algorithm}

\subsection{Proof of Proposition \ref{prop:estimation_error}}\label{append:estimation_error}

\begin{proof}
Write the plug-in estimator as a stratified average
\[
\widehat I_B
=\sum_{y\in\mathcal Y}\frac{B_y}{|B|}\,\widehat I_y,
\qquad
\widehat I_y:=\widehat I(U;Z\mid Y=y),
\]
where $B_y:=|\{i:Y_i=y\}|$ is the number of samples with label $y$, and $\widehat I_y$ is the ordinary empirical mutual information computed from those $B_y$ samples.

\textbf{Part (1): Asymptotic bias.}
Fix $y\in\mathcal Y$ and condition on $\{B_y=n\}$ with $n\ge 1$. On this event, $\widehat I_y$ is the standard empirical MI estimator from $n$ i.i.d.\ samples drawn from the conditional distribution $P_{UZ\mid Y=y}$.
Under the full-support assumption ($P(u,z|y) \geq q_{\text{min}} > 0$), the Miller-Madow expansion for the empirical MI bias (see \citet{paninski2003estimation}) yields:
\begin{equation}\label{eq:mm_cond}
\mathbb E[\widehat I_y\mid B_y=n]
=
I(U;Z\mid Y=y)
+
\frac{(K_U-1)(K_Z-1)}{2n}
+
O(n^{-2}).
\end{equation}
Multiply \eqref{eq:mm_cond} by $B_y/|B|$ and take expectations over $B_y$:
\begin{align*}
\mathbb E\!\left[\frac{B_y}{|B|}\widehat I_y\right]
&=
\mathbb E\!\left[\frac{B_y}{|B|}\right] I(U;Z\mid Y=y)
+\frac{(K_U-1)(K_Z-1)}{2|B|}\,\mathbb E\!\left[\mathbbm{1}_{\{B_y\ge 1\}}\right]
+R_y,
\end{align*}
where the remainder satisfies
\[
|R_y|
\;\le\;
\mathbb E\!\left[\frac{B_y}{|B|}\,O(B_y^{-2})\,\mathbbm{1}_{\{B_y\ge 1\}}\right]
=
O\!\left(\frac{1}{|B|}\,\mathbb E\!\left[\frac{1}{B_y}\mathbbm{1}_{\{B_y\ge 1\}}\right]\right).
\]
Since $B_y\sim\mathrm{Binomial}(|B|,p_y)$ with $p_y:=P(Y=y)\ge p_{\min}$, a Chernoff bound gives $\mathbb{P}(B_y\le \tfrac12 |B|p_y)\le e^{-c|B|}$ for some $c=c(p_{\min})>0$. The inverse moment is dominated by the typical set:
\[
\mathbb E\!\left[\frac{1}{B_y}\mathbbm{1}_{\{B_y\ge 1\}}\right]
\le
\frac{2}{|B|p_y}+\mathbb{P}\!\left(B_y\le \tfrac12 |B|p_y\right)
=
O(|B|^{-1}).
\]
Therefore, $R_y=O(|B|^{-2})$. Also, $\mathbb E[B_y/|B|]=p_y$ and $\mathbb E[\mathbbm{1}_{\{B_y\ge 1\}}]=1-(1-p_y)^{|B|}=1-O(e^{-c|B|})$. Summing over $y\in\mathcal Y$ gives
\[
\mathbb E[\widehat I_B]
=
\sum_y p_y I(U;Z\mid Y=y)
+
\frac{K_Y (K_U-1)(K_Z-1)}{2|B|}
+
O(|B|^{-2}),
\]
and $\sum_y p_y I(U;Z\mid Y=y)=I(U;Z\mid Y)$. That proves part (1).

\textbf{Part (2): Concentration.}
We use a high-probability ``good'' event to avoid empty/near-empty strata (conditioned on $\{Y = y\}$) which cause instability in the entropy estimates. Define
\[
\mathcal G
:=
\Big\{\forall y:\ B_y\ge \tfrac12 |B| p_{\min}\Big\}
\cap
\Big\{\forall (u,z,y):\ N_{u,z,y}\ge \tfrac12 |B|\,p_{\min}q_{\min}\Big\},
\]
where $N_{u,z,y}$ is the count of samples equal to $(u,z,y)$.
By Chernoff bounds and a union bound over all strata and outcomes, there exists a constant \(c_0=c_0(p_{\min},q_{\min})>0\) such that:
\begin{equation}\label{eq:G_fail}
\epsilon_B := \mathbb{P}(\mathcal G^c)
\le
(K_Y + K_YK_UK_Z)\exp\!\big(-c_0 |B| p_{\min}q_{\min}\big).
\end{equation}
For sufficiently large $|B|$, $\epsilon_B \le \delta/2$.

Next, we establish the bounded difference property on $\mathcal G$. Let $F(D):=\widehat I_B$ viewed as a function of the dataset $D$. Consider two datasets $D,D'$ differing by exactly one sample. This change affects at most two strata (if the $Y$-label changes). On \(\mathcal G\), for every \((u,z,y)\),
\[
\widehat P(u,z\mid y)=\frac{N_{u,z,y}}{B_y}
\ge
\frac{\tfrac12 |B|p_{\min}q_{\min}}{|B|}
= \frac12 p_{\min}q_{\min}.
\]
In particular, all empirical conditional joint probabilities are uniformly bounded away from zero on \(\mathcal G\). The function \(h(t)=-t\log t\) has derivative $|h'(t)|=|1+\log t|$, which is uniformly bounded on \([\eta,1]\) for every \(\eta>0\). On \(\mathcal G\), the empirical conditional cell probabilities are bounded below by $\eta:=\tfrac12 p_{\min}q_{\min}$, so
\[
|h'(t)|\le 1+\big|\log \eta\big|=:L_0
\qquad\text{for all } t\in[\eta,1].
\] Thus, the entropy functional is $L_0$-Lipschitz with respect to the $\ell_1$ norm.
Changing one sample in stratum $y$ changes the empirical distribution by at most $2/B_y$. Consequently, the change in the local MI estimate $\widehat I_y$ is bounded:
\[
|\Delta \widehat I_y| \le 3L_0 \|\Delta \widehat P_{UZ|y}\|_1 \le \frac{6L_0}{B_y}.
\]
The total change in the weighted sum $\widehat I_B = \sum \frac{B_y}{|B|} \widehat I_y$ involves changes to the weights $B_y/|B|$ and the values $\widehat I_y$. Since \(B_y \ge \frac12 |B|p_{\min}\) on \(\mathcal G\), we have
\[
|\Delta \widehat I_y|
\le
\frac{12L_0}{|B|p_{\min}}.
\]
Moreover, the weights \(B_y/|B|\) can change by at most \(1/|B|\) for at most two strata when one sample is modified, and each \(\widehat I_y\) is bounded by
\[
M_{\max}:=\log(K_UK_Z).
\]
Hence, there exists a constant
\[
c=c(K_U,K_Z,p_{\min},q_{\min})
\]
such that
\[
|F(D)-F(D')|
\le
\frac{c}{|B|}
\qquad
\forall D,D'\in\mathcal G \text{ with } d_H(D,D')=1.
\]
Here, $d_H$ is the Hamming distance. To apply concentration bounds rigorously despite the conditioning, we consider a Lipschitz extension. Since $F$ satisfies the bounded difference property with constant $c/|B|$ on $\mathcal{G}$, there exists an extension \(F^\star\) defined on the entire domain
\((\mathcal U\times\mathcal Y\times\mathcal Z)^{|B|}\) that coincides with $F$ on $\mathcal{G}$ and preserves the bounded differences property globally. Explicitly, we use the McShane-Whitney extension with Hamming distance $d_H$:
\[
F^\star(D) := \inf_{D' \in \mathcal{G}} \left( F(D') + \frac{c}{|B|} d_H(D, D') \right).
\]
By construction, $F^\star(D)=F(D)$ whenever $D \in \mathcal{G}$, and $F^\star$ satisfies the bounded difference condition with constant $c/|B|$ everywhere. Applying McDiarmid's inequality to $F^\star$:
\[
\mathbb{P}\!\left(\big|F^\star(D) - \mathbb{E}[F^\star]\big| \ge t\right)
\le
2\exp\!\left(-\frac{2|B|t^2}{c^2}\right).
\]
Now, since $\{F \neq F^\star\} \subseteq \mathcal G^c$, and the global range is bounded by $M_{\max}$, we have $$|\mathbb{E}[F^\star] - \mathbb{E}[F]| \le M_{\max} \mathbb{P}(\mathcal G^c) = M_{\max} \epsilon_B.$$
Thus, for any $t > 0$:
\[
\mathbb{P}\big(|F - \mathbb{E}[F]| \ge t + M_{\max}\epsilon_B\big)
\le
2\exp\!\left(-\frac{2|B|t^2}{c^2}\right) + \epsilon_B.
\]
Setting $t = c\sqrt{\frac{\log(4/\delta)}{2|B|}}$, the first term becomes $\le \delta/2$. For $|B|$ large, $\epsilon_B \le \delta/2$ and the exponentially decaying bias term $M_{\max}\epsilon_B$ is negligible compared to the concentration term (specifically, $M_{\max}\epsilon_B \le \frac{c}{\sqrt{2}}\sqrt{\frac{\log(2/\delta)}{|B|}}$).
Summing the terms and simplifying, we obtain a constant
\[
C=C(K_U,K_Y,K_Z,p_{\min},q_{\min})
\]
such that
\[
\big|\widehat I_B-\mathbb E[\widehat I_B]\big|
\le
C\sqrt{\frac{\log(2/\delta)}{|B|}}
\quad \text{with probability at least }1-\delta.
\]
This completes the proof of Part~(2).
\end{proof}

\newpage

\section{Appendix to Section \ref{sec:experiments}}

\subsection{Datasets Information}\label{append:data_info}

See Table \ref{tab:datasets}.

\begin{table}[t]
\centering
\footnotesize
\setlength{\tabcolsep}{4pt}
\renewcommand{\arraystretch}{1.15}
\newcolumntype{Y}{>{\centering\arraybackslash}X} 

\begin{tabularx}{\linewidth}{@{}l r Y Y l@{}}
\toprule
\textbf{Dataset} & \textbf{\# Samples} & \textbf{Input Features ($X$)} & \textbf{Target ($Y$)} & \textbf{Sensitive ($Z$)} \\
\midrule
Adult  & 32{,}561  & Tabular demographic and employment features & Income $\ge$ 50K & Gender (M/F) \\

COMPAS & 5{,}278   & Tabular criminal-history and demographic features & Recidivism within 2 years & Race (Black/White) \\

Bank   & 45{,}211  & Tabular client and campaign features & Term-deposit subscription & Marital status, binarized \\

CelebA & 202{,}599 & Face images; fixed ResNet18 embeddings used in experiments & Smiling & Gender (M/F) \\

ACSOccupation & 2{,}576 & ACS demographic and socioeconomic covariates; complete-case preprocessing & Occupation, top 20 classes & Race, 9 groups \\

ACSOccupation-Large & 575{,}989 & ACS demographic and socioeconomic covariates; missing values retained as categorical levels & Occupation, top 20 classes & Race, 9 groups \\
\bottomrule
\end{tabularx}
\caption{Summary of datasets used in experiments.}
\label{tab:datasets}
\end{table}

\subsection{Metrics Information}\label{append:metrics}

See Table~\ref{tab:metrics}

\begin{table*}[t]
\centering
\footnotesize
\begin{tabularx}{\textwidth}{@{}l l X@{}}
\toprule
\textbf{Metric} & \textbf{Category} & \textbf{Definition} \\
\midrule
Accuracy  & Utility
& $\frac{1}{N}\sum_{i=1}^N \mathbf{1}\{\hat y_i = y_i\}$ \\

Expected randomized accuracy & Utility
& $\frac{1}{N}\sum_{i=1}^N Q_\theta(y_i\mid x_i,z_i)$, used for randomized-policy evaluation. \\

AUROC & Utility
& Area under the receiver operating characteristic curve, computed from the positive-class score for binary tasks. \\

Macro-AUROC & Utility
& For multi-class tasks,
$\displaystyle
\mathrm{AUROC}_{\mathrm{macro}}
=
\frac{1}{|\mathcal Y|}
\sum_{y\in\mathcal Y}
\mathrm{AUROC}\!\left(
\mathbf 1\{Y=y\},
Q_\theta(y\mid X,Z)
\right).
$ \\

MI ($u$)  & Utility
& $\displaystyle
\sum_{\hat y,y}\hat p(\hat y,y)
\log\!\frac{\hat p(\hat y,y)}{\hat p(\hat y)\hat p(y)}
$ \quad (plug-in estimator). \\
\midrule
EO-gap & Violation
& $\displaystyle
\tfrac12 \left[
(\max_z \mathrm{FPR}_z - \min_z \mathrm{FPR}_z)
+
(\max_z \mathrm{FNR}_z - \min_z \mathrm{FNR}_z)
\right],
$
reported for binary tasks. \\

CMI ($v$) & Violation
& $\displaystyle
\sum_y \hat p(y)
\sum_{\hat y,z}
\hat p(\hat y,z\mid y)
\log
\frac{
\hat p(\hat y,z\mid y)
}{
\hat p(\hat y\mid y)\hat p(z\mid y)
}
$
\quad (plug-in estimator of $I(\hat Y;Z\mid Y)$). \\
\bottomrule
\end{tabularx}
\caption{
Evaluation metrics. Empirical probabilities $\hat p$ are computed on test folds in cross-validation. Binary AUROC and EO-gap are reported for binary benchmarks; macro-AUROC and expected randomized accuracy are reported for ACSOccupation.
}
\label{tab:metrics}
\end{table*}

\subsection{Comparison Methods \& Implementations}
\label{appen:methods}

\begin{table*}[t]
\centering
\footnotesize
\setlength{\tabcolsep}{4pt}
\renewcommand{\arraystretch}{1.2}
\begin{tabularx}{\textwidth}{@{}l c X@{}}
\toprule
\textbf{Method} & \textbf{Type} & \textbf{Implementation Mechanism} \\
\midrule
ERM & -- &
Unconstrained cross-entropy training with the shared MLP backbone. We use the gradient-normalized cross entropy for consistency. \\

Threshold Optimizer~\citep{hardt2016equality} & Post &
Fairlearn~\cite{bird2020fairlearn} post-processing method that learns a group-dependent randomized decision rule from a trained score function to satisfy equalized-odds constraints. \\

EG Reductions~\citep{agarwal2018reductions} & In &
Fairlearn exponentiated-gradient reduction with equalized-odds constraints, implemented using a sample-weight-aware neural backbone adapter. \\

Adversarial~\cite{madras2018learning} & In &
Min-max training in which a discriminator predicts \(Z\) from the learned representation, while the predictor maximizes the discriminator loss. \\

FR-Train~\citep{roh2020frtrain} & In &
$Y$-conditional adversarial baseline: the discriminator predicts \(Z\) from \((\widehat p_\theta,Y)\), where \(\widehat p_\theta\) is the predictive posterior. \\

FairDRO~\citep{fairdro2024} & In &
Group-loss reweighting over \((Y,Z)\)-groups, interpolating empirical group weights with softmax worst-group weights. \\

InfoFair~\citep{kang2022infofair} & In &
Conditional contrastive MI proxy: an InfoNCE-style~\cite{oord2018cpc} term is computed within \(Y\)-condition/slice on a learned latent representation. \\

Fair Dummy~\citep{romano2021fair} & In &
Distribution-matching baseline using a dummy sensitive attribute \(\widetilde Z\sim P(Z\mid Y)\) and a discriminator distinguishing real from dummy sensitive inputs. \\

\textbf{CMI-NN (Ours)} & In &
\textbf{Direct differentiable plug-in estimate of \(I(\hat{Y};Z\mid Y)\), trained with gradient-norm balancing.} \\
\bottomrule
\end{tabularx}
\caption{
Comparison methods. ``In'' denotes in-processing methods and ``Post'' denotes post-processing methods. All neural methods use the same MLP backbone, optimizer family, training horizon, and cross-validation splits whenever applicable.
}
\label{tab:baselines}
\end{table*}

Our baselines are chosen to cover the main existing methods that enforce separation from different approaches: constrained optimization, information-theoretic penalties, distributional robustness, and distribution matching. This choice has two goals: (i) to make our results indirectly comparable to prior and future methods within each line of work, and (ii) to make the differences among these algorithmic approaches visible in the experiments.

For implementation, we use authors' public code when directly applicable. Otherwise, we follow the corresponding paper pseudocode as closely as possible and implement controlled adaptations under the same model backbone, data splits, and preprocessing.

\textbf{Why these methods.}
\emph{EG Reductions}~\cite{agarwal2018reductions} represents the rate-constrained, reductions-based in-processing family with provable feasibility in expectation.
\emph{Adversarial Debiasing}~\cite{madras2018learning} represents the standard minimax approach that removes sensitive information through an adversary predicting \(Z\).
\emph{FR-Train}~\cite{roh2020frtrain} and \emph{InfoFair}~\cite{kang2022infofair} represent variational- and learning-based information-theoretic approaches: FR-Train uses a \(Y\)-conditional adversarial proxy for separation, while our InfoFair implementation uses an InfoNCE-based variational proxy with \(Y\)-condition to target conditional sensitive information.
\emph{FairDRO}~\cite{fairdro2024} captures distributionally robust optimization through worst-group or high-loss group reweighting.
\emph{Fair Dummy}~\cite{romano2021fair} exemplifies distribution matching by constructing a dummy sensitive attribute and aligning the real and dummy class-conditional distributions. This is the Monte Carlo approach to separation.
Our method \emph{CMI} directly penalizes an empirical conditional mutual information, providing a single differentiable objective aligned with separation.

\paragraph{Shared neural backbone and input convention.}
All neural in-processing methods use the same two-hidden-layer MLP backbone. The network returns both logits and the penultimate representation \(h\), allowing methods to apply penalties or adversaries to logits, predictive posteriors, latent representations, or \(h\). Unless otherwise stated, methods are trained with the same Adam optimizer, number of epochs, and minibatch protocol. When a method is trained with sensitive input, we concatenate \(Z\) to the feature vector and train on \([Z|X]\); otherwise, the method is trained on \(X\) alone. The corresponding evaluation wrapper preserves the same input convention at test time.

\paragraph{ERM.}
The ERM baseline minimizes cross-entropy with no fairness penalty. In the experiments, we use a gradient-normalized version of cross-entropy consistently across the comparison methods when applicable,
\[
    \mathcal L_{\mathrm{ERM}}
    =
    \frac{\mathcal L_{\mathrm{ce}}}
    {\|\partial_h \mathcal L_{\mathrm{ce}}\|+\epsilon},
\]
where the gradient norm is detached and acts only as a scalar normalization factor. This makes ERM directly comparable to the gradient-normalized CMI and InfoFair objectives. ERM serves as the unconstrained reference for utility and separation violation.

\paragraph{Threshold Optimizer.}
Threshold Optimizer represents the post-processing approach of Hardt et al.~\citep{hardt2016equality}. We implement it using Fairlearn's \texttt{ThresholdOptimizer} with equalized-odds constraints \cite{bird2020fairlearn}. A base score model \(s(X)\) is first trained, and the post-processor is then fit on a held-out validation set. The learned rule defines a group-dependent randomized policy
\[
    \hat{Y} \sim q_z(\cdot\mid s(X)),
\]
chosen to satisfy empirical equalized-odds constraints while minimizing classification cost. At deployment time, the rule depends only on \(s(X)\) and \(Z\), not on the unknown true label \(Y\). In evaluation, we report policy-level quantities from the randomized post-processing distribution and also consider the deterministic argmax of that policy. For ranking metrics such as AUROC, we use the base model posterior score.

\paragraph{Exponentiated Gradient Reductions.}
EG Reductions~\citep{agarwal2018reductions} is implemented using Fairlearn's \texttt{ExponentiatedGradient} with the \texttt{EqualizedOdds} constraint. To make the comparison compatible with the neural methods, we use a sample-weight-aware adapter around the shared PyTorch backbone as the base estimator. The adapter trains the backbone with weighted cross-entropy losses, as required by the reductions framework. The resulting EG model represents a randomized policy over classifiers. We evaluate both randomized policy probabilities and deterministic argmax decisions when appropriate.

\paragraph{Adversarial Debiasing.}
The adversarial baseline \cite{madras2018learning} trains a predictor and a sensitive-attribute discriminator in alternating steps. The discriminator receives the learned representation \(h\) and predicts \(Z\). The predictor is trained to minimize task loss while making \(Z\) difficult to recover from \(h\):
\[
    \mathcal L_{\mathrm{pred}}
    =
    \mathcal L_{\mathrm{ce}}
    -
    \lambda \mathcal L_{\mathrm{adv}}.
\]
Each minibatch performs \(k_D\) discriminator updates followed by one predictor update. In the multi-group case, the discriminator uses a multi-class cross-entropy loss; in the binary case, this reduces to the corresponding binary discriminator loss. This baseline represents representation-level adversarial removal of sensitive information.

\paragraph{FR-Train.}
Our FR-Train baseline \cite{roh2020frtrain} follows the $Y$-conditional adversarial idea used to target equalized-odds-type dependence. Instead of predicting \(Z\) from \(h\) alone, the discriminator receives the model posterior and the true label:
\[
    \bigl(\widehat p_\theta(\cdot\mid X,Z),\ \mathrm{onehot}(Y)\bigr).
\]
It is trained to predict \(Z\), while the predictor maximizes this discriminator loss:
\[
    \mathcal L_{\mathrm{pred}}
    =
    \mathcal L_{\mathrm{ce}}
    -
    \lambda \mathcal L_{\mathrm{disc}}.
\]
Thus, this baseline acts as a learned class-conditional proxy for reducing dependence between \(\hat{Y}\) and \(Z\) given \(Y\). It is included as a FR-Train implementation that is suitable for our comparison purpose.

\paragraph{FairDRO.}
FairDRO \cite{fairdro2024} is implemented as group-loss reweighting over \((Y,Z)\)-groups. We define a group identifier
\[
    g = Y\cdot |\mathcal Z| + Z,
\]
so that groups correspond to the label-sensitive cells relevant for equalized-odds-type constraints. For each minibatch, we compute the average loss \(L_g\) over the observed groups. The final group weight interpolates between the empirical group weight \(w_{\mathrm{erm}}\) and a soft worst-group weight
\[
    w_{\mathrm{dro}}(g)
    =
    \frac{\exp(\eta L_g)}
    {\sum_{g'}\exp(\eta L_{g'})}.
\]
The resulting objective is
\[
    \mathcal L_{\mathrm{FairDRO}}
    =
    \sum_g
    \Bigl((1-\lambda)w_{\mathrm{erm}}(g)
    +
    \lambda w_{\mathrm{dro}}(g)\Bigr)
    L_g.
\]
This baseline compares direct CMI regularization with a robust optimization strategy that emphasizes high-loss \((Y,Z)\)-groups.

\paragraph{InfoFair.}
InfoFair \cite{kang2022infofair} formulates group-fair learning through information-theoretic regularization, primarily for statistical-parity-type constraints. Since our target notion is separation, we adapt this idea by applying a conditional contrastive proxy within each \(Y\)-stratum. Concretely, the network maps logits to a normalized latent representation, and an InfoNCE-style \cite{oord2018cpc} objective is computed among samples with the same label \(Y\), using sensitive labels \(Z\) to define the contrastive structure. This gives a minibatch-based variational proxy for conditional sensitive information in the learned representation. We train it with the same gradient-norm balancing used by CMI:
\[
    \mathcal L_{\mathrm{InfoFair}}
    =
    (1-\lambda)
    \frac{\mathcal L_{\mathrm{ce}}}
    {\|\partial_h\mathcal L_{\mathrm{ce}}\|+\epsilon}
    +
    \lambda
    \frac{\mathcal L_{\mathrm{InfoFair}}}
    {\|\partial_h\mathcal L_{\mathrm{InfoFair}}\|+\epsilon}.
\]
If a minibatch contains no valid contrastive signal within a label stratum, the contrastive term is set to zero for that batch. This baseline represents a variational information-proxy approach to separation.

\paragraph{Fair Dummy.}
Fair Dummy~\citep{romano2021fair} is implemented as a distribution-matching adversarial baseline. For each training sample, we generate a dummy sensitive attribute
\[
    \widetilde Z \sim \widehat P(Z\mid Y),
\]
estimated from the training data. A discriminator is trained to distinguish tuples involving the real sensitive attribute \(Z\) from tuples involving the dummy attribute \(\widetilde Z\). In our implementation, the discriminator input includes the label \(Y\), either the predictive signal or learned representation, and the real or dummy sensitive attribute. The predictor is then trained to minimize cross-entropy while making the real and dummy distributions difficult to distinguish:
\[
    \mathcal L_{\mathrm{pred}}
    =
    \mathcal L_{\mathrm{ce}}
    -
    \lambda\mathcal L_{\mathrm{dummy}}.
\]
This provides a comparison between direct CMI regularization and distribution-matching approaches to class-conditional group fairness.

\paragraph{CMI-NN (Ours).}
Our method directly computes a differentiable plug-in estimate of conditional mutual information from the minibatch predictive posterior. For each minibatch, we form soft empirical counts for \((\hat{Y},Y,Z)\) using the model posterior \(\widehat p_\theta(\cdot\mid X,Z)\), and compute
\[
    \widehat I_{\mathrm{CMI}}
    \approx
    I(\hat{Y};Z\mid Y).
\]
The main training objective is
\[
    \mathcal L_{\mathrm{CMI}}
    =
    (1-\lambda)
    \frac{\mathcal L_{\mathrm{ce}}}
    {\|\partial_h\mathcal L_{\mathrm{ce}}\|+\epsilon}
    +
    \lambda
    \frac{\widehat I_{\mathrm{CMI}}}
    {\|\partial_h\widehat I_{\mathrm{CMI}}\|+\epsilon}.
\]
The gradient norms are detached and act as adaptive scale factors. Unlike adversarial, contrastive, or distributionally robust baselines, the CMI penalty is computed directly from the empirical soft-count distribution associated with the separation violation itself.

\paragraph{CMI without gradient normalization.}
We also include a raw-CMI ablation in which the training objective is
\[
    \mathcal L_{\mathrm{raw}}
    =
    (1-\lambda)\mathcal L_{\mathrm{ce}}
    +
    \lambda\,c\,\widehat I_{\mathrm{CMI}},
\]
where \(c\) is a fixed scale factor. This ablation isolates the effect of gradient-norm balancing from the effect of directly penalizing empirical CMI.

\subsection{Compact Numerical Summary of Low-separation-violation Operating Points}
\label{append:compact_operating_points}

The main figures show the full empirical separation–utility frontier, while the compact tables report operational summaries at representative low-violation thresholds. For each method and threshold, we select the trained operating point with the highest mean utility among those satisfying the corresponding violation constraint. This avoids comparing methods at arbitrary values of their fairness parameter and instead asks which method achieves the best predictive performance under the same fairness budget.

For the binary benchmarks, we report the best Accuracy and AUROC subject to EO-gap constraints:
\[
    \mathrm{EO\text{-}gap} \leq \epsilon,
    \qquad
    \epsilon \in \{0.01,0.02,0.05\}.
\]
For ACSOccupation, where the operational plots use posterior-level separation violation directly, we report the best expected randomized Accuracy and macro-AUROC subject to CMI constraints:
\[
    I(\hat Y;Z\mid Y) \leq \epsilon,
    \qquad
    \epsilon \in \{0.0025,0.005,0.01\}.
\]
All entries are computed from mean performance over cross-validation folds. The tables provide compact numerical summaries at representative low-separation-violation thresholds, while the plots show the full empirical separation-utility trade-off traced by the trained operating points.

\begin{table*}[t]
\centering
\caption{
\textbf{Adult compact operating-point summary.}
For each method, policy, constraint threshold, and target metric, we report the best mean target value among trained operating points satisfying the corresponding EO-gap constraint.
}
\label{tab:adult_compact_operating_points}
\resizebox{\textwidth}{!}{
\begin{tabular}{lllrlrlrl}
\toprule
Method & Policy & $\lambda$ & Constraint & Threshold & Constraint value & Metric & Target value & Folds \\
\midrule
Adversarial & randomized & 100 & EO gap & 0.01 & 0.0008638 $\pm$ 0.00069 & Accuracy & 0.7251 $\pm$ 0.024 & 5 \\
FairDummy & randomized & 1.678e+07 & EO gap & 0.01 & 0.008873 $\pm$ 0.0061 & Accuracy & 0.6816 $\pm$ 0.003 & 5 \\
CMI & randomized & 0.8 & EO gap & 0.01 & 0.007937 $\pm$ 0.0031 & Accuracy & 0.68 $\pm$ 0.0027 & 5 \\
FR-Train & randomized & 3.162 & EO gap & 0.01 & 0.0027 $\pm$ 0.0054 & Accuracy & 0.5466 $\pm$ 0.28 & 5 \\
CMI & randomized & 0.8 & EO gap & 0.01 & 0.007937 $\pm$ 0.0031 & AUROC & 0.7934 $\pm$ 0.0059 & 5 \\
FairDummy & randomized & 1.678e+07 & EO gap & 0.01 & 0.008873 $\pm$ 0.0061 & AUROC & 0.7364 $\pm$ 0.01 & 5 \\
Adversarial & randomized & 3.162 & EO gap & 0.01 & 0.001216 $\pm$ 0.00079 & AUROC & 0.5994 $\pm$ 0.014 & 5 \\
FR-Train & randomized & 31.62 & EO gap & 0.01 & 0.004166 $\pm$ 0.0053 & AUROC & 0.5108 $\pm$ 0.16 & 5 \\
Adversarial & randomized & 100 & EO gap & 0.02 & 0.0008638 $\pm$ 0.00069 & Accuracy & 0.7251 $\pm$ 0.024 & 5 \\
ERM-Deep (X) & randomized & -- & EO gap & 0.02 & 0.01204 $\pm$ 0.0087 & Accuracy & 0.7125 $\pm$ 0.0026 & 5 \\
ExpGrad & randomized & -- & EO gap & 0.02 & 0.01204 $\pm$ 0.0087 & Accuracy & 0.7125 $\pm$ 0.0026 & 5 \\
CMI & randomized & 0.7 & EO gap & 0.02 & 0.01511 $\pm$ 0.0033 & Accuracy & 0.6959 $\pm$ 0.0017 & 5 \\
FairDummy & randomized & 2.097e+06 & EO gap & 0.02 & 0.0128 $\pm$ 0.0063 & Accuracy & 0.6879 $\pm$ 0.0043 & 5 \\
FR-Train & randomized & 3.162 & EO gap & 0.02 & 0.0027 $\pm$ 0.0054 & Accuracy & 0.5466 $\pm$ 0.28 & 5 \\
CMI & randomized & 0.7 & EO gap & 0.02 & 0.01511 $\pm$ 0.0033 & AUROC & 0.7936 $\pm$ 0.0072 & 5 \\
ERM-Deep (X) & randomized & -- & EO gap & 0.02 & 0.01204 $\pm$ 0.0087 & AUROC & 0.7928 $\pm$ 0.0043 & 5 \\
ExpGrad & randomized & -- & EO gap & 0.02 & 0.01204 $\pm$ 0.0087 & AUROC & 0.7928 $\pm$ 0.0043 & 5 \\
FairDummy & randomized & 2.097e+06 & EO gap & 0.02 & 0.0128 $\pm$ 0.0063 & AUROC & 0.7535 $\pm$ 0.014 & 5 \\
Adversarial & randomized & 3.162 & EO gap & 0.02 & 0.001216 $\pm$ 0.00079 & AUROC & 0.5994 $\pm$ 0.014 & 5 \\
FR-Train & randomized & 31.62 & EO gap & 0.02 & 0.004166 $\pm$ 0.0053 & AUROC & 0.5108 $\pm$ 0.16 & 5 \\
Adversarial & randomized & 100 & EO gap & 0.05 & 0.0008638 $\pm$ 0.00069 & Accuracy & 0.7251 $\pm$ 0.024 & 5 \\
CMI & randomized & 0.3 & EO gap & 0.05 & 0.0493 $\pm$ 0.0055 & Accuracy & 0.7132 $\pm$ 0.0023 & 5 \\
ERM-Deep (X) & randomized & -- & EO gap & 0.05 & 0.01204 $\pm$ 0.0087 & Accuracy & 0.7125 $\pm$ 0.0026 & 5 \\
ExpGrad & randomized & -- & EO gap & 0.05 & 0.01204 $\pm$ 0.0087 & Accuracy & 0.7125 $\pm$ 0.0026 & 5 \\
FairDummy & randomized & 4096 & EO gap & 0.05 & 0.03471 $\pm$ 0.015 & Accuracy & 0.7103 $\pm$ 0.0061 & 5 \\
FairDRO & randomized & 0.2 & EO gap & 0.05 & 0.03878 $\pm$ 0.0091 & Accuracy & 0.6665 $\pm$ 0.0057 & 5 \\
FR-Train & randomized & 3.162 & EO gap & 0.05 & 0.0027 $\pm$ 0.0054 & Accuracy & 0.5466 $\pm$ 0.28 & 5 \\
InfoFair & randomized & 1 & EO gap & 0.05 & 0.02015 $\pm$ 0.015 & Accuracy & 0.4376 $\pm$ 0.14 & 5 \\
FairDummy & randomized & 4096 & EO gap & 0.05 & 0.03471 $\pm$ 0.015 & AUROC & 0.8014 $\pm$ 0.003 & 5 \\
CMI & randomized & 0.3 & EO gap & 0.05 & 0.0493 $\pm$ 0.0055 & AUROC & 0.7983 $\pm$ 0.005 & 5 \\
FairDRO & randomized & 0.2 & EO gap & 0.05 & 0.03878 $\pm$ 0.0091 & AUROC & 0.7954 $\pm$ 0.0036 & 5 \\
ERM-Deep (X) & randomized & -- & EO gap & 0.05 & 0.01204 $\pm$ 0.0087 & AUROC & 0.7928 $\pm$ 0.0043 & 5 \\
ExpGrad & randomized & -- & EO gap & 0.05 & 0.01204 $\pm$ 0.0087 & AUROC & 0.7928 $\pm$ 0.0043 & 5 \\
Adversarial & randomized & 3.162 & EO gap & 0.05 & 0.001216 $\pm$ 0.00079 & AUROC & 0.5994 $\pm$ 0.014 & 5 \\
FR-Train & randomized & 31.62 & EO gap & 0.05 & 0.004166 $\pm$ 0.0053 & AUROC & 0.5108 $\pm$ 0.16 & 5 \\
InfoFair & randomized & 1 & EO gap & 0.05 & 0.02015 $\pm$ 0.015 & AUROC & 0.4499 $\pm$ 0.16 & 5 \\
Adversarial & deterministic & 1 & EO gap & 0.01 & 0.004927 $\pm$ 0.0031 & Accuracy & 0.7461 $\pm$ 0.0054 & 5 \\
FairDummy & deterministic & 1.074e+09 & EO gap & 0.01 & 0.002352 $\pm$ 0.0053 & Accuracy & 0.2904 $\pm$ 0.11 & 5 \\
FairDummy & deterministic & 512 & EO gap & 0.01 & 0.008689 $\pm$ 0.01 & AUROC & 0.8103 $\pm$ 0.0029 & 5 \\
Adversarial & deterministic & 3.162 & EO gap & 0.01 & 0.003908 $\pm$ 0.0021 & AUROC & 0.5994 $\pm$ 0.014 & 5 \\
CMI & deterministic & 1 & EO gap & 0.02 & 0.01796 $\pm$ 0.018 & Accuracy & 0.7509 $\pm$ 0.007 & 5 \\
Adversarial & deterministic & 1 & EO gap & 0.02 & 0.004927 $\pm$ 0.0031 & Accuracy & 0.7461 $\pm$ 0.0054 & 5 \\
InfoFair & deterministic & 1 & EO gap & 0.02 & 0.01596 $\pm$ 0.013 & Accuracy & 0.7266 $\pm$ 0.033 & 5 \\
FairDummy & deterministic & 2.621e+05 & EO gap & 0.02 & 0.01453 $\pm$ 0.01 & Accuracy & 0.3988 $\pm$ 0.22 & 5 \\
FR-Train & deterministic & 316.2 & EO gap & 0.02 & 0.01487 $\pm$ 0.018 & Accuracy & 0.3799 $\pm$ 0.22 & 5 \\
FairDummy & deterministic & 512 & EO gap & 0.02 & 0.008689 $\pm$ 0.01 & AUROC & 0.8103 $\pm$ 0.0029 & 5 \\
Adversarial & deterministic & 3.162 & EO gap & 0.02 & 0.003908 $\pm$ 0.0021 & AUROC & 0.5994 $\pm$ 0.014 & 5 \\
FR-Train & deterministic & 31.62 & EO gap & 0.02 & 0.01447 $\pm$ 0.015 & AUROC & 0.5108 $\pm$ 0.16 & 5 \\
CMI & deterministic & 1 & EO gap & 0.02 & 0.01796 $\pm$ 0.018 & AUROC & 0.4935 $\pm$ 0.1 & 5 \\
InfoFair & deterministic & 1 & EO gap & 0.02 & 0.01596 $\pm$ 0.013 & AUROC & 0.4499 $\pm$ 0.16 & 5 \\
FairDRO & deterministic & 0.1 & EO gap & 0.05 & 0.03358 $\pm$ 0.034 & Accuracy & 0.7916 $\pm$ 0.0067 & 5 \\
CMI & deterministic & 0.2 & EO gap & 0.05 & 0.03149 $\pm$ 0.034 & Accuracy & 0.7914 $\pm$ 0.0062 & 5 \\
ERM-Deep (X) & deterministic & -- & EO gap & 0.05 & 0.03604 $\pm$ 0.018 & Accuracy & 0.79 $\pm$ 0.0058 & 5 \\
InfoFair & deterministic & 0.8 & EO gap & 0.05 & 0.03269 $\pm$ 0.023 & Accuracy & 0.7892 $\pm$ 0.0066 & 5 \\
ExpGrad & deterministic & -- & EO gap & 0.05 & 0.03228 $\pm$ 0.022 & Accuracy & 0.7886 $\pm$ 0.0047 & 5 \\
FR-Train & deterministic & 0.3162 & EO gap & 0.05 & 0.03425 $\pm$ 0.026 & Accuracy & 0.7835 $\pm$ 0.011 & 5 \\
Adversarial & deterministic & 1 & EO gap & 0.05 & 0.004927 $\pm$ 0.0031 & Accuracy & 0.7461 $\pm$ 0.0054 & 5 \\
FairDummy & deterministic & 2.621e+05 & EO gap & 0.05 & 0.01453 $\pm$ 0.01 & Accuracy & 0.3988 $\pm$ 0.22 & 5 \\
FairDummy & deterministic & 8 & EO gap & 0.05 & 0.02287 $\pm$ 0.011 & AUROC & 0.8177 $\pm$ 0.0028 & 5 \\
InfoFair & deterministic & 0.8 & EO gap & 0.05 & 0.03269 $\pm$ 0.023 & AUROC & 0.814 $\pm$ 0.0048 & 5 \\
FR-Train & deterministic & 0.3162 & EO gap & 0.05 & 0.03425 $\pm$ 0.026 & AUROC & 0.8055 $\pm$ 0.0032 & 5 \\
CMI & deterministic & 0.2 & EO gap & 0.05 & 0.03149 $\pm$ 0.034 & AUROC & 0.8028 $\pm$ 0.0047 & 5 \\
FairDRO & deterministic & 0.1 & EO gap & 0.05 & 0.03358 $\pm$ 0.034 & AUROC & 0.8027 $\pm$ 0.0049 & 5 \\
ERM-Deep (X) & deterministic & -- & EO gap & 0.05 & 0.03604 $\pm$ 0.018 & AUROC & 0.7928 $\pm$ 0.0043 & 5 \\
ExpGrad & deterministic & -- & EO gap & 0.05 & 0.03228 $\pm$ 0.022 & AUROC & 0.7928 $\pm$ 0.0043 & 5 \\
Adversarial & deterministic & 3.162 & EO gap & 0.05 & 0.003908 $\pm$ 0.0021 & AUROC & 0.5994 $\pm$ 0.014 & 5 \\
\bottomrule
\end{tabular}

}
\end{table*}

\begin{table*}[t]
\centering
\caption{
\textbf{COMPAS compact operating-point summary.}
For each method, policy, constraint threshold, and target metric, we report the best mean target value among trained operating points satisfying the corresponding EO-gap constraint.
}
\label{tab:compas_compact_operating_points}
\resizebox{\textwidth}{!}{
\begin{tabular}{lllrlrlrl}
\toprule
Method & Policy & $\lambda$ & Constraint & Threshold & Constraint value & Metric & Target value & Folds \\
\midrule
CMI & randomized & 0.6 & EO gap & 0.01 & 0.009576 $\pm$ 0.0055 & Accuracy & 0.5455 $\pm$ 0.0055 & 5 \\
FairDummy & randomized & 1.074e+09 & EO gap & 0.01 & 0.009044 $\pm$ 0.0081 & Accuracy & 0.5293 $\pm$ 0.0039 & 5 \\
Adversarial & randomized & 1 & EO gap & 0.01 & 4.172e-08 $\pm$ 2.7e-08 & Accuracy & 0.4999 $\pm$ 8e-05 & 5 \\
CMI & randomized & 0.7 & EO gap & 0.01 & 0.00714 $\pm$ 0.0044 & AUROC & 0.6966 $\pm$ 0.0035 & 5 \\
FairDummy & randomized & 1.074e+09 & EO gap & 0.01 & 0.009044 $\pm$ 0.0081 & AUROC & 0.6881 $\pm$ 0.012 & 5 \\
Adversarial & randomized & 1 & EO gap & 0.01 & 4.172e-08 $\pm$ 2.7e-08 & AUROC & 0.5 $\pm$ 0 & 5 \\
FairDRO & randomized & 0.5 & EO gap & 0.02 & 0.01965 $\pm$ 0.011 & Accuracy & 0.5609 $\pm$ 0.0035 & 5 \\
CMI & randomized & 0.3 & EO gap & 0.02 & 0.01904 $\pm$ 0.012 & Accuracy & 0.5598 $\pm$ 0.0038 & 5 \\
FairDummy & randomized & 4096 & EO gap & 0.02 & 0.01399 $\pm$ 0.0079 & Accuracy & 0.5532 $\pm$ 0.0035 & 5 \\
InfoFair & randomized & 0.9 & EO gap & 0.02 & 0.01339 $\pm$ 0.015 & Accuracy & 0.5334 $\pm$ 0.008 & 5 \\
Adversarial & randomized & 1 & EO gap & 0.02 & 4.172e-08 $\pm$ 2.7e-08 & Accuracy & 0.4999 $\pm$ 8e-05 & 5 \\
FairDummy & randomized & 4096 & EO gap & 0.02 & 0.01399 $\pm$ 0.0079 & AUROC & 0.6996 $\pm$ 0.0035 & 5 \\
CMI & randomized & 0.7 & EO gap & 0.02 & 0.00714 $\pm$ 0.0044 & AUROC & 0.6966 $\pm$ 0.0035 & 5 \\
InfoFair & randomized & 0.9 & EO gap & 0.02 & 0.01339 $\pm$ 0.015 & AUROC & 0.6955 $\pm$ 0.0055 & 5 \\
FairDRO & randomized & 0.9 & EO gap & 0.02 & 0.0142 $\pm$ 0.0025 & AUROC & 0.6942 $\pm$ 0.0038 & 5 \\
Adversarial & randomized & 1 & EO gap & 0.02 & 4.172e-08 $\pm$ 2.7e-08 & AUROC & 0.5 $\pm$ 0 & 5 \\
FR-Train & randomized & 0.2 & EO gap & 0.05 & 0.04463 $\pm$ 0.021 & Accuracy & 0.5666 $\pm$ 0.0016 & 5 \\
Adversarial & randomized & 0.2 & EO gap & 0.05 & 0.03988 $\pm$ 0.021 & Accuracy & 0.5664 $\pm$ 0.0016 & 5 \\
CMI & randomized & 0.1 & EO gap & 0.05 & 0.04654 $\pm$ 0.022 & Accuracy & 0.5654 $\pm$ 0.0029 & 5 \\
FairDRO & randomized & 0.2 & EO gap & 0.05 & 0.04462 $\pm$ 0.017 & Accuracy & 0.5652 $\pm$ 0.0036 & 5 \\
FairDummy & randomized & 512 & EO gap & 0.05 & 0.03927 $\pm$ 0.017 & Accuracy & 0.5643 $\pm$ 0.0036 & 5 \\
InfoFair & randomized & 0.7 & EO gap & 0.05 & 0.02932 $\pm$ 0.017 & Accuracy & 0.56 $\pm$ 0.0053 & 5 \\
Adversarial & randomized & 0.6 & EO gap & 0.05 & 0.03976 $\pm$ 0.02 & AUROC & 0.703 $\pm$ 0.0063 & 5 \\
FairDummy & randomized & 512 & EO gap & 0.05 & 0.03927 $\pm$ 0.017 & AUROC & 0.7011 $\pm$ 0.0038 & 5 \\
InfoFair & randomized & 0.8 & EO gap & 0.05 & 0.02847 $\pm$ 0.02 & AUROC & 0.6985 $\pm$ 0.0069 & 5 \\
CMI & randomized & 0.7 & EO gap & 0.05 & 0.00714 $\pm$ 0.0044 & AUROC & 0.6966 $\pm$ 0.0035 & 5 \\
FR-Train & randomized & 0.2 & EO gap & 0.05 & 0.04463 $\pm$ 0.021 & AUROC & 0.6959 $\pm$ 0.0036 & 5 \\
FairDRO & randomized & 0.2 & EO gap & 0.05 & 0.04462 $\pm$ 0.017 & AUROC & 0.6951 $\pm$ 0.0046 & 5 \\
FR-Train & deterministic & 1 & EO gap & 0.01 & 0 $\pm$ 0 & Accuracy & 0.5001 $\pm$ 0.0036 & 5 \\
Adversarial & deterministic & 1 & EO gap & 0.01 & 0 $\pm$ 0 & Accuracy & 0.4999 $\pm$ 0.0036 & 5 \\
InfoFair & deterministic & 1 & EO gap & 0.01 & 0.002394 $\pm$ 0.0054 & Accuracy & 0.4988 $\pm$ 0.0022 & 5 \\
Adversarial & deterministic & 1 & EO gap & 0.01 & 0 $\pm$ 0 & AUROC & 0.5 $\pm$ 0 & 5 \\
InfoFair & deterministic & 1 & EO gap & 0.01 & 0.002394 $\pm$ 0.0054 & AUROC & 0.4248 $\pm$ 0.072 & 5 \\
FairDummy & deterministic & 1.074e+09 & EO gap & 0.02 & 0.01374 $\pm$ 0.01 & Accuracy & 0.5231 $\pm$ 0.028 & 5 \\
FR-Train & deterministic & 1 & EO gap & 0.02 & 0 $\pm$ 0 & Accuracy & 0.5001 $\pm$ 0.0036 & 5 \\
Adversarial & deterministic & 1 & EO gap & 0.02 & 0 $\pm$ 0 & Accuracy & 0.4999 $\pm$ 0.0036 & 5 \\
InfoFair & deterministic & 1 & EO gap & 0.02 & 0.002394 $\pm$ 0.0054 & Accuracy & 0.4988 $\pm$ 0.0022 & 5 \\
FairDummy & deterministic & 1.678e+07 & EO gap & 0.02 & 0.01073 $\pm$ 0.012 & AUROC & 0.6884 $\pm$ 0.013 & 5 \\
Adversarial & deterministic & 1 & EO gap & 0.02 & 0 $\pm$ 0 & AUROC & 0.5 $\pm$ 0 & 5 \\
InfoFair & deterministic & 1 & EO gap & 0.02 & 0.002394 $\pm$ 0.0054 & AUROC & 0.4248 $\pm$ 0.072 & 5 \\
FairDummy & deterministic & 512 & EO gap & 0.05 & 0.02826 $\pm$ 0.017 & Accuracy & 0.5787 $\pm$ 0.071 & 5 \\
FR-Train & deterministic & 0.9 & EO gap & 0.05 & 0.0474 $\pm$ 0.059 & Accuracy & 0.558 $\pm$ 0.078 & 5 \\
CMI & deterministic & 1 & EO gap & 0.05 & 0.02069 $\pm$ 0.018 & Accuracy & 0.5018 $\pm$ 0.004 & 5 \\
Adversarial & deterministic & 1 & EO gap & 0.05 & 0 $\pm$ 0 & Accuracy & 0.4999 $\pm$ 0.0036 & 5 \\
InfoFair & deterministic & 1 & EO gap & 0.05 & 0.002394 $\pm$ 0.0054 & Accuracy & 0.4988 $\pm$ 0.0022 & 5 \\
FairDummy & deterministic & 8 & EO gap & 0.05 & 0.0234 $\pm$ 0.027 & AUROC & 0.7026 $\pm$ 0.0041 & 5 \\
FR-Train & deterministic & 0.9 & EO gap & 0.05 & 0.0474 $\pm$ 0.059 & AUROC & 0.6386 $\pm$ 0.053 & 5 \\
CMI & deterministic & 1 & EO gap & 0.05 & 0.02069 $\pm$ 0.018 & AUROC & 0.5247 $\pm$ 0.071 & 5 \\
Adversarial & deterministic & 1 & EO gap & 0.05 & 0 $\pm$ 0 & AUROC & 0.5 $\pm$ 0 & 5 \\
InfoFair & deterministic & 1 & EO gap & 0.05 & 0.002394 $\pm$ 0.0054 & AUROC & 0.4248 $\pm$ 0.072 & 5 \\
\bottomrule
\end{tabular}

}
\end{table*}

\begin{table*}[t]
\centering
\caption{
\textbf{Bank compact operating-point summary.}
We report randomized-policy operating points only. For each method, EO-gap threshold, and target metric, we report the best mean target value among trained operating points satisfying the corresponding EO-gap constraint.
}
\label{tab:bank_compact_operating_points}
\resizebox{\textwidth}{!}{
\begin{tabular}{lllrlrlrl}
\toprule
Method & Policy & $\lambda$ & Constraint & Threshold & Constraint value & Metric & Target value & Folds \\
\midrule
Adversarial & randomized & 31.62 & EO gap & 0.01 & 0.0005504 $\pm$ 0.00028 & Accuracy & 0.8809 $\pm$ 0.00043 & 5 \\
FairDummy & randomized & 1.342e+08 & EO gap & 0.01 & 0.004024 $\pm$ 0.003 & Accuracy & 0.8644 $\pm$ 0.0092 & 5 \\
CMI & randomized & 0.8 & EO gap & 0.01 & 0.006713 $\pm$ 0.0034 & Accuracy & 0.8303 $\pm$ 0.011 & 5 \\
FR-Train & randomized & 10 & EO gap & 0.01 & 0.00246 $\pm$ 0.0055 & Accuracy & 0.7168 $\pm$ 0.34 & 5 \\
CMI & randomized & 0.8 & EO gap & 0.01 & 0.006713 $\pm$ 0.0034 & AUROC & 0.8798 $\pm$ 0.0054 & 5 \\
FairDummy & randomized & 1.678e+07 & EO gap & 0.01 & 0.004767 $\pm$ 0.0016 & AUROC & 0.623 $\pm$ 0.099 & 5 \\
Adversarial & randomized & 100 & EO gap & 0.01 & 0.002735 $\pm$ 0.0049 & AUROC & 0.5181 $\pm$ 0.078 & 5 \\
FR-Train & randomized & 31.62 & EO gap & 0.01 & 0.003749 $\pm$ 0.0057 & AUROC & 0.5072 $\pm$ 0.01 & 5 \\
Adversarial & randomized & 31.62 & EO gap & 0.02 & 0.0005504 $\pm$ 0.00028 & Accuracy & 0.8809 $\pm$ 0.00043 & 5 \\
CMI & randomized & 0.1 & EO gap & 0.02 & 0.01743 $\pm$ 0.0028 & Accuracy & 0.8645 $\pm$ 0.0032 & 5 \\
FairDummy & randomized & 1.342e+08 & EO gap & 0.02 & 0.004024 $\pm$ 0.003 & Accuracy & 0.8644 $\pm$ 0.0092 & 5 \\
FR-Train & randomized & 0.3162 & EO gap & 0.02 & 0.01397 $\pm$ 0.0032 & Accuracy & 0.8622 $\pm$ 0.014 & 5 \\
InfoFair & randomized & 0.7 & EO gap & 0.02 & 0.01789 $\pm$ 0.0044 & Accuracy & 0.834 $\pm$ 0.0016 & 5 \\
CMI & randomized & 0.1 & EO gap & 0.02 & 0.01743 $\pm$ 0.0028 & AUROC & 0.914 $\pm$ 0.0034 & 5 \\
FR-Train & randomized & 0.3162 & EO gap & 0.02 & 0.01397 $\pm$ 0.0032 & AUROC & 0.9086 $\pm$ 0.0048 & 5 \\
FairDummy & randomized & 4096 & EO gap & 0.02 & 0.01878 $\pm$ 0.0069 & AUROC & 0.9084 $\pm$ 0.011 & 5 \\
Adversarial & randomized & 0.3162 & EO gap & 0.02 & 0.01478 $\pm$ 0.0063 & AUROC & 0.9007 $\pm$ 0.012 & 5 \\
InfoFair & randomized & 0.7 & EO gap & 0.02 & 0.01789 $\pm$ 0.0044 & AUROC & 0.8435 $\pm$ 0.027 & 5 \\
\bottomrule
\end{tabular}

}
\end{table*}

\begin{table*}[t]
\centering
\caption{
\textbf{CelebA compact operating-point summary.}
For each method, policy, constraint threshold, and target metric, we report the best mean target value among trained operating points satisfying the corresponding EO-gap constraint.
}
\label{tab:celeba_compact_operating_points}
\resizebox{\textwidth}{!}{
\begin{tabular}{lllrlrlrl}
\toprule
Method & Policy & $\lambda$ & Constraint & Threshold & Constraint value & Metric & Target value & Folds \\
\midrule
CMI & randomized & 0.9 & EO gap & 0.01 & 0.004135 $\pm$ 0.002 & Accuracy & 0.5446 $\pm$ 0.01 & 5 \\
InfoFair & randomized & 1 & EO gap & 0.01 & 3.771e-42 $\pm$ 8.4e-42 & Accuracy & 0.5036 $\pm$ 0.02 & 5 \\
CMI & randomized & 0.9 & EO gap & 0.01 & 0.004135 $\pm$ 0.002 & AUROC & 0.8942 $\pm$ 0.0051 & 5 \\
InfoFair & randomized & 1 & EO gap & 0.01 & 3.771e-42 $\pm$ 8.4e-42 & AUROC & 0.5 $\pm$ 7.9e-07 & 5 \\
CMI & randomized & 0.8 & EO gap & 0.02 & 0.01955 $\pm$ 0.008 & Accuracy & 0.7102 $\pm$ 0.015 & 5 \\
InfoFair & randomized & 1 & EO gap & 0.02 & 3.771e-42 $\pm$ 8.4e-42 & Accuracy & 0.5036 $\pm$ 0.02 & 5 \\
CMI & randomized & 0.8 & EO gap & 0.02 & 0.01955 $\pm$ 0.008 & AUROC & 0.9091 $\pm$ 0.0022 & 5 \\
InfoFair & randomized & 1 & EO gap & 0.02 & 3.771e-42 $\pm$ 8.4e-42 & AUROC & 0.5 $\pm$ 7.9e-07 & 5 \\
CMI & randomized & 0.5 & EO gap & 0.05 & 0.04533 $\pm$ 0.022 & Accuracy & 0.7736 $\pm$ 0.0032 & 5 \\
FairDRO & randomized & 0.9 & EO gap & 0.05 & 0.04862 $\pm$ 0.011 & Accuracy & 0.7617 $\pm$ 0.0029 & 5 \\
InfoFair & randomized & 1 & EO gap & 0.05 & 3.771e-42 $\pm$ 8.4e-42 & Accuracy & 0.5036 $\pm$ 0.02 & 5 \\
CMI & randomized & 0.5 & EO gap & 0.05 & 0.04533 $\pm$ 0.022 & AUROC & 0.9201 $\pm$ 0.0019 & 5 \\
FairDRO & randomized & 0.9 & EO gap & 0.05 & 0.04862 $\pm$ 0.011 & AUROC & 0.9194 $\pm$ 0.0012 & 5 \\
InfoFair & randomized & 1 & EO gap & 0.05 & 3.771e-42 $\pm$ 8.4e-42 & AUROC & 0.5 $\pm$ 7.9e-07 & 5 \\
Adversarial & deterministic & 9 & EO gap & 0.01 & 0.007608 $\pm$ 0.0063 & Accuracy & 0.6433 $\pm$ 0.028 & 5 \\
FR-Train & deterministic & 1e+08 & EO gap & 0.01 & 0.001792 $\pm$ 0.0024 & Accuracy & 0.5258 $\pm$ 0.024 & 5 \\
FairDummy & deterministic & 512 & EO gap & 0.01 & 0.002576 $\pm$ 0.0014 & Accuracy & 0.5236 $\pm$ 0.019 & 5 \\
CMI & deterministic & 1 & EO gap & 0.01 & 0.0002272 $\pm$ 0.00027 & Accuracy & 0.4821 $\pm$ 5.9e-05 & 5 \\
InfoFair & deterministic & 1 & EO gap & 0.01 & 0 $\pm$ 0 & Accuracy & 0.4821 $\pm$ 5.7e-06 & 5 \\
FairDummy & deterministic & 4096 & EO gap & 0.01 & 0.00448 $\pm$ 0.003 & AUROC & 0.9221 $\pm$ 0.0017 & 5 \\
FR-Train & deterministic & 1e+08 & EO gap & 0.01 & 0.001792 $\pm$ 0.0024 & AUROC & 0.9212 $\pm$ 0.0012 & 5 \\
Adversarial & deterministic & 9 & EO gap & 0.01 & 0.007608 $\pm$ 0.0063 & AUROC & 0.921 $\pm$ 0.00098 & 5 \\
CMI & deterministic & 1 & EO gap & 0.01 & 0.0002272 $\pm$ 0.00027 & AUROC & 0.5 $\pm$ 6.1e-05 & 5 \\
InfoFair & deterministic & 1 & EO gap & 0.01 & 0 $\pm$ 0 & AUROC & 0.5 $\pm$ 7.9e-07 & 5 \\
Adversarial & deterministic & 1e+08 & EO gap & 0.02 & 0.013 $\pm$ 0.0071 & Accuracy & 0.6637 $\pm$ 0.03 & 5 \\
FR-Train & deterministic & 9 & EO gap & 0.02 & 0.01238 $\pm$ 0.0097 & Accuracy & 0.6584 $\pm$ 0.12 & 5 \\
FairDummy & deterministic & 8 & EO gap & 0.02 & 0.01695 $\pm$ 0.0096 & Accuracy & 0.6507 $\pm$ 0.092 & 5 \\
CMI & deterministic & 1 & EO gap & 0.02 & 0.0002272 $\pm$ 0.00027 & Accuracy & 0.4821 $\pm$ 5.9e-05 & 5 \\
InfoFair & deterministic & 1 & EO gap & 0.02 & 0 $\pm$ 0 & Accuracy & 0.4821 $\pm$ 5.7e-06 & 5 \\
FairDummy & deterministic & 4096 & EO gap & 0.02 & 0.00448 $\pm$ 0.003 & AUROC & 0.9221 $\pm$ 0.0017 & 5 \\
Adversarial & deterministic & 1e+08 & EO gap & 0.02 & 0.013 $\pm$ 0.0071 & AUROC & 0.922 $\pm$ 0.0012 & 5 \\
FR-Train & deterministic & 1e+08 & EO gap & 0.02 & 0.001792 $\pm$ 0.0024 & AUROC & 0.9212 $\pm$ 0.0012 & 5 \\
CMI & deterministic & 1 & EO gap & 0.02 & 0.0002272 $\pm$ 0.00027 & AUROC & 0.5 $\pm$ 6.1e-05 & 5 \\
InfoFair & deterministic & 1 & EO gap & 0.02 & 0 $\pm$ 0 & AUROC & 0.5 $\pm$ 7.9e-07 & 5 \\
ExpGrad & deterministic & -- & EO gap & 0.05 & 0.04641 $\pm$ 0.014 & Accuracy & 0.843 $\pm$ 0.0021 & 5 \\
FairDRO & deterministic & 0.7 & EO gap & 0.05 & 0.04355 $\pm$ 0.014 & Accuracy & 0.8396 $\pm$ 0.0027 & 5 \\
CMI & deterministic & 0.7 & EO gap & 0.05 & 0.04134 $\pm$ 0.007 & Accuracy & 0.8347 $\pm$ 0.0023 & 5 \\
FR-Train & deterministic & 2.333 & EO gap & 0.05 & 0.04868 $\pm$ 0.012 & Accuracy & 0.8081 $\pm$ 0.014 & 5 \\
Adversarial & deterministic & 1.5 & EO gap & 0.05 & 0.0447 $\pm$ 0.01 & Accuracy & 0.8056 $\pm$ 0.0073 & 5 \\
InfoFair & deterministic & 0.9 & EO gap & 0.05 & 0.04636 $\pm$ 0.022 & Accuracy & 0.7912 $\pm$ 0.02 & 5 \\
FairDummy & deterministic & 8 & EO gap & 0.05 & 0.01695 $\pm$ 0.0096 & Accuracy & 0.6507 $\pm$ 0.092 & 5 \\
Adversarial & deterministic & 2.333 & EO gap & 0.05 & 0.03128 $\pm$ 0.0019 & AUROC & 0.9223 $\pm$ 0.00083 & 5 \\
FR-Train & deterministic & 2.333 & EO gap & 0.05 & 0.04868 $\pm$ 0.012 & AUROC & 0.9221 $\pm$ 0.00077 & 5 \\
FairDummy & deterministic & 4096 & EO gap & 0.05 & 0.00448 $\pm$ 0.003 & AUROC & 0.9221 $\pm$ 0.0017 & 5 \\
ExpGrad & deterministic & -- & EO gap & 0.05 & 0.04641 $\pm$ 0.014 & AUROC & 0.9216 $\pm$ 0.001 & 5 \\
FairDRO & deterministic & 0.7 & EO gap & 0.05 & 0.04355 $\pm$ 0.014 & AUROC & 0.9207 $\pm$ 0.0017 & 5 \\
CMI & deterministic & 0.7 & EO gap & 0.05 & 0.04134 $\pm$ 0.007 & AUROC & 0.9148 $\pm$ 0.0018 & 5 \\
InfoFair & deterministic & 0.9 & EO gap & 0.05 & 0.04636 $\pm$ 0.022 & AUROC & 0.8597 $\pm$ 0.041 & 5 \\
\bottomrule
\end{tabular}

}
\end{table*}

\begin{table*}[t]
\centering
\caption{
\textbf{ACSOccupation compact operating-point summary.}
We report randomized-policy operating points only. For each method, CMI threshold, and target metric, we report the best mean target value among trained operating points satisfying the corresponding CMI constraint.
}
\label{tab:acs_compact_operating_points}
\resizebox{\textwidth}{!}{
\begin{tabular}{lllrlrlrl}
\toprule
Method & Policy & $\lambda$ & Constraint & Threshold & Constraint value & Metric & Target value & Folds \\
\midrule
CMI & randomized & 0.15 & $I(\hat Y;Z\mid Y)$ & 0.005 & 0.004346 $\pm$ 0.00025 & Accuracy & 0.08734 $\pm$ 0.00069 & 5 \\
FR-Train & randomized & 0.05 & $I(\hat Y;Z\mid Y)$ & 0.005 & 0.002536 $\pm$ 0.00055 & Accuracy & 0.08654 $\pm$ 0.0049 & 5 \\
InfoFair & randomized & 0.65 & $I(\hat Y;Z\mid Y)$ & 0.005 & 0.004325 $\pm$ 0.0013 & Accuracy & 0.08229 $\pm$ 0.00098 & 5 \\
Adversarial & randomized & 0.4 & $I(\hat Y;Z\mid Y)$ & 0.005 & 0.003904 $\pm$ 0.0069 & Accuracy & 0.05458 $\pm$ 0.0031 & 5 \\
Uniform random & randomized & -- & $I(\hat Y;Z\mid Y)$ & 0.005 & -6.19e-17 $\pm$ 1.1e-16 & Accuracy & 0.05 $\pm$ 4.8e-17 & 5 \\
CMI & randomized & 0.4 & $I(\hat Y;Z\mid Y)$ & 0.005 & 0.000983 $\pm$ 0.00014 & AUROC & 0.5999 $\pm$ 0.018 & 5 \\
FR-Train & randomized & 0.3 & $I(\hat Y;Z\mid Y)$ & 0.005 & 0.003222 $\pm$ 0.00029 & AUROC & 0.5902 $\pm$ 0.008 & 5 \\
InfoFair & randomized & 0.65 & $I(\hat Y;Z\mid Y)$ & 0.005 & 0.004325 $\pm$ 0.0013 & AUROC & 0.5662 $\pm$ 0.02 & 5 \\
Adversarial & randomized & 0.45 & $I(\hat Y;Z\mid Y)$ & 0.005 & 0.001113 $\pm$ 0.0012 & AUROC & 0.5263 $\pm$ 0.014 & 5 \\
Uniform random & randomized & -- & $I(\hat Y;Z\mid Y)$ & 0.005 & -6.19e-17 $\pm$ 1.1e-16 & AUROC & 0.5 $\pm$ 0 & 5 \\
CMI & randomized & 0.05 & $I(\hat Y;Z\mid Y)$ & 0.01 & 0.009051 $\pm$ 0.0012 & Accuracy & 0.08915 $\pm$ 0.00097 & 5 \\
FR-Train & randomized & 0.95 & $I(\hat Y;Z\mid Y)$ & 0.01 & 0.006884 $\pm$ 0.0051 & Accuracy & 0.08869 $\pm$ 0.0071 & 5 \\
InfoFair & randomized & 0.35 & $I(\hat Y;Z\mid Y)$ & 0.01 & 0.009316 $\pm$ 0.00072 & Accuracy & 0.0878 $\pm$ 0.0019 & 5 \\
Adversarial & randomized & 0.05 & $I(\hat Y;Z\mid Y)$ & 0.01 & 0.005646 $\pm$ 0.00064 & Accuracy & 0.08439 $\pm$ 0.0044 & 5 \\
FairDRO & randomized & 0.2 & $I(\hat Y;Z\mid Y)$ & 0.01 & 0.009551 $\pm$ 0.00037 & Accuracy & 0.07846 $\pm$ 0.00076 & 5 \\
Uniform random & randomized & -- & $I(\hat Y;Z\mid Y)$ & 0.01 & -6.19e-17 $\pm$ 1.1e-16 & Accuracy & 0.05 $\pm$ 4.8e-17 & 5 \\
InfoFair & randomized & 0.35 & $I(\hat Y;Z\mid Y)$ & 0.01 & 0.009316 $\pm$ 0.00072 & AUROC & 0.6034 $\pm$ 0.023 & 5 \\
CMI & randomized & 0.4 & $I(\hat Y;Z\mid Y)$ & 0.01 & 0.000983 $\pm$ 0.00014 & AUROC & 0.5999 $\pm$ 0.018 & 5 \\
FairDRO & randomized & 0.25 & $I(\hat Y;Z\mid Y)$ & 0.01 & 0.009227 $\pm$ 0.00033 & AUROC & 0.5956 $\pm$ 0.022 & 5 \\
FR-Train & randomized & 0.3 & $I(\hat Y;Z\mid Y)$ & 0.01 & 0.003222 $\pm$ 0.00029 & AUROC & 0.5902 $\pm$ 0.008 & 5 \\
Adversarial & randomized & 0.05 & $I(\hat Y;Z\mid Y)$ & 0.01 & 0.005646 $\pm$ 0.00064 & AUROC & 0.5862 $\pm$ 0.017 & 5 \\
Uniform random & randomized & -- & $I(\hat Y;Z\mid Y)$ & 0.01 & -6.19e-17 $\pm$ 1.1e-16 & AUROC & 0.5 $\pm$ 0 & 5 \\
\bottomrule
\end{tabular}

}
\end{table*}

\subsection{Large-Scale ACSOccupation Preprocessing Variant}
\label{append:acs_large}

The main ACSOccupation experiment uses a conservative complete-case preprocessing rule: rows with missing or structurally unavailable ACS covariate values are removed before one-hot encoding. This yields a clean multi-class/multi-group benchmark with no retained feature missingness, but it also substantially reduces the sample size because many ACS covariates contain structurally unavailable entries. In the processed main benchmark, this leaves \(2{,}576\) samples.

As an additional robustness check, we evaluate a large-scale ACSOccupation preprocessing variant. In this version, rows are removed only when the target occupation code, sensitive attribute, or age eligibility variable is invalid. Missing or structurally unavailable covariate values are retained as explicit categorical levels before one-hot encoding. This produces a much larger benchmark with \(575{,}989\) samples, \(55\) input features, \(20\) occupation classes, and \(9\) observed RAC1P groups.

Because retaining missing or structurally unavailable values as categorical levels may introduce additional distributional heterogeneity, we report this large-scale experiment as an appendix robustness check rather than as the main ACS benchmark. The goal is to test how the methods behave when the ACS task is scaled up substantially under a less aggressive preprocessing rule.

\begin{figure*}[t]
    \centering
    \setlength{\tabcolsep}{1pt}

    \subfloat[Information plane.]{
        \includegraphics[width=0.49\textwidth]{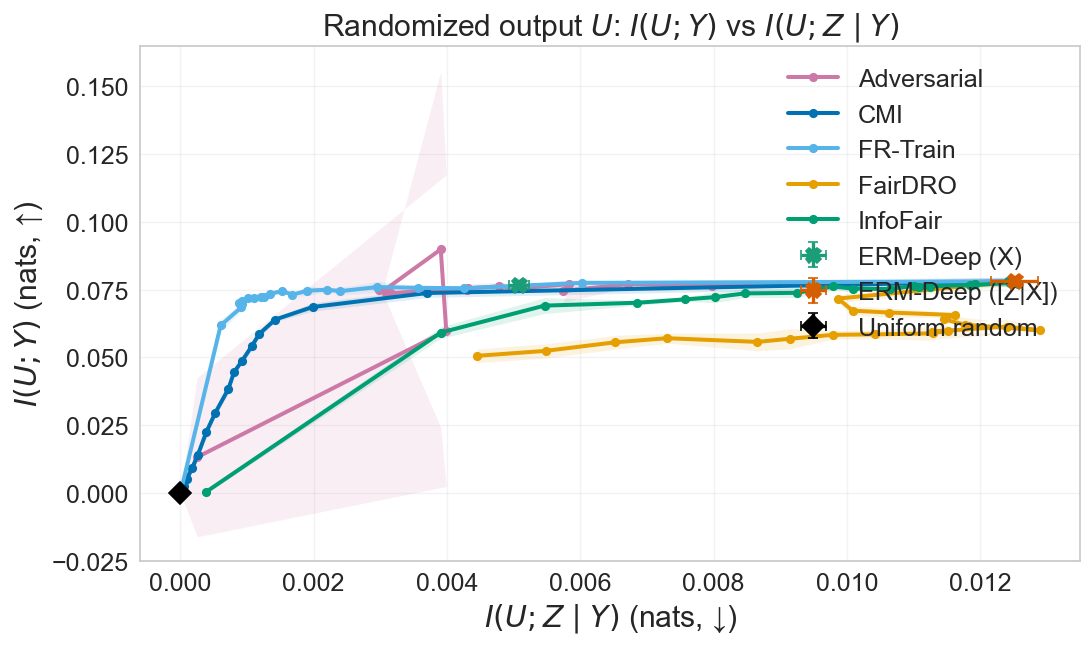}
    }
    \subfloat[Macro-AUROC transfer.]{
        \includegraphics[width=0.49\textwidth]{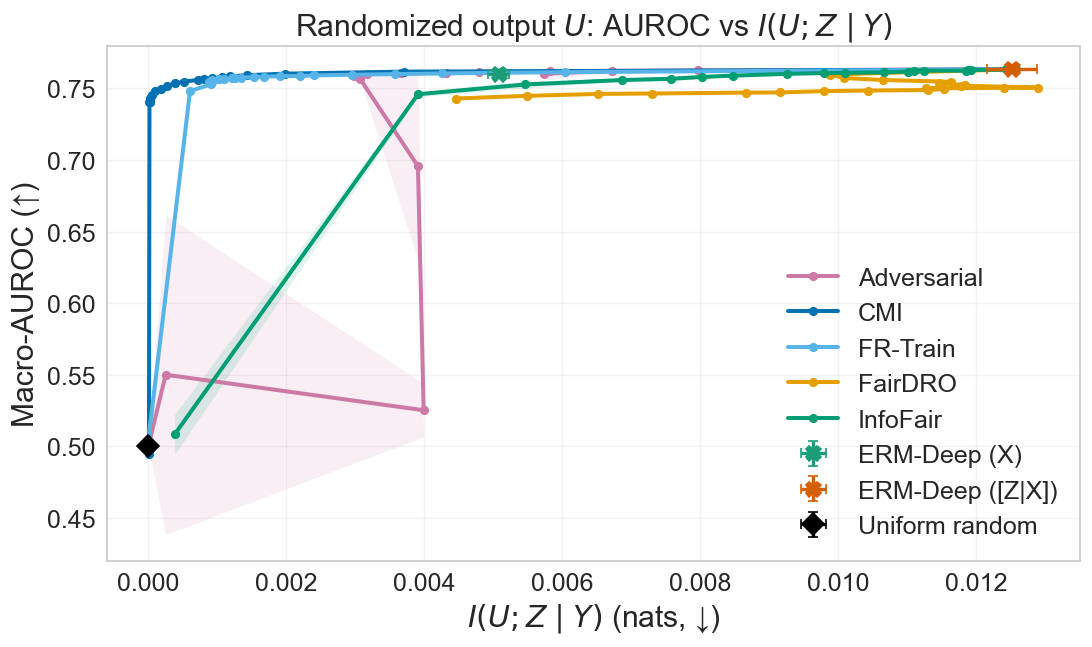}
    }

    \caption{
    \textbf{Large-scale ACSOccupation preprocessing variant.}
    We report randomized-policy results for the large ACSOccupation variant in which missing or structurally unavailable covariate values are retained as explicit categorical levels. 
    \textbf{Left:} Information-plane frontier, plotting \(I(\hat{Y};Y)\) against \(I(\hat{Y};Z\mid Y)\).
    \textbf{Right:} Macro-AUROC as a function of \(I(\hat{Y};Z\mid Y)\).
    CMI remains competitive in the high-utility, low-violation region. FR-Train is also a strong baseline on this large-scale variant, indicating that class-conditional adversarial training can be effective when the discriminator is well aligned with separation. The main distinction is that CMI directly optimizes the plug-in CMI violation, whereas FR-Train uses an auxiliary adversarial proxy.
    }
    \label{fig:ACS_large}
    \vspace{-10pt}
\end{figure*}

\subsection{ACS Ablation Studies}
\label{append:acs_ablation}

In this appendix, we provide additional ablation studies for the ACSOccupation experiment in Section~\ref{subsubsec:acs}. The goal is to verify that the observed behavior of Normalized CMI is not an artifact of a particular batch-size choice or loss-scaling convention. Since the main ACSOccupation experiment has \(|\mathcal Y|=20\) occupation classes and \(|\mathcal Z|=9\) race groups, the empirical CMI estimator must aggregate information across \(|\mathcal Y|\cdot|\mathcal Z|=180\) conditional cells. These ablations therefore test the numerical stability of the CMI regularizer in a genuinely multi-class and multi-group setting.

\paragraph{Batch-size sensitivity.}
Figure~\ref{fig:ACS_batchsize} reports the CMI frontier under different training batch sizes.
Across batch sizes \(512\), \(1024\), \(2048\), and \(4096\), the resulting curves exhibit the same qualitative behavior: increasing the trade-off parameter moves the model along the separation-utility trade-off from lower-violation, lower-utility points toward higher-utility points with larger separation violation. The frontiers are broadly consistent across batch sizes, suggesting that the empirical CMI signal remains stable even though each minibatch must estimate dependence across many $(Y,Z)$ cells. Larger batches provide somewhat smoother estimates of the conditional distributions, while smaller batches still recover the same overall separation-utility trend. This supports the robustness of the proposed estimator in the ACSOccupation setting.

\begin{figure*}[t]
    \centering
    \includegraphics[width=0.75\textwidth]{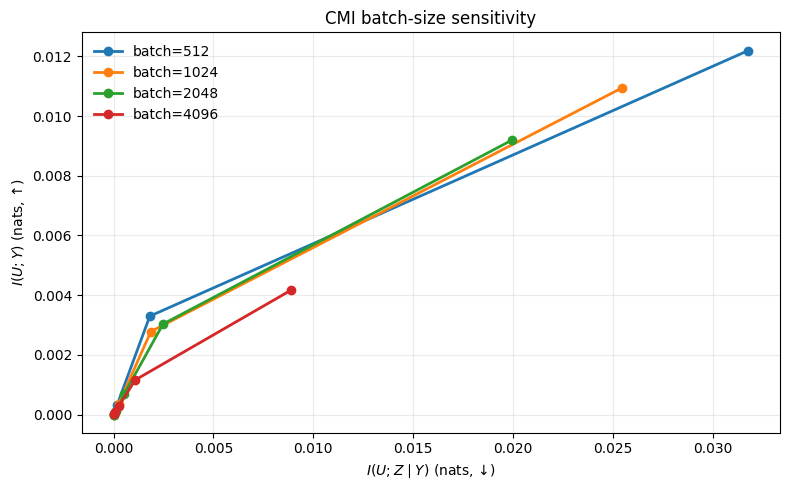}
    \caption{
    \textbf{Batch-size sensitivity of CMI on ACSOccupation.}
    We compare the information-plane frontiers obtained by training Normalized CMI with batch sizes $512$, $1024$, $2048$, and $4096$. The curves show consistent separation-utility behavior across batch sizes, indicating that the empirical CMI estimator is stable in the multi-class/multi-group ACS setting. Larger batches produce slightly smoother trajectories, while smaller batches still recover the same qualitative Pareto trend.
    }
    \label{fig:ACS_batchsize}
    \vspace{-10pt}
\end{figure*}

\paragraph{Effect of gradient normalization.}
Figure~\ref{fig:ACS_gradnorm} compares the proposed gradient-normalized CMI objective with a raw, unnormalized CMI penalty. The left panel shows the information-plane frontier, and the right panel shows the corresponding transfer to macro-AUROC. Gradient normalization substantially improves the usable range of the Pareto frontier. In the information plane, CMI with gradient normalization reaches much higher utility $I(\hat Y;Y)$ while maintaining a smooth monotone increase in separation violation. By contrast, the raw CMI penalty remains close to the origin and produces a compressed frontier, indicating that the unnormalized CMI term is poorly scaled relative to the prediction loss.

The macro-AUROC plot confirms the same conclusion in deployment-oriented metrics. With gradient normalization, the method rapidly reaches strong macro-AUROC while keeping $I(\hat Y;Z\mid Y)$ small, and then traces a stable high-utility frontier. The raw CMI objective achieves lower macro-AUROC and covers a much narrower part of the frontier. This ablation supports the design choice in Section~\ref{sec:algorithm}: normalizing the prediction and CMI gradients balances the two objectives and prevents the regularizer from being either numerically dominated by, or numerically dominating, the supervised loss.

\begin{figure*}[t]
    \centering
    \setlength{\tabcolsep}{1pt}
    
    \subfloat[Information plane.]{
        \includegraphics[width=0.49\textwidth]{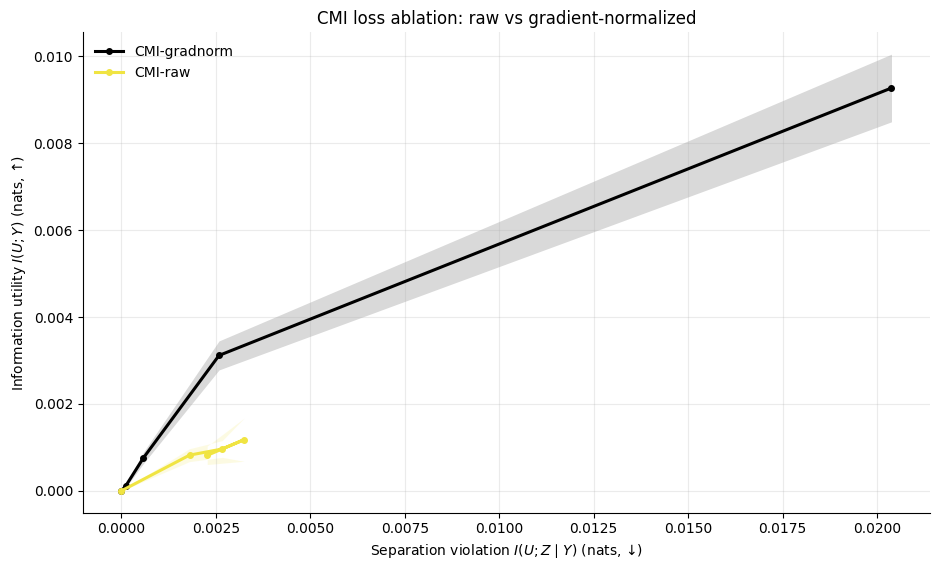}
    }
    \subfloat[Transfer to macro-AUROC.]{
        \includegraphics[width=0.49\textwidth]{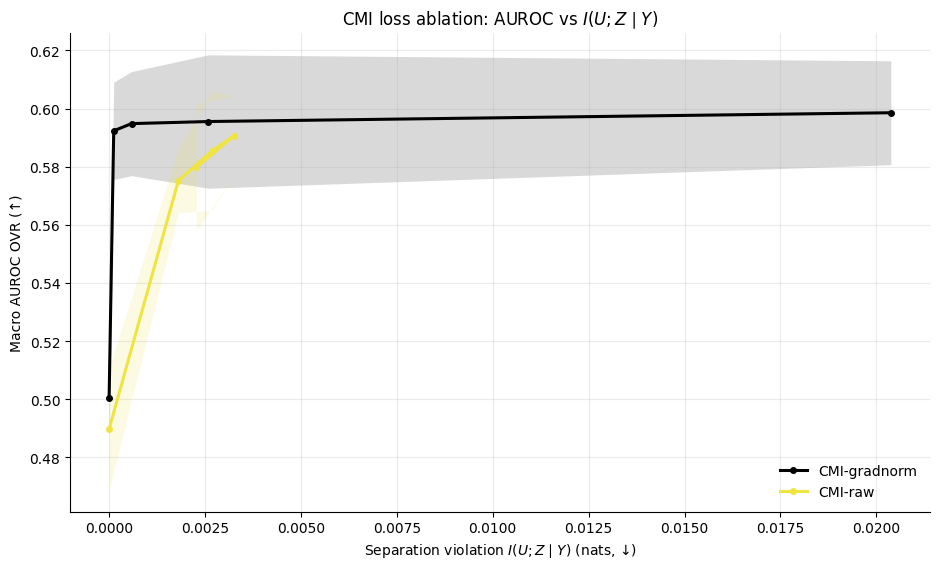}
    }

    \caption{
    \textbf{Ablation of gradient normalization on ACSOccupation.}
    We compare the proposed gradient-normalized CMI objective against a raw, unnormalized CMI penalty. 
    \textbf{Left:} Gradient normalization produces a much broader and smoother information-plane frontier, reaching higher utility $I(\hat Y;Y)$ while controlling $I(\hat Y;Z\mid Y)$.
    \textbf{Right:} The same improvement transfers to macro-AUROC: the gradient-normalized objective achieves stronger ranking performance in the low-violation regime and covers a larger useful range of the frontier. These results show that gradient normalization is important for stable optimization in the multi-class/multi-group setting.
    }
    \label{fig:ACS_gradnorm}
    \vspace{-10pt}
\end{figure*}

Together, these ablations support the robustness of the ACSOccupation results reported in Figure~\ref{fig:ACS_combined}. The batch-size study shows that the empirical CMI estimator remains stable across different minibatch resolutions, while the gradient-normalization study shows that the proposed normalized objective is essential for obtaining a smooth and useful Pareto traversal in the multi-class/multi-group regime.

\end{document}